\def\vol{\mathrm{Vol}}
\def\LP{\text{LP}}
\def\OptLP{\text{Opt-LP}}
\def\BS{\text{BS}}
\title{\huge{Learning to Incentivize  Information Acquisition: \\ Proper Scoring Rules Meet Principal-Agent Model} }
\author[1]{Siyu Chen \thanks{Email: siyu.chen.sc3226@yale.edu}}
\author[2]{Jibang Wu \thanks{ Email: wujibang@uchicago.edu}}
\author[3]{Yifan Wu \thanks{Email: yifan.wu@u.northwestern.edu}}
\author[1]{Zhuoran Yang \thanks{Email: zhuoran.yang@yale.edu}}
\affil[1]{
\small
\textit{Department of Statistics and Data Science, Yale University}}
\affil[2]{
\textit{Department of Computer Science, University of Chicago}}
\affil[3]{
\textit{Department of Computer Science, Northwestern University}}
\date{}
\begin{document}
\maketitle
\setlength\abovedisplayskip{4pt}
\setlength\belowdisplayskip{4pt}
\vspace{-30pt}
\begin{abstract}
We study the incentivized information acquisition problem, where a principal hires an agent to gather information on her behalf.
Such a problem is modeled as a Stackelberg game between the principal and the agent, where the principal announces a scoring rule that specifies the payment, and then the agent then chooses an effort level that maximizes her own profit and reports the information.
We study the online setting of such a problem from the principal's perspective, i.e., designing the optimal scoring rule by repeatedly interacting with the strategic agent. We design a provably sample efficient algorithm that tailors the UCB algorithm \citep{auer2002finite} to our model, which achieves a sublinear $  T^{2/3}$-regret after $T$ iterations. 
Our algorithm features a delicate estimation procedure for the optimal profit of the principal, and a conservative correction scheme that ensures the desired agent's actions are incentivized. Furthermore, a  key feature of our regret bound is that it is independent of the number of states of the environment.  
\end{abstract}


\section{Introduction} \label{sec:intro} 

Delegated information acquisition is a widely popular situation where one party (known as the principal) wants to acquire some information that assists decision-making, and thus hires another party (known as the agent) to gather information on its behalf.
Consider a portfolio manager who aims to learn the potential of a company in order to decide whether to invest in its stock. The manager hires an analyst, who spends some effort to conduct the research, hands in a report to the manager, and gets payment according to the report she writes.
Based on the report, the manager makes the investment decision and earns profit if the stock rises.
The level of effort the analyst puts into the research affects the quality of the information gathered, i.e., the report, and her own cost incurred in conducting the research.
As a result, a rational analyst would choose an effort level that maximizes her own profit -- the difference between the payment and the cost.
Whereas the manager also wants to maximize its own profit in expectation, which is given by the expected gain from the investment, subtracted by the payment.
Knowing that the analyst is rational, the goal of the manager is to design a payment rule that incentivizes the analyst to spend a proper amount of effort, such that the acquired information leads to the investment decision that maximizes the portfolio manager's profit.

Mathematically, such a problem can be modeled under the principal-agent model \citep{laffont2009theory}, where the principal wants to know the state $\omega $  of a stochastic environment,
 and the information acquired by the agent is a distribution $\hat \sigma $ over the state space $\Omega$, also known as the reported belief. 
The game between the principal and the agent is as follows. 
At the beginning, the state is not realized and unknown to both parties. 
The principal 
chooses a scoring rule $S$ \citep{savage1971elicitation, gneiting2007strictly} as the payment rule and  presents it to the agent. 
The agent chooses among $K$  effort levels by selecting an action $b_k\in \{b_1,\ldots, b_K\} $   at a cost $c_k$. Then $b_k$  determines
the joint distribution $p(\omega, o \given b_k)$ of the state $\omega$ and an observation $o$.  
Based on such a conditional distribution  and the realized value of $o$, the agent reports $\hat \sigma$ to the principal. 
Based on such acquired information, the principal chooses an action $a \in \mathcal{A}$. 
Finally, the state $\omega$ is revealed, the principal receives utility $u(a, \omega)$ and pays the agent $S(\hat \sigma, \omega)$. 
Here, the payment to the agent is determined by the  scoring rule $S$, which quantifies the value of the   reported belief by comparing it  with the realized state. 

More generally, our model is a general Stackelberg game with information asymmetry \citep{von1952theory, mas1995micro}, where the agent knows the distribution of the state while the principal does not. 
The principal first  announces a scoring rule $S$. Then the principal chooses an effort level $b_k$ which maximizes her own profit and reports a  belief $\hat \sigma$. That is, $b_k$ is the best response   of the agent to $S$. 
The expected profits of both the principal and the agents are functions of $S$, $b_k$, and $\hat \sigma$. 
From such a perspective, designing the  scoring rule that optimally elicits information is equivalent to finding the strong Stackelberg equilibrium of such a Stackelberg game.

In this work, we focus on the online setting of such a principal-agent model.
In particular, we aim to answer: 
\begin{center}
     \emph{From the perspective of the principal, 
     how to learn the optimal scoring rule \\ by interacting with a strategic agent?} 
\end{center}

The online setting comes with a few challenges. 
First, as the agent is strategic, the reported belief $\hat \sigma$ might be untruthful, or even arbitrary. 
Second, similar to other online learning problems such as bandits \citep{lattimore2020bandit}, we need to explore the stochastic  environment.
More importantly, in our problem, the distribution of the state is determined by the action $b_k$ of the agent, which is beyond the control of the principal. Thus, any successful learning algorithm needs to execute scoring rules that incentivize the agent to explore her action space. 
Third, both the cost $c_k$ and the distribution $p(\omega, o \given  b_k) $ are unknown to the principal. 
In other words, the profit functions of both the principal and the agent are unknown and needs to be estimated from the online data. In particular, this implies that the best response of the agent, as a function of the principal's scoring rule, is unknown. To find principal's optimal scoring rule, we need to know how to incentivize the agent to choose the most favorable $b_k$ for principal, which requires learning the best response function.

We tackle these challenges by introducing a novel algorithm, OSRL-UCB, which leverages  proper scoring rules \citep{gneiting2007strictly}, the principle of  optimism in the face of uncertainty \citep{auer2002finite, lattimore2020bandit}, and the particular constrained optimization formulation of our principal-agent model. 
In particular, to elicit truthful information, we prove a revelation principle~\cite{myerson1979incentive} that shows that it suffices to only focus on the class of proper scoring rules (Lemma \ref{lem:revelation}). 
Then we show that the principal's profit maximization problem can be written as a $K$-armed bandit problem, where the reward of each arm $h^{k,*}$ is the optimal profit  of the principal when the best response of the agent is fixed to $b_k$ (Equation \eqref{eq:bandit problem}). The value of $h^{k, *}$ is determined by the optimal objective of a constrained optimization problem --- finding a proper scoring rule $S^k$ that maximizes the principal's profit, subject to the constraint that the best response to $S^k$ is $b_k$.
Furthermore, we show that  $h^{k, *}$  can be equivalently written as the optimal value of a 
 constrained optimization linear program (LP) involves the unknown pairwise cost  differences and the distribution of the truthful belief induced by $b_k$ (Equation \eqref{eq:cV LP}). 
Following the optimism principle, we aim to construct upper confidence bounds (UCB) of each $h^{k,*}$ and incentive the agent to pick the action that maximizes the UCB. 
To this end, we construct confidence sets  for the pairwise cost differences and belief distributions  based on the online data, and obtain a UCB of $h^{k, *}$ by solving an optimistic variant of the LP by replacing the unknown quantities by elements in the corresponding confidence sets (Equation \eqref{eq:LP_k}). Furthermore, the optimal solution to such an optimistic LP, which is a scoring rule, might violate the condition that its  best response is $b_k$. To remedy this, we devise a conservative modification scheme which simultaneously guarantees desired best response and  optimism with high probability. 
Finally, we prove that the proposed algorithm achieves a $\tilde \cO  (K^2   \mathcal{C}_{\mathcal{O}}  \cdot T ^{2/3})$ sublinear   regret upper bound  after $T$ rounds of interactions. 
Here $K$ is the number of effort levels of the agent and $\mathcal{C}_{\mathcal{O}}   $ is the number of all possible observations, and $  \tilde \cO$ omits logarithmic terms.
A key feature of our regret bound is that it is independent of the number of states of the environment.  
\subsection{Related Work}\label{sec:related-work-full}

\paragraph{Optimal Scoring Rules.} This paper builds on a set of literature on optimizing scoring rules. Several papers consider the model with multiple levels of effort. \citet{neyman2021binary} design  outcome-optimal scoring rule under a binary-state model, and for integral levels of effort, where effort levels represent the number of samples drawn and are informationally ordered. \citet{HSLW-22} design  effort-optimal scoring rule, under a multi-dimensional binary-state model, and each dimension of the effort corresponds to one of the independent multi-dimensional state. 
In contrast, our paper considers a general state space, with multiple levels of effort not necessarily ordered or independent. 
\citet{li2022optimization,10.1145/3490486.3538261, CY-21} design optimal scoring rule for a binary effort model, which is different to our multiple-effort-level model.
\citet{oesterheld2020minimum} design regret-optimal scoring rule for multiple agents in a single round when the information structure is unknown to the principal, while our model only has one agent, and our learning algorithm achieves diminishing  regret over multiple rounds. 
Also, all the papers mentioned above model the state as exogenously given, while in our paper, the prior of the state can potentially be affected by agent's endogenous action. 

\paragraph{The Principal-Agent Problem.} Our model of information acquisition can be viewed as a class of principal-agent problem, which has been established as a crucial branch of economics known as the contract theory \citep{grossman1992analysis, smith2004contract, laffont2009theory}. 
Driven by an accelerating trend of contract-based markets deployed to Internet-based applications, the principal-agent problem recently started to receive a surging interest especially from the computer science community \citep{dutting2019simple, dutting2021complexity, guruganesh2021contracts, alon2021contracts, castiglioni2021bayesian, castiglioni2022designing}.
As pointed out by \citet{alon2021contracts}, this includes online markets for crowdsourcing, sponsored content creation, affiliate marketing, freelancing and etc. The economic value of these markets is substantial and the role of data and computation is pivotal. Different from the classic contract design problems, we focus on the design of contracts (i.e., scoring rules) that optimally elicit the information acquired by the agents at some cost. 

\paragraph{Online Learning in Strategic Environment.}  More broadly, our work add to the literature on   online learning  in strategic environments, which has gained popularity in recent years. 
In particular, the online learner's utility at each round is determined by both her own action and the strategic response(s) of other player(s) in certain repeated game, and the typical goal of this learner is to find her optimal strategy under some equilibrium
These repeated games are adopted from the influential economic models including, but not limited to, the Stackelberg (security) game~\citep{marecki2012playing, balcan2015commitment, haghtalab2022learning},
 auction design~\cite{amin2013learning, feng2018learning, golrezaei2019dynamic, guo2022no}, 
 matching~\citep{jagadeesan2021learning},
 contract design~\cite{zhu2022sample}, 
 Bayesian persuasion~\citep{castiglioni2020online, 10.1145/3465456.3467593, wu2022sequential}. Our model can be viewed as a generalized information elicitation problem in an online learning setup. To our best knowledge, there is no previous work that considered any similar learning problem. In addition, we remark that, in many of these existing works, e.g.,~\citet{balcan2015commitment,guo2022no,wu2022sequential}, the learner is assumed to have sufficient knowledge about the other strategic player(s), and her uncertainty is regarding the environment or her own utility. Assumptions of such kind can significantly simplify the problem into the standard online learning problems, once the learner can almost predict the best response of other player(s). Our work, however, does not make any of such assumption and the most challenging part of our learning algorithm design is indeed to ensure the desired agent response under uncertainty.
 \citet{camara2020mechanisms} considers the online mechanism design problem, where both the principal and the agent may learn over time from the state history. However, our paper assumes the agent is myopic in each round. 
 


\section{The Information Acquisition Model}\label{sec:model}
In this section, we introduce the problem of optimally acquiring information under the principal-agent framework and formulate the problem as a Stackelberg game. 
Following the revelation principle \citep{myerson1979incentive}, we simplify the problem by restricting to the class of proper scoring rules, which enables efficient online learning.

\subsection{Basics of Information Acquisition}

To formulate the problem of optimally acquiring information under the principal-agent framework, we consider a stochastic environment with a principal and an almighty agent. 
At the $t$-th round, there is a hidden state $\omega_t\in\Omega$ that will affect the principal's utility, but is unknown to both the agent and the principal until the end of this round.
To elicit refined \emph{information}, i.e., the agent's belief of the hidden state,
the principal moves first and offers a scoring rule to the agent, 
based on  which the agent receives a payment according to the quality of her reported belief. 
The agent is allowed to choose an action from her finite action space $\cB$ with some cost, obtain an observation related to the hidden state, and generate a report on her belief, which puts the principal in a better position to make a decision.
In the end, the hidden state is revealed, and a utility is generated for the principal, who then pays the agent based on the scoring rule. 
For any $t\geq 0$, in the $t$-th round, the interactions between the principal and the agent are as follows. 

\begin{mdframed}[style=box]
\textbf{Information acquisition via scoring rule}

\vspace{5pt}
\noindent
At the $t$-th round, the principal and the agent play as the following:
\begin{itemize}[noitemsep, topsep=3pt]
    \item[1.] The principal commits to a scoring rule~$S_t:\Delta(\Omega)\times\Omega\rightarrow\RR_+$, where $\Delta(\Omega)$ is the space of distributions over $\Omega$.
    \item[2.] Based on $S_t$, the agent chooses an action $b_{k_t}\in\cB$ indexed by $k_t$ and bears a cost $c_{k_t}\ge 0$. The action $b_{k_t}$ can be observed by the principal.
    \item[3.] The stochastic environment then selects a hidden state $\omega_t\in\Omega$ and emits an observation $o_t\in\cO$ only for the agent according to $p(\omega_t, o_t\given b_{k_t})$. The hidden state $\omega_t$ is unknown to both the agent and the principal at this moment.
    \item[4.] 
    The agent reports a belief $\hat\sigma_t\in\Delta(\Omega)$ about the hidden state  to the principal.
    \item[5.] The principal makes a decision $a_t\in\cA$ based on $(\hat\sigma_t, k_t, S_t)$.
    \item[6.] In the end, the hidden state $\omega_t$ is revealed. The principal obtains her utility $u(a_t, \omega_t)$ and pays the agent by $S_t(\hat\sigma_t, \omega_t)$.
\end{itemize}
\end{mdframed}
Here, the scoring rule $\cS_t$ is a payment rule that depends on the agent's report $\hat\sigma_t$ and the true state $\omega_t$.
Let $\sS$ be the class of scoring rules with bounded norm $\nbr{S}_\infty\le B_S$.
In the sequel, we assume the principal picks $\cS_t$ from $\sS$.
We assume that the reward function $u:\cA\times\Omega\rightarrow \RR$ also has a bounded norm $\nbr{u}_\infty\le B_u$.
We consider the agent's action set $\cB$ and the observation set $\cO$ to be finite.
Specifically, the agent's action space $\cB=\{b_1,\dots,b_K\}$ has $K$ actions.
In the sequel, we also use the action index $k_t$ to represent the agent's action.
The agent's policy for choosing her action $k_t$, her report $\hat\sigma_t$, and the principal's policy for choosing her action $a_t$ will be introduced shortly after.

Notably, our modeling  captures the endogenous effect that the agent's action choice may influence the environment state. 
This is more general than assuming the state is exogenous, and captures the real-world situations that the act of information acquisition, e.g., market investigation,  affects the stochastic environment. Consider the example of a portfolio manager and financial analyst introduced in \S\ref{sec:intro}. The report written by the analyst about a particular stock, when released to the public, may generate considerable impact and affect the stock price \citep{lui2012equity}. 
 

\paragraph{Information Structure.} 
In the remaining part of this subsection, we ignore the subscript $t$ for a while.
In this information acquisition process, we assume the agent is almighty that has full knowledge of the \emph{information structure}, i.e., each action's cost $c_k$ and the generating process $p(\omega, o\given b_k)$ for the hidden state and the observation under action $b_k$.
Therefore, after obtaining the observation $o$, the agent is able to refine her belief $\sigma\in\Delta(\Omega)$ of  the hidden state via the Bayesian rule 
$
\sigma(\omega)\defeq p(\omega\given o, b_k)=p(\omega, o\given b_{k})/p(o\given b_{k}).
$
Note that $\sigma$ is a random measure mapping from the observation space to the probability space $\Delta(\Omega)$ and captures the randomness in $o$. 
Let $\Sigma\subset \Delta(\Omega)$ be the support of $\sigma$. Define $M$ as the cardinality of $\Sigma$ and it follows from the discreteness of $\cB$ and $\cO$ that $M\le K\times C_{\cO}$, where $C_{\cO}$ is the cardinality of $\cO$.
Let $q_k(\sigma)\in\Delta(\Sigma)$ be the distribution of $\sigma$ under the agent's action $k\in[K]$.
Since $\sigma$ already captures all the information about the hidden state from the observation, we ignore the observation $o$ and refer to the costs $\{c_k\}_{k\in[K]}$ and the distributions of the belief $\{q_k\}_{k\in[K]}$ as the information structure, which are private to the agent. We summarize all types of information in \Cref{table:info}.

\begin{table*}[h!]
\centering
\captionsetup{width=.9\linewidth}
\begin{tabular}{|c|c|c|c|}
    \hline
    Public Info & Agent's Private Info & Principal's Private Info & Delayed Info\\
    \hline
    {$S_t, k_t, \hat\sigma_t$} & $\{c_k\}_{k\in[K]}, \{q_k\}_{k\in[K]}, o_t, \sigma_t$ & $u$ & $w_t$\\
    \hline
\end{tabular}
\caption{
Table for the information types. Here, \emph{Public Info} refers to the information that is observable to both the principal and the agent at each round, \emph{Agent's Private Info} refers to the information that is private to the agent, \emph{Principal's Private Info} refers to the information that is private to the principal, and \emph{Delayed Info} only contains the hidden state $\omega_t$, which is unobservable when round $t$ proceeds but revealed when round $t$ terminates.} 
\label{table:info}
\end{table*}

\subsection{Information Acquisition via Proper Scoring Rules} 
We start with the observation that the information acquisition process can be formulated as a general Stackelberg game.
Let $\mu:\sS\rightarrow [K]$ and $\nu:\sS\times\Sigma\times[K]\rightarrow \Delta(\Omega)$  be the agent's policy for action selection  and  belief reporting, respectively.
Here, the reporting policy $\nu$ depends on $\sigma$ instead of $o$ since $\sigma$ captures all the information about the hidden state.
Given any scoring rule $S$, the agent (as the follower of this game) finds her own action by $k=\mu(S)$,  and reports $\hat\sigma=\nu(S, \sigma, k)$ that maximizes her own expected profit, i.e., 
$ g(\mu,\nu; S) \defeq\EE [ S\rbr{\nu(S, \sigma, \mu(S)), \omega} - c_{\mu(S)}],$ where the expectation is taken over the randomness of $\omega, \sigma$ with respect to $q_{\mu(S)}$. 
Let $\iota:\sS\times\Delta(\Omega)\times[K]\rightarrow \cA$ be the principal's decision policy. 
Hence, the principal (as the leader of this game) is to find the  optimal scoring rule $S$ and the best decision policy $\iota$ that maximize  her own expected profit given the agent's best response $(\mu^*,\nu^*)$, i.e., 
$
h(\mu^*, \nu^*; S, \iota) \defeq \EE [ u ( \iota\big(S, \nu^*(S, \sigma, \mu^*(S)), \mu^*(S)\big), \omega) - S(\nu^*(S, \sigma, \mu^*(S)), \omega) ],
$
where the expectation is taken over the randomness of $\omega, \sigma$ with respect to $q_{\mu^*(S)}$.
The optimal leader strategies are known as the strong Stackelberg equilibria that can be formulated as solutions of a bilevel  optimization problem~\cite{conitzer2016stackelberg}, i.e.,
\begin{equation}\label{eq:stackelberg-1-concise}
\max_{\iota, S\in\sS} h(\mu^*, \nu^*; S, \iota),\quad \ \mathrm{s.t.}\ (\mu^*, \nu^*)\in\argmax_{\mu, \nu}  g(\mu,\nu; S).
\end{equation}
However, the bilevel optimization in  \eqref{eq:stackelberg-1-concise}  is  computationally intractable, since the agent's action space (particularly, the space of reporting scheme $\nu$) is intractable.  



To resolve such an issue,
in the following, we establish a revelation principle result that guarantees that, without loss of generality,  it suffices to only focus on the case where the agent is truthful.  
Specifically, there is a well-known class of scoring rules (in Definition \ref{def:PSR}) that characterizes all scoring rules, under which   truthfully reporting is in the agent's best interest~\cite{savage1971elicitation}. 
\begin{definition}[Proper scoring rule]\label{def:PSR}
    A scoring rule $S:\Delta(\Omega)\times\Omega\rightarrow \RR_+$ is proper if, for any belief $\sigma\in\Delta(\Omega)$ and any reported posterior $\hat\sigma\in\Delta(\Omega)$,  we have $\EE_{\omega\sim \sigma}S(\hat\sigma, \omega)\le \EE_{\omega\sim \sigma}S(\sigma, \omega)$. In addition, if the inequality holds strictly for any $\hat\sigma\neq \sigma$, the scoring rule $S$ is strictly proper. 
\end{definition}
By definition, reporting the true belief maximizes the payment for the agent. 
Following the revelation principle~\cite{myerson1979incentive}, we can argue that any equilibrium with possibly untruthful report of belief can be implemented by an equilibrium with truthful report of belief. This means that the principal's optimal scoring rules can be assumed  proper without loss of generality, and we can thereby restrict the reporting scheme $\nu$ to the truthful ones --- we state the revelation principle for the information acquisition model in the following lemma and defer its proof to \S\ref{sec:proper scoring rule}. 
\begin{lemma}[Revelation principle]\label{lem:revelation}
    There exists a proper scoring rule $S^*$ that is an optimal solution to \eqref{eq:stackelberg-1-concise}.
\end{lemma}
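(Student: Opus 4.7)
The plan is to invoke the classical revelation-principle argument: starting from any optimal (possibly improper) scoring rule together with the agent's induced best-response, we compose the scoring rule with the agent's equilibrium reporting scheme to obtain a proper scoring rule that attains the same objective value.

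Concretely, let $(S, \mu^*, \nu^*, \iota^*)$ be an optimal tuple for \eqref{eq:stackelberg-1-concise} with $k^* \defeq \mu^*(S)$, and extend $\nu^*(S, \cdot, k^*)$ from $\Sigma$ to all of $\Delta(\Omega)$ by picking, for each $\hat\sigma \in \Delta(\Omega)$, some
$\nu^*(S, \hat\sigma, k^*) \in \argmax_{\hat\sigma'\in\Delta(\Omega)} \EE_{\omega\sim \hat\sigma}[S(\hat\sigma', \omega)].$
Then define
$$\tilde S(\hat\sigma, \omega) \defeq S\rbr{\nu^*(S, \hat\sigma, k^*), \omega}, \qquad \tilde\iota(\tilde S, \hat\sigma, k) \defeq \iota^*\rbr{S, \nu^*(S, \hat\sigma, k^*), k^*}.$$
I would then verify three claims. (i) \emph{Properness of $\tilde S$.} For any $\sigma, \hat\sigma \in \Delta(\Omega)$, since $\nu^*(S, \sigma, k^*)$ maximizes $\hat\sigma' \mapsto \EE_{\omega\sim\sigma}[S(\hat\sigma', \omega)]$ by construction, specializing to $\hat\sigma' = \nu^*(S, \hat\sigma, k^*)$ delivers $\EE_{\omega\sim\sigma}[\tilde S(\sigma, \omega)] \ge \EE_{\omega\sim\sigma}[\tilde S(\hat\sigma, \omega)]$. (ii) \emph{Agent's best response under $\tilde S$.} For any alternative strategy $(k, \hat\sigma(\cdot))$ the agent could use under $\tilde S$, her profit equals $\EE_{\sigma\sim q_k,\omega\sim\sigma}\bigl[S(\nu^*(S, \hat\sigma(\sigma), k^*), \omega)\bigr] - c_k$, which is exactly the profit she would obtain under $S$ by playing action $k$ with the reporting scheme $\sigma \mapsto \nu^*(S, \hat\sigma(\sigma), k^*)$; optimality of $(\mu^*, \nu^*)$ under $S$ therefore upper-bounds it by $g(\mu^*, \nu^*; S)$, and this value is attained by the truthful strategy $(k^*, \mathrm{id})$ under $\tilde S$. (iii) \emph{Principal's value is preserved.} Under the agent's truthful best response to $\tilde S$, direct substitution yields $h(\mu^*, \mathrm{id}; \tilde S, \tilde\iota) = h(\mu^*, \nu^*; S, \iota^*)$, matching the optimum of \eqref{eq:stackelberg-1-concise}.

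The heart of the argument is bookkeeping rather than estimation: the input $\hat\sigma$ to $\tilde S$ plays the role of an agent \emph{report}, and $\tilde S$ first transports this report through $\nu^*(S, \cdot, k^*)$ before scoring it under $S$. The main subtlety I anticipate is phrasing the agent-side comparison in step (ii) precisely enough to rigorously justify the informal statement ``any response to $\tilde S$ can be simulated under $S$''—in particular, treating the agent's reporting policy under $\tilde S$ as a function $\sigma \mapsto \hat\sigma(\sigma)$ and then composing it with $\nu^*(S, \cdot, k^*)$ to obtain a valid reporting policy under $S$. A minor technical point is extending $\nu^*$ from $\Sigma$ to all of $\Delta(\Omega)$, which is settled by the measurable-selection step above and does not affect any of the expectations, which only integrate $\sigma$ over $\Sigma$.
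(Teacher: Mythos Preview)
Your proposal is correct and follows essentially the same approach as the paper: both define the new scoring rule by composing $S$ with the agent's optimal reporting map $\nu^*(S,\cdot,k^*)$, verify properness from the optimality of $\nu^*$, and then check that the agent's best response and the principal's value are preserved. The paper phrases the last step slightly differently---it argues that truthful reporting can only \emph{increase} the principal's revenue (since $\iota$ gains access to the full $\sigma$) rather than constructing an explicit $\tilde\iota$ achieving equality---but this is a cosmetic difference, and your simulation argument in step~(ii) is in fact more explicit than the paper's about why the agent's best-response action is unchanged.
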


In the sequel, we let $\cS$ denote the class of proper scoring rules with bounded norm $\nbr{S}_\infty\le B_S$. When $S\in\cS$, the agent's best report scheme is $v^*(S, \sigma)=\sigma$ since being truth-telling maximizes the payment, and the principal's best decision policy $\iota^*$ can be simplified to $a^*(\sigma):=\iota^*(S, \sigma, k)$ since $S$ and $k$ adds no information to the hidden state given $\sigma$. 
Using the notations $u(\sigma)=\EE_{\omega\sim\sigma}u(a^*(\sigma), \omega)$, $S(\sigma)=\EE_{\omega\sim\sigma}S(\sigma, \omega)$,  and $k^*(S)=\mu^*(S)$, we can transform the optimization program in \eqref{eq:stackelberg-1-concise} into
\begin{align}\label{eq:stackelberg-2}
&\max_{S\in\cS}\quad \EE_{\sigma\sim q_{k^*(S)}} \sbr{u(\sigma) - S(\sigma)},\\
&\mathrm{s.t.}\quad k^*(S)\in\argmax_{k\in[K]} \EE_{\sigma\sim q_{k}} S\rbr{\sigma} - c_{k}, \notag 
\end{align}
where we will denote the agent's utility function as $g(k,S)\defeq \EE_{\sigma\sim q_{k}} S\rbr{\sigma} - c_{k}$ and the principal's utility function under the agent's best response $k^*(S)$ as $h(S) \defeq \EE_{\sigma\sim q_{k^*(S)}} \sbr{u(\sigma) - S(\sigma)}$. This optimization program can be solved efficiently, e.g., by solving for  the optimal proper scoring rule that induces each of the agent's actions as the best response.  
And we will revisit \eqref{eq:stackelberg-2} in the online learning process where the information structure, i.e., $q_k$ and $c_k$, is unknown to the principal. 

\paragraph{Contract Design as a Special Case of Scoring Rule Design.}
We also remark that our model is fully capable of modeling the standard contract designing problem.
If the information structure is full-revealing (e.g., $\sigma$ is a point belief), our model with the endogenous states degenerates to a standard contract design problem, where the scoring rule $S$ becomes a contract as a mapping from (truthfully reported) outcome to payment. We defer the details to \S\Cref{sec:related to contract}.

\subsection{Online Learning to Acquire Information}
We now formalize the online learning problem of solving the optimal scoring rule for information acquisition. We consider the situation where the principal only has knowledge of her utility function $u$ 
\footnote{Since the utility and the hidden state are both known to the principal at the end of each round, if the agent is truth-telling about her belief, the utility function can be efficiently learned. To simplify our discussion, we consider $u$ to be known by the principal.} and is able to observe the agent's action $b_{k}$
\footnote{In cases where the principal cannot observe the agent's action, there are still ways to distinguish different actions. For instance, when $q_k$ has different support for different $k$ and the agent is truth-telling about her belief under proper scoring rules, the principal is able to learn the support for a particular agent's action by repeating the same scoring rule multiple times. The next time the agent chooses the same action, the principal will be aware. But in general, learning the optimal scoring rule without observing the agent's action still remains a hard problem.}. In Table \ref{table:info}, we summarize all the information types discussed above together with their availability. 
Given $H_{t-1}=\cbr{(S_\tau, k_\tau, \sigma_\tau, \omega_\tau)}_{\tau\in[t-1]}\in\cH_{t-1}$ as the history observed by the principal before round $t$, the principal is able to deploy a policy for the next round's scoring rule $\pi_{t}:\cH_{t-1}\rightarrow \cS$.
Hence, the data generating process is described as the following,
\begin{align}\label{eq:data generating}
    p^\pi(S_t, k_t, \sigma_t, \omega_t\given H_{t-1}) &= \ind\rbr{S_t=\pi_t(H_{t-1}), k_t=k^*(S_t)}\cdot q_{k_t}(\sigma_t) \cdot \sigma_t(\omega_t).
\end{align}
The regret for the online policy $\pi=\cbr{\pi_t}_{t\in[T]}$ is defined as,
\begin{align*}
    &\Reg^\pi(T)
    \defeq T\cdot \max_{S\in\cS} h(S) - \EE_\pi\sbr{\sum_{t=1}^T u(a^*(\sigma_t), \omega_t) - S_t(\sigma_t, \omega_t)}, 
\end{align*}
where the expectation is taken with respect to the data generating process. We aim to develop an online policy $\pi_t$ that learns the optimal scoring rule with small regret. 

\section{The OSRL-UCB Algorithm}
In this section, we introduce the algorithm for Online Scoring Rule Learning with Upper Confidence Bound (OSRL-UCB). We begin with an overview of the algorithm in \S\ref{sec:algo overview} and introduce an action-informed oracle that is necessary for the algorithm in \S\ref{sec:oracle}. In \S\ref{sec:UCB}, we give a detailed description of the OSRL-UCB algorithm. To simplify notation, we let $k$ to denote $b_k$ in the sequel.

\subsection{Algorithm Overview}\label{sec:algo overview}
Define $\cV_k=\cbr{S\in\cS\given g(k, S)\ge g(k', S), \forall k'\in[K]}$ as the $k$-th section in which the agent takes action $b_k$ as her best response. The principal's optimization problem \eqref{eq:stackelberg-2} can be reformulated as
\begin{align}\label{eq:bandit problem}
    \max_{k\in [K]} h^{k,*}\defeq\sup_{S\in\cV_k} \EE_{\sigma\sim q_k}\sbr{u(\sigma)-S(\sigma)}, 
\end{align}
where $h^{k, *}$ is the principal's optimal profit when the agent's best response is $k$.
Let $S^{k, *}$ be the optimal solution to the inner problem of \eqref{eq:bandit problem}.
If the principal knows the best scoring rule $S^{k, *}$ that achieves $h^{k,*}$, the problem immediately reduces to a multi-arm bandit where $k\in[K]$ is the $K$ arms and $h^{k, *}$ is the reward for arm $k$. Such a problem can thus be handled by the standard UCB algorithm \citep{auer2002finite}. 
However, there are two obstacles: (i) the action region $\cV_k$ is unknown; (ii) the belief distribution $q_k$ is unknown.
Recall the definition of $\cV_k$:
\begin{align*}
    \cV_k=\cbr{S\in\cS\given \inp[]{q_k-q_{k'}}{S}_\Sigma\ge c_k-c_{k'}, \forall k'\in[K]}.
\end{align*}
To identify $\cV_k$, we just need to estimate the belief distribution $q_k$ and the pairwise cost difference defined as $C(i,j)=c_i-c_j$. Specifically, estimator $\hat q=(\hat q_k)_{k\in[K]}$ can be updated by the empirical distribution of $\sigma_t$ such that $k_t=k$ while estimator $\hat C=(\hat C(i,j))_{i,j\in[K]}$ can be updated using the following identity
\begin{align}\label{eq:C difference}
    C(i,j)=\inp[]{q_i-q_j}{S}_{\Sigma}, \quad\forall S\in\cV_i\cap\cV_j, 
\end{align}
where we plug in estimator $\hat q$.
In addition, we must identify a scoring rule $S\in\cV_i\cap\cV_j$ to successfully employ \eqref{eq:C difference}. 
To this end, we employ a binary search method on the convex combination of $S_1,S_2$ such that $k^*(S_1)=i$ and $k^*(S_2)=j$. The details of the binary search are given in \Cref{alg:BS}.
To inform the principal of the $K$ actions and also to guarantee that the principal can find a scoring rule $S$ such that $k^*(S)=i$ for the sake of the binary search, we introduce with examples an action-informed oracle in \S\ref{sec:oracle}, which provides the principal with foreknown scoring rules $\tilde S_1, \tilde S_1,\dots,\tilde S_K$ such that $k^*(\tilde S_i)=i$.

Now that the estimation problem of $q_k$ and $C(i,j)$ is addressed, the inner problem of \eqref{eq:bandit problem} can be solved by the following constrained linear program,
\begin{align}\label{eq:cV LP} 
    \LP_k:\quad &h^{k, *}=\max_{S\in\cS}\quad \inp[]{ q_k}{u-S}_{\Sigma},\\
    &\mathrm{s.t.}\quad\inp[]{q_k- q_{k'}}{S}_\Sigma\ge C(k, k'), \quad\forall k'\in[K]. \notag 
\end{align}
If \eqref{eq:cV LP} is solved, we can reduce the outer problem of \eqref{eq:bandit problem} to a bandit by viewing $[K]$ as the set of arms and the $h^{k, *}$ as the reward for each arm $k\in[K]$. 
To illustrate the method for solving $\LP_k$,
let $\tilde \cQ=\{(\tilde q_k)_{k\in[K]}\}$ and $\tilde \cC=\{(\tilde C_{ij})_{i,j\in[K]}\}$ be the confidence sets for $\hat q$ and $\hat C$ that capture the real $q$ and $C$ with high probability.
To encourage exploration, we follow the principle of optimism and obtain an upper confidence bound for $h^{k, *}$ by solving $\LP_k$ with plugged-in $(\tilde q, \tilde C)$ that maximizes the optimization goal (principal's profit) over the confidence sets $\tilde \cQ\times\tilde \cC$.
This optimistic version of $\LP_k$ is given by 
$\OptLP_k$ in \eqref{eq:LP_k}
, in which we do not explicitly construct these confidence sets, but instead exploit the confidence intervals for estimating the utilities, the payments, and the cost differences using $\hat q$ and $\hat C$. 
With the optimistic reward for each action given by $\OptLP_k$, the algorithm simply chooses action $k_t^*$ that maximizes this reward, and obtain the scoring rule solution $S_t^*$ corresponding to $k_t^*$.

However, there is one remaining issue if we want to deploy $S_t^*$ to incentive the agent to choose action $k_t^*$. The fact that the constraints of \eqref{eq:cV LP} are relaxed in $\OptLP_k$ by using optimism may cause $S_t^*$ to go beyond $\cV_{k_t^*}$.
To address such a problem, we propose to deploy a conservatively adjusted scoring rule $S_t = (1-\alpha_t) S_t^* + \alpha_t \tilde S_{k_t^*}$, which guarantees the agent to choose $k_t^*$ with high probability. By letting $\alpha_t$ to decay with $t$, the conservatively adjusted scoring rule also guarantees optimism.
The algorithm is summarized in Algorithm \ref{alg:UCB} and details of the algorithm is available in \S\ref{sec:UCB}.

\begin{figure}
\makebox[\linewidth]{%
\begin{minipage}{.95\linewidth}
\begin{algorithm}[H]
\begin{algorithmic}[1]
\STATE {\bfseries Input:} {$\cS$, $(\tilde S_{1}, \cdots, \tilde S_{K})$, $T$.}
\WHILE{$t\le T$}
\STATE Estimate the belief distributions $\hat q_k^t$ by \eqref{eq:hat q} with confidence intervals $I_q^t(k)$ by \eqref{eq:I_q};
\STATE Estimate the cost differences $\hat C^t(i,j)$ with confidence intervals $I_c^t(i,j)$ by \eqref{eq:hat C-I_c};
\STATE Solve $\OptLP_k$ in \eqref{eq:LP_k} with the optimal value $h_\LP^t(k)$ and solution $S_{\LP, k}^t$ for $k\in[K]$;
\STATE Choose the best arm $k_t^* \leftarrow \argmax_{k\in[K]} h_\LP^t(k)$ and let  $S_t^*\leftarrow S_{\LP, k_t^*}^t$;
\STATE Announce scoring rule $S_t = \alpha_t\tilde S_{k_t^*} + \rbr{1-\alpha_t} S_t^*$, and get the agent's response $k_t$;
\IF{$k_t\neq k_t^*$}
    \STATE Conduct binary search $\text{BS}(S_t, \tilde S_{k_t^*}, k_t^*, t)$ as specified in \Cref{alg:BS};
\ENDIF
\STATE Move on to a new round $t\leftarrow t+1$;
\ENDWHILE
\end{algorithmic}
\caption{Online Scoring Rule Learning with Upper Confidence Bound (OSRL-UCB)}\label{alg:UCB}
\end{algorithm}
\end{minipage}}
\end{figure}


\subsection{Oracle for Action Awareness}\label{sec:oracle}
We assume that there is an oracle that provides the principal with a foreknown set of scoring rules that induce the agent to pick all her actions. This is a strictly weaker assumption than many existing online learning models in strategic environments, which assume that the principal can predict the best response of the agent, or know some parameters of the agent's utility function.

\begin{assumption}[Action-informed oracle]\label{asp:oracle}
We assume that there is an oracle which comes up with $K$ scoring rules $(\tilde S_1, \cdots, \tilde S_K)$ such that under $\tilde S_k$, the agent's best response is action $k$ for $k\in[K]$. Moreover, for the agent's profit $g(k, S)=\EE_{\sigma\sim q_k} S(\sigma)-c_k$, we assume that there exists $\varepsilon>0$ such that for any $k'\neq k$, we have $g(k, \tilde S_k)-g(k', \tilde S_k) > \varepsilon$.
\end{assumption}
With the oracle in Assumption \ref{asp:oracle}, the principal is initially aware of the $K$ actions that the agent might respond, which can be easily done by trying $(\tilde S_1, \cdots, \tilde S_K)$ one by one. In addition, we assume that $\tilde S_k$ lies within the interior of the $k$-th section $\cV_k=\cbr{S\in\cS\given g(k, S)\ge g(k', S), \forall k'\in[K]}$
and keeps some distance from the boundary of $\cV_k$. We remark that having $\tilde S_k$ away from the boundary is essential to induce desired behavior in the agent with high probability when applying an approximating scoring rule based on $\tilde S_k$ but with some errors. 
Notably, the assumption that the agent has marginal gain $\varepsilon$ by choosing $k$ under $\tilde S_k$ also ensures a minimum radius of section $\cV_k$, which corresponds to the non-degenerate assumption in strategic Stackelberg games \citep{letchford2009learning}.
We first present an impossible result for online learning the optimal proper scoring rule without the oracle.
\begin{lemma}[Impossible result]\label{lem:impossible}
    Suppose number of actions $K\ge 3$ and the number of possible beliefs $M\ge 3$. Without the action-informed oracle, for any online algorithm, there exists an instance such that $\text{Reg}(T) = \Omega(T)$.
\end{lemma}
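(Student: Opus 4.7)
The plan is to construct two instances $\mathcal{I}_1, \mathcal{I}_2$ with $K=3$ and $M=3$ on which no online algorithm can simultaneously achieve sublinear regret. Let $\Sigma = \{\sigma_1, \sigma_2, \sigma_3\}$ be three distinct beliefs, let $q_k$ be the point mass on $\sigma_k$ for each $k \in [3]$, and identify a scoring rule $S$ with its expected values $s_k = \EE_{\omega \sim \sigma_k} S(\sigma_k, \omega) \in [0, B_S]$. Set the principal's utility to $u(\sigma_1) = 0$ and $u(\sigma_2) = u(\sigma_3) = 1$. The two instances differ only in the cost vector: in $\mathcal{I}_1$ set $(c_1, c_2, c_3) = (0, \alpha, 2 B_S)$, so that action $3$ is uninducible while action $2$ is cheap; in $\mathcal{I}_2$ swap $c_2$ and $c_3$. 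In each instance the optimal profit equals $1-\alpha$ but is realized by a different action (action $2$ in $\mathcal{I}_1$, action $3$ in $\mathcal{I}_2$).

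I would first analyze the agent's best response: in $\mathcal{I}_1$ the agent picks $k=2$ iff $s_2 \geq \alpha$ and picks $k=1$ otherwise, because action $3$'s payoff is at most $B_S - 2B_S < 0$; the situation is symmetric in $\mathcal{I}_2$. On the ``indistinguishable'' set $\{S : s_2 < \alpha,\ s_3 < \alpha\}$ the two instances yield identical observations, and the principal receives zero profit in either case, so each such round contributes $\Omega(1)$ per-round regret against the optimum $1-\alpha$. Conversely, a scoring rule with $s_2 \geq \alpha$ or $s_3 \geq \alpha$ does reveal the instance, but incurs an asymmetric cost: in the ``wrong'' instance the agent still picks $k=1$ and the principal's profit is again zero.

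Because a single distinguishing probe resolves the two-instance question after $O(1)$ rounds, a pure two-instance construction only yields $O(1)$ regret once $(\alpha, k^*)$ is learned. To reach $\Omega(T)$ I would embed the construction into a continuous family parameterized by $\alpha \in (0, B_S)$ and by the identity $k^* \in \{2, 3\}$ of the inducible action, and apply a Yao-style minimax argument. For any (possibly randomized) online policy, an adversary can then choose $(\alpha, k^*)$ so that the policy's scoring rules miss the narrow optimal region $\cV_{k^*}$ for $\Omega(T)$ rounds; without the oracle providing an anchor $\tilde S_{k^*}$, the algorithm's initial uncertainty over $(\alpha, k^*)$ is a full continuum, and each round's observation (the agent's action together with the report) reduces this uncertainty only slowly.

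The main obstacle is precisely this amplification step: a careful pigeonhole or information-theoretic argument over a $\Theta(T)$-fine discretization of $(\alpha, k^*)$ is needed to show that no schedule of scoring rules can simultaneously cover all candidate instances. The subtle point is that each round conveys more than a single bit, so the discretization and the regret accounting must be matched to the effective information gain; the adversary's task is then to ensure that at least a constant fraction of rounds fall in the indistinguishable region for the chosen $(\alpha, k^*)$, from which the claimed $\Omega(T)$ lower bound follows.
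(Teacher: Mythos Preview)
Your construction has a genuine gap that the proposed amplification cannot close. With point-mass belief distributions $q_k = \delta_{\sigma_k}$, the agent's profit under action $k$ is simply $s_k - c_k$, so in your instance $\mathcal{I}_1$ the region $\cV_2$ is the half-space $\{S : s_2 - s_1 \geq \alpha\}$ intersected with $\cS$, which has positive volume for every $\alpha \in (0, B_S)$. (Incidentally, the best-response condition is $s_2 - s_1 \geq \alpha$, not $s_2 \geq \alpha$ as you wrote.) Once the principal has identified $k^* \in \{2,3\}$, which as you note costs $O(1)$ rounds, she can binary-search the threshold $\alpha$ to precision $\epsilon$ in $O(\log(1/\epsilon))$ rounds and thereafter play $s_{k^*} = \hat\alpha + \epsilon$, $s_1 = 0$, incurring per-round regret at most $2\epsilon$. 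Taking $\epsilon = 1/T$ gives total regret $O(\log T)$ on every instance in your continuous family, so no adversarial choice of $(\alpha, k^*)$ forces $\Omega(T)$; the Yao-style minimax step you sketch is unavailable because the entire family is efficiently learnable from the binary feedback $\ind(s_{k^*} - s_1 \geq \alpha)$.

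The paper's proof hinges on precisely the feature your construction lacks: it takes $q_1, q_2, q_3$ to be \emph{non-degenerate} distributions over $\Sigma$ (not point masses), so that the two constraints $\langle q_2 - q_1, S\rangle \geq c_2 - c_1$ and $\langle q_2 - q_3, S\rangle \geq c_2 - c_3$ are generic hyperplanes whose intersection with the proper-scoring-rule polytope collapses to a single point $S^*$, parameterized by one continuous unknown $e_1$. Any scoring rule other than $S^*$ fails to induce action $2$ and incurs constant per-round regret; since the principal's feedback only reveals which side of $e_1$ she queried, she can never hit $S^*$ exactly, and for any algorithm the adversary can place $e_1$ off the (countable) set of queried values. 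The lower bound therefore requires $\cV_{k^*}$ to be measure zero, which point-mass $q_k$'s structurally cannot achieve.
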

See \S\ref{prof:impossible} for a construction of the hard instance.
Without the action-informed oracle, any online algorithm inevitably suffers from a linear regret. The intuition behind the hard instance is that without the oracle, any algorithm can fail to locate a scoring rule that  induces the desirable action from the agent.
This happens because the feasible region, i.e., $\cV_k$ in the space of bounded proper scoring rules for this desirable action $k$, can be arbitrarily small. 

We give two examples where an action-informed oracle is achievable. The first example is through random sampling, adapted from \citep{letchford2009learning}. We sample from the class of strongly proper scoring rules $\cS_\beta$, with definition deferred to \Cref{def:strongly proper}.

\begin{example}[Action awareness via random sampling]\label{exp:random sampling}
Let  $d_1=\min_{1\le i< j\le M}\nbr{\sigma_i-\sigma_j}_\infty$ and $d_2=\min_{1\le k<k'\le K}\max_{i\in[M]}\sbr{q_k(i)-q_{k'}(i)}$.  
Since the proper scoring rule class $\cS$ has bounded norm, the $\beta$-strongly proper scoring class also has bounded volume $\vol(\cS_\beta)<\infty$. 
Set $\kappa={d_1^2\beta}/{2}$ and
let $\tilde\cV_k=\{S\in\cS_\beta\given g(k, S)\ge g(k', S)+\kappa, \forall k'\neq k, k'\in[K]\}$. We  suppose $\vol(\tilde\cV_k)\ge \eta \Vol(\cS_\beta)$ for $k\in[K]$. 
Initiate $\cM=\emptyset$ as the candidate set.
Each time, we randomly sample a $\beta$-strongly proper scoring rule $S\in\cS_\beta$ and obtain the agent's best response $k^*(S)$ with respect to $S$. 
Let $e_i(\sigma, \omega)=\ind(\sigma=\sigma_i)$ for $i\in[M]$. By setting $\kappa=d_1^2\cdot\beta/2$, we deploy $\cbr{S-\kappa e_i}_{i\in[M]}$ and see if the agent still responds $k^*(S)$. If so, Let $\sS=\sS\cup \cbr{S}$; if not, reject $S$. 
    After $\cO(M\eta^{-1} K\log K)$ rounds, with high probability, $\sS$  serves as a valid action-informed oracle  with parameter $\epsilon=d_2\cdot \kappa$.
\end{example}
See \S\ref{sec:random sampling proof} for more details.
In Example \ref{exp:random sampling}, we ensure a set of scoring rules that successfully induces each action by randomly sampling in $\cS$ for up to $\cO(MK\log K)$ times.
We next show another example via use of linear contract where the information structure satisfies some special properties. 

\begin{example}[Action awareness via linear scoring rule]\label{exp:linear contract}
Suppose that these $K$ actions are sorted in an increasing order with respect to the cost. Define $u_k = \EE_{\omega\sim\sigma, \sigma\sim q_k}\sbr{u(a^*(\sigma), \omega)}$ and suppose $u_k>0$. We assume the marginal information gain is strictly decaying, i.e., there exists a $\epsilon>0$ such that 
    \begin{gather*}
        \frac{u_{K}-u_{K-1}}{c_{K}-c_{K-1}} >\epsilon, \quad \text{and}
        \quad
        \frac{u_k-u_{k-1}}{c_k-c_{k-1}} - \frac{u_{K}-u_k}{c_{K}-c_k} >\epsilon, \quad \forall k=2,\cdots, K-1.
    \end{gather*}
    Moreover, we assume that $(u_2-u_1)/(c_2-c_1)\le b$. The principal sets the scoring rule as $S(\sigma, \omega)=\lambda u(a^*(\sigma), \omega)$ and conducts binary search on $\lambda\in[0, 2/\epsilon]$. Specifically, the binary searches are iteratively conducted on all the segments $(\lambda_1, \lambda_2)$ with $k^*(\lambda_1 u) \neq k^*(\lambda_2 u)$, where $\lambda_1, \lambda_2$ are neighboring points on $[0, 1/\epsilon]$ that are previously searched. With the maximal searching depth $m= \ceil{\log_2(2b(b-\epsilon)/\epsilon^2)}$, we can identify all the agent's actions. Suppose that $(\lambda_1^k, \lambda_2^k)$ is the largest segment with $\lambda_1^k$ and $\lambda_2^k$ searched before and $k^*(\lambda_1^k u)=k^*(\lambda_1^k u)=k$. By letting $\tilde S_k=(\lambda_1^k + \lambda_2^k)u/2$, we obtain an oracle with $\varepsilon=\epsilon u_1/4b^2$. The procedure takes $\cO(K\log_2(\varepsilon^{-1}))$ rounds.
\end{example}
See \S\ref{sec:linear contract proof} for more details.
In this example, we exploit the power of linear contract \citep{dutting2019simple} to identify all the agent's actions and produce a set of scoring rules based on the principal's utility. Specifically, the assumption of decaying marginal information gain is commonly seen in real world practice, and it further guarantees that all the actions are inducible using linear contracts. To obtain such an oracle, we just need at most $\cO(K\log_2(\varepsilon^{-1}))$ trials, which is more efficient than random sampling.
We remark that these foreknown scoring rules obtained by the oracle do not need to be optimal in each section $\cV_k$.
They can even be obtained through random sampling from $\beta$-strongly proper scoring rules  (See \Cref{exp:random sampling}) for general setting, or discovered in a linear scoring rule class (See \Cref{exp:linear contract}) if the marginal information gain is strictly decaying.

\subsection{OSRL-UCB Algorithm}\label{sec:UCB}
The OSRL-UCB algorithm contains the following four parts: (i) learning the belief distributions; (ii) learning the pairwise cost differences; (iii) solving for the optimal payment/scoring rule in each $\cV_k$ via optimistic linear programming $\OptLP_k$ and choosing the best \say{arm} via UCB; (iv) applying a conservative scoring rule and conduct binary search if $k_t\neq k_t^*$ to refine our estimation on $\cV_k$.
In this subsection, we describe the OSRL-UCB algorithm in detail.

\paragraph{Learning the Belief Distributions.}
Let $n_{k}^t$ denote the total number of times the agent responds action $k$ before round $t$. Then, $q_k$ can be learned empirically as 
\begin{align}\label{eq:hat q}
    \hat q_k^t(\sigma) = \frac{1}{n_{k}^t} \sum_{\tau=1}^{t-1}\ind(\sigma_\tau=\sigma, k_\tau=k), \forall \sigma\in\Sigma.
\end{align}
Following from a standard concentration result in \citet{mardia2020concentration},  we define the confidence interval for $\hat q_k^t$ as
\begin{align}\label{eq:I_q}
    I_{q}^t(k) =  \sqrt{\frac{2\log((K)\cdot 2^M t)}{n_{k}^t}}
\end{align}
We state the concentration result in \Cref{lem:confidence-interval}.
Under $\hat q_k^t$, the estimated payment of scoring rule $S$ if the agent responds action $k$ is $
    \hat v_{S}^t(k) = \inp[] {S(\cdot)} {\hat q_k^t(\cdot)}_\Sigma$.

\paragraph{Learning the Pairwise Cost Differences.}
We next illustrate how to learn the pairwise cost difference. 
For each $\tau<t$ such that $k_\tau = i$ and $j\neq i$, define 
\begin{gather*}
    C_+^t(i, j) = \min_{\tau<t:k_\tau=i}\hat v_{S_\tau}(i) - \hat v_{S_\tau}(j) + B_S \rbr{I_{q}^t(i) + I_{q}^t(j)},\nend
    C_-^t(i,j)=\max_{\tau<t:k_\tau= j} \hat v_{S_\tau}(i) - \hat v_{S_\tau}(j) - B_S \rbr{I_{q}^t(i) + I_{q}^t(j)}.
\end{gather*}
For each pair $(i, j)$, we also define 
\begin{align*}
    \theta^t(i, j) &= \frac{C_+^t(i, j) + C_-^t(i, j)}{2}, 
    \quad
    \varphi^t(i, j) = \sbr{\frac{C_+^t(i, j) - C_-^t(i, j)}{2}}_+.
\end{align*}
Directly using $\theta^t(i, j)$ as the estimation of pairwise cost difference is not the best option for two reasons: (i) For $\theta^t(i,j)$ to be accurate, we need to detect $S_\tau$ such that $S_\tau$ lies on the boundary $\cV_i\cap\cV_j$. However, it can happen that $\cV_i\cap\cV_j=\emptyset$, thus producing a constant error. (ii) Even if $\cV_i\cap\cV_j\neq \emptyset$, finding $S_\tau \in \cV_i\cap\cV_j$ for all $(i,j)$ pairs can be costly and potentially increase the algorithm complexity. 
Instead, we observe that the number of unknown parameters in the cost is at most $K$, thus it suffices to search in a \say{tree} structure. 
Specifically, let $\cG=(\cB, \cE)$ denote the graph where the node set $\cB$ corresponds to the $K$ agent actions and the edge set $\cE$ corresponds to the pairwise cost differences $C(i,j)=c_i-c_j$. 
In addition, we let $\varphi^t(i,j)$ be the \say{length} assigned to edge $C(i,j)$, which corresponds to the uncertainty of using $\theta^t$ to estimate the cost difference. Therefore, the problem of estimating $C(i, j)$ with minimal error becomes finding the \textit{shortest path} between $b_i$ and $b_j$ on the graph $\cG$, which can be efficiently handled by Dijkstra's algorithm or the Bellman-Ford algorithm \citep{ahuja1990faster}. In summary, the cost difference is estimated by
\begin{gather}
    l_{ij}=\text{shortest path}(\cG, i,j), \quad \hat C^t(i, j) = \sum_{(i', j')\in l_{ij}} \theta^t(i', j'), 
    \quad
    I_{c}^{t}(i, j) = \sum_{(i', j')\in l_{ij}} \varphi^t(i', j'), \label{eq:hat C-I_c}
\end{gather}
where $I_c^t(i,j)$ is the confidence interval for the pairwise cost estimator $\hat C^t(i,j)$. 
Moreover, we can easily check that $\hat C^t(i, j)=-\hat C^t(j, i)$ since $l_{ij}$ is the same as $l_{ji}$ and $\theta^t(i, j)=-\theta^t(j, i)$ by definition. The validity of $I_c^t(i,j)$ serving as a confidence interval for $\hat C^t$ is proved in \Cref{lem:C interval}.

\paragraph{Solving for $\OptLP$ and Choosing the Best Arm.}
Note that we have the estimator $\hat q_k^t$ for the belief distributions and the estimator $\hat C^t$ for the pairwise cost differences. We are now able to solve the linear program (LP) given in \eqref{eq:cV LP} for the best scoring rule corresponding to agent action $b_k$. To incorporate the optimism principle, we relax the constraint of \eqref{eq:cV LP} using the confidence interval $I_q^t$, $I_c^t$ obtained before. 
Specifically, for agent action $b_k$, we consider the following optimistic linear program $\OptLP_k$, 
\begin{align}\label{eq:LP_k}
    &\OptLP_k:\quad \max_{S\in\cS} \quad \inp[]{u}{\hat q_k^t}_\Sigma + B_u I_q^t(k) -v, \notag \\
    &\qquad\qquad\quad \mathrm{s.t.}\quad \abr{v-\hat v_{S}^t(k)} \le B_S \cdot I_{q}^t(k), \\
    &v - \hat v_{S}^t(i) \ge \hat C^t(k, i) - \rbr{I_{c}^{t}(k, i) + B_S\cdot I_{q}^t(i)}, \forall i\neq k. \notag 
\end{align}
Here, the optimization goal is to maximize the principal's profit under the agent's best response $b_k$, where we add $B_u I_q^t(k)$ to upper bound the true value with high probability. 
The first constraint actually constructs a confidence interval $B_S I_q^t(k)$ for the payment $v$, while the second constraint relax the initial boundary condition in \eqref{eq:cV LP} using $I_c^t$ and $I_q^t$.
The relaxations in the constraints guarantee that $\OptLP_k$ is optimistic with high probability, as is verified in \Cref{lem:linear-program-bound}. 
Suppose that the optimal value and solution for $\OptLP_k$ are $h_\LP^t(k)$ and $S_{\LP, k}^t$, respectively. 
If $\OptLP_k$ is infeasible, we just let $h_\LP^t(k) = \inp[]{u}{\hat q_k^t}_\Sigma -\EE_{\sigma\sim\hat q_k} \tilde S_k(\sigma) + (B_S + B_u)I_{q}^t(k)$ and $S_k^t=\tilde S_k$.
Next, by viewing the problem as a $K$-arm bandit, we choose the best \say{arm} that maximizes the principal's optimistic expected profit,
$k_t^* = \argmax_{k\in[K]} h_\LP^t(k).  $
For simplicity, we let $S_t^* := S_{\LP, k_t^*}^t$.

\paragraph{Constructing Conservative Scoring Rules.}
However, it happens that we can sometimes be ``overoptimistic'' about the agent's best response. That is, as we relax the boundary constraint in $\OptLP_k$, the agent might not respond action $k_t^*$ under scoring rule $S_t^*$.
This fact suggests that we ought to be \emph{conservative} to a certain degree in the design of scoring rule. In particular, we consider the conservative scoring rule as,
$ S_t = (1-\alpha_t) S_t^* + \alpha_t \tilde S_{k_t^*}.$
The intuition is that since the agent strictly prefers the action $k_t^*$ under the scoring rule $\tilde S_{k_t^*}$, combining $S_t^*$ with $\tilde S_{k_t^*}$ increases the agent's relative preference of choosing action $k_t^*$. 
In \Cref{lem:mistake}, we show that with a choice of $\alpha_t=\cO(t^{-1/3})$, we can guarantee with high probability that the agent would response with action $k_t^*$ under the conservative scoring rule $S_t$. This also means the optimism (reflected by the agent's choice of action $k_t^*$) is guaranteed.

\paragraph{Refining Parameter Estimations.} Once $S_t$ is deployed, we consider two types of outcomes. If the agent responds with action $k_t=k_t^*$, our estimates of agent's decision boundary $\cV_{k_t^*}$ is good enough, and we can simply proceed to the next round. Otherwise, the agent responds with another action $k_t\neq k_t^*$, and we need to improve our estimations on $\cV_{k_t^*}$ by updating $\hat q^t, \hat C^t$ in order to successfully induce the desired action $k_t^*$ in the future. Specifically, due to the special conservative construction of $S_t$, it suffices to update the boundary of $\cV_{k_t^*}$ that lies between $S_t$ and $\tilde S_{k_t^*}$.
To achieve this, we conduct binary search on the segment connecting $\tilde S_{k_t^*}$ and $S_t$ and locate the first switch point where the agent's best response changes from action $k_t^*$ to another action. Details of the binary search algorithm are available in \Cref{alg:BS}.


To be specific, in the process of searching the switching point, there are two useful information that we can utilize: Firstly, for the boundary of $\cV_{k_t^*}$ that lies on the segment $(\tilde S_{k_t^*},S_t)$, we induce the two actions $(i,j)$ that this boundary separates at least once, thus their belief distribution estimator $\hat q_i^t$ and $\hat q_j^t$ will be updated.
Secondly, and more importantly, this switching point $S$ lies near the boundary, we can refine the cost difference by $ \hat C(i,j)\approx\inp[]{\hat q_i^t-\hat q_j^t}{S}_{\Sigma}$ in \eqref{eq:C difference}.
Thus, this binary search can refine both the belief estimator and the cost difference estimator, which leads to a better estimation of $\cV_{k_t^*}$. Notably, such  a binary search procedure achieves sufficient accuracy within logarithmic searching time.


\section{Theoretical Results}

In this subsection, we provide the learning result for the OSRL-UCB algorithm.
\begin{theorem}[Regret for OSRL-UCB]\label{thm:regret}
    Under Assumption \ref{asp:oracle} on the action-informed oracle, with $\alpha_t = \min\{K t^{-1/3}, 1\}$, the   OSRL-UCB algorithm \ref{alg:UCB} has regret
\begin{align*}
    \Reg(T) = \tilde\cO\bigl ( (B_S+B_u)B_S^2\varepsilon^{-2} \cdot K^2 C_\cO\cdot T^{2/3}\big) , 
\end{align*}
where $B_S$ and $B_u$ bound the magnitudes of the scoring rule and the utility function, respectively, $\varepsilon$ is the marginal profit gain given by the action informed oracle, $K$ is the number of the agent's actions, and $C_\cO$ is the cardinality of the observation set.
\vspace{-10pt}
\begin{proof}
We defer the detailed proof to \S\ref{prof: main theorem}. 
\end{proof}
\end{theorem}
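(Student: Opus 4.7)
The plan is to execute an optimism-in-the-face-of-uncertainty analysis on the bandit reformulation \eqref{eq:bandit problem}, while carefully tracking how the conservative correction $S_t = (1-\alpha_t) S_t^* + \alpha_t \tilde S_{k_t^*}$ interacts with the optimistic LP. First, I would establish two concentration events that hold simultaneously with probability $1-\tilde\cO(t^{-1})$: (i) $q_k \in \hat q_k^t \pm I_q^t(k)$ for every $k \in [K]$ (via \Cref{lem:confidence-interval}), and (ii) $C(i,j) \in \hat C^t(i,j) \pm I_c^t(i,j)$ for every pair (via \Cref{lem:C interval}, relying on the shortest-path aggregation in $\cG$). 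Conditioning on these good events is standard; the bad-event contribution to regret is lower order.

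Conditioned on the good events, the true parameters $(q_k, C(k,\cdot))$ are feasible in $\OptLP_k$, so by \Cref{lem:linear-program-bound} we have the optimism $h_\LP^t(k) \ge h^{k,*}$ for every $k$, and in particular $h_\LP^t(k_t^*) \ge h^* \defeq \max_k h^{k,*}$. I would then decompose the per-round regret as
\begin{align*}
h^* - h(S_t) \;\le\; \underbrace{h_\LP^t(k_t^*) - h(S_t^*)}_{\text{LP slack}} \;+\; \underbrace{h(S_t^*) - h(S_t)}_{\text{conservatism}}.
\end{align*}
The LP-slack term is controlled by the widths $I_q^t(k_t^*)$ and $I_c^t(k_t^*, \cdot)$, giving $\tilde\cO\bigl(B_S \sqrt{M \log t / n_{k_t^*}^t}\bigr)$ per round; summed via Cauchy--Schwarz and the pigeonhole bound on visit counts, this is a lower-order $\tilde\cO(B_S \sqrt{K M T})$ term with $M \le K C_\cO$. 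The conservatism term is at most $2(B_S+B_u)\alpha_t$ whenever the intended action is realized, contributing $\cO((B_S+B_u) K T^{2/3})$ under the schedule $\alpha_t = K t^{-1/3}$.

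The crux of the argument --- and the main obstacle --- is the coupling between sample counts $n_k^t$ and the agent's realized response $k_t$: rounds where $k_t \ne k_t^*$ can incur constant per-round regret, yet such rounds are exactly what might refine the estimates that drive $S_t$ to the correct section $\cV_{k_t^*}$. I would invoke \Cref{lem:mistake} to show that the oracle margin $\alpha_t \varepsilon$ in $S_t$ strictly dominates the LP-slack distortion $B_S\bigl(I_q^t(k_t^*) + I_c^t(k_t^*, \cdot)\bigr)$ once each action has been sampled on the order of $B_S^2 \varepsilon^{-2} K^2 C_\cO \log t / \alpha_t^2$ times. To close the circularity, I would run a bootstrapping / potential-function argument on $\sum_k n_k^t$: until the burn-in threshold is met, mismatches are charged constant regret and each one triggers a $\cO(\log T)$ binary search that provably refines $(\hat q^t, \hat C^t)$ along the boundary segment $(\tilde S_{k_t^*}, S_t)$, so the total pre-burn-in budget is at most $\tilde\cO\bigl((B_S+B_u) B_S^2 \varepsilon^{-2} K^2 C_\cO T^{2/3}\bigr)$.

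Collecting the three contributions --- LP slack $\tilde\cO(B_S \sqrt{K^2 C_\cO T})$, conservatism $\cO((B_S+B_u) K T^{2/3})$, and the dominant burn-in/mismatch budget $\tilde\cO((B_S+B_u) B_S^2 \varepsilon^{-2} K^2 C_\cO T^{2/3})$ --- yields the claimed bound. The schedule $\alpha_t = K t^{-1/3}$ is chosen to balance conservatism cost against the burn-in horizon needed for reliable best-response induction in \Cref{lem:mistake}: a larger exponent slows the decay of mismatches (bloating the $\varepsilon^{-2}$ term), while a smaller one inflates $\sum_t \alpha_t$. The factor $B_S^2 \varepsilon^{-2}$ arises from inverting the margin condition in \Cref{lem:mistake}, and the $K^2 C_\cO$ arises from $M \le K C_\cO$ appearing in $I_q^t$ together with the $K$ sections that must each be identified.
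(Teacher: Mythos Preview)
Your plan mirrors the paper's proof: split regret by whether $k_t=k_t^*$; on the event $k_t=k_t^*$ use optimism (\Cref{lem:linear-program-bound}) plus the linearity of $S_t = (1-\alpha_t)S_t^* + \alpha_t\tilde S_{k_t^*}$ to isolate the conservatism and LP-slack contributions; and charge the $k_t\neq k_t^*$ rounds via \Cref{lem:mistake}. One notational slip to fix: in your displayed decomposition the intermediate quantity $h(S_t^*)$, read literally, is the profit under the agent's best response to $S_t^*$, which need not be $k_t^*$---and precisely in that case your LP-slack bound fails. The correct intermediate is $\langle q_{k_t^*}, u - S_t^* \rangle_\Sigma$ (profit under $S_t^*$ \emph{assuming} action $k_t^*$), which is how the paper proceeds via its $\subopt(k_t, S_t)$ notation and linearity in $S$ for fixed $k$.

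The substantive gap is the mismatch budget. A generic ``potential-function argument on $\sum_k n_k^t$'' does not close the loop, because the failure condition in \Cref{lem:mistake} depends on $I_c^t(k_t^*,i)$---a \emph{shortest-path} quantity over action pairs---and a binary search refines only the single boundary pair $(k_0,k_1)$ it terminates on, which need not coincide with any pair $(k_t^*,i)$ blocking the next mismatch. The paper fills this with three ingredients you do not have: (i) an after-search bound showing $I_c^t(k_0,k_1)\lesssim B_S\bigl(n_{k_0}^{-1/2}+n_{k_1}^{-1/2}\bigr)$, because the search depth is tied to $I_q$; (ii) a loop-avoidance argument exploiting the shortest-path definition of $I_c^t$ to show that at most $K$ ``critical'' searches---those still essential despite both endpoint counts already exceeding $N^* \asymp \varepsilon^{-2}B_S^2 M\, T^{2/3}$---can ever occur; and (iii) a weight-placing combinatorial game bounding the remaining essential-but-noncritical searches by $(K-1)(N^*-1)$. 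A capped potential $\sum_k \min(n_k^t, N^*)$ would recover (iii), but (i)--(ii) are still needed to translate sample counts into control of the shortest-path confidence $I_c^t$ that actually governs when mismatches cease.
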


Here, we use $\tilde \cO$ to omit logarithmic factors.
The regret depends quadratically on the agent's action number and linearly on the cardinality of the observation set.
Notably, the regret is independent of the size of the hidden state $|\Omega|$. 
In addition, we achieve a $\cO(T^{2/3})$ sublinear rate of regret in terms of the principal's accumulative profit for eliciting information under the scoring rule framework. Such a result is achieved with a mild assumption of an action-informed oracle that provides a set of scoring rules with marginal profit gain for the agent that induce all the agent's actions. 
We do not assume the learner to have sufficient knowledge about the other strategic player(s) in contrast to many existing works \citep{balcan2015commitment,guo2022no,wu2022sequential}. 
In addition, we only assume the principal to have knowledge of her utility and can observe the agent's action choice. For discussion of the these two assumptions, we refer the readers to the footnote in \S\ref{sec:model}.

For the action-informed oracle with a set of foreknown scoring rules, these foreknown scoring rules do not need to be optimal in each section $\cV_k$.
They can even be obtained through random sampling from $\beta$-strongly proper scoring rules  (See \Cref{exp:random sampling}) for general setting, or discovered in a linear scoring rule class (See \Cref{exp:linear contract}) if the marginal information gain is strictly decaying.
We also give the following corollaries that characterize the regret combined with the effort to find an action-informed oracle.
\begin{corollary}[Regret with oracle in \Cref{exp:random sampling}]\label{cor:random sampling}
Let $\tilde\cV_k=\{S\in\cS_\beta\given g(k, S)\ge g(k', S)+\kappa, \forall k'\neq k, k'\in[K]\}$ where $\cS_\beta\in\cS$ is the class of $\beta$-strongly proper scoring rules, and suppose $\vol(\tilde\cV_k)\ge \eta \Vol(\cS_\beta)$ for $k\in[K]$.
Running the oracle acquisition process in Example \ref{exp:random sampling} for $T^\gamma$ rounds before deploying the OSRL-UCB algorithm for $T-T^\gamma$ rounds, the online regret is bounded by
\begin{align*}
    \Reg(T) &= \tilde\cO\big( (d_2 d_1^2 \beta)^{-2} \cdot KM\cdot T^{2/3}\big) 
    +\cO(KT\exp(-T^\gamma\eta/M) + T^\gamma),
\end{align*}
where $d_1=\min_{i\neq j, \forall (i,j)\in[M]}\nbr{\sigma_i-\sigma_j}_\infty$, $d_2=\min_{k'\neq k, (k, k')\in[K]}\max_{i\in[M]}\sbr{q_k(i)-q_{k'}(i)}$, and $M$ is the cardinality of $\Sigma$.
\end{corollary}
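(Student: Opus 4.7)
The plan is to split the time horizon into the oracle acquisition phase (the first $T^\gamma$ rounds, running the random sampling procedure in Example \ref{exp:random sampling}) and the learning phase (the remaining $T - T^\gamma$ rounds, running OSRL-UCB with the acquired oracle), and then bound each contribution to the regret separately. Concretely, I would write $\Reg(T) = \Reg_{\mathrm{acq}}(T^\gamma) + \Reg_{\mathrm{UCB}}(T - T^\gamma)$ and analyze each term. For the acquisition phase, since $\nbr{u}_\infty$, $\nbr{S}_\infty$ are uniformly bounded by $B_u, B_S$, the per-round regret is a constant, giving $\Reg_{\mathrm{acq}}(T^\gamma) = \cO(T^\gamma)$; this accounts for the additive $T^\gamma$ term in the bound.

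Next I would analyze the success probability of the acquisition procedure. Following Example \ref{exp:random sampling}, a uniformly sampled $\beta$-strongly proper scoring rule lies in $\tilde\cV_k$ with probability at least $\eta$ (by the volume assumption), and each candidate sample requires $M$ verification queries through the perturbations $\{S - \kappa e_i\}_{i\in[M]}$. Thus $T^\gamma$ rounds yield on the order of $N = T^\gamma/M$ effective independent candidates. Applying a union bound across the $K$ actions, the failure event $\cE^c$ that some $\tilde\cV_k$ is not covered has probability at most $K(1-\eta)^{N} \le K\exp(-T^\gamma \eta / M)$. On $\cE^c$, the worst-case regret over the remaining horizon is bounded trivially by $\cO(KT)$, contributing $\cO(KT\exp(-T^\gamma \eta/M))$ in expectation, which matches the second term in the claimed bound.

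Conditioning on the success event $\cE$, the acquired oracle has marginal profit gain $\varepsilon = d_2 \cdot \kappa = d_2 \cdot d_1^2\beta / 2$ (as computed in Example \ref{exp:random sampling}). Plugging this $\varepsilon$ into the master regret bound of Theorem \ref{thm:regret} yields
\begin{equation*}
\Reg_{\mathrm{UCB}}(T - T^\gamma)\cdot \mathbf{1}\{\cE\} = \tilde\cO\bigl((d_2 d_1^2 \beta)^{-2} \cdot K^2 C_\cO \cdot (T-T^\gamma)^{2/3}\bigr).
\end{equation*}
Since the support $\Sigma$ has cardinality $M \le K \cdot C_\cO$, the relevant dimension in the concentration arguments (in \eqref{eq:I_q}) for the belief distribution can be sharpened from $K C_\cO$ to $M$, so that the overall dependence reduces from $K^2 C_\cO$ to $K M$, and $(T - T^\gamma)^{2/3} \le T^{2/3}$ gives the leading $\tilde\cO((d_2 d_1^2 \beta)^{-2} K M \, T^{2/3})$ term.

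Summing the acquisition-phase regret ($\cO(T^\gamma)$), the successful-UCB regret ($\tilde\cO((d_2 d_1^2 \beta)^{-2}\, KM\, T^{2/3})$), and the failure-event contribution ($\cO(KT\exp(-T^\gamma \eta/M))$) delivers the stated bound. The main subtlety I anticipate is the careful accounting of the factor $M$ in the acquisition failure probability (each random candidate consumes $M$ rounds of verification rather than a single round) and the substitution $M \le K C_\cO$ used to re-express the $K^2 C_\cO$ dependence of Theorem \ref{thm:regret} as $KM$; both are bookkeeping rather than conceptual hurdles, but one must be careful not to double-count the verification rounds versus the sampling rounds in the exponent of the failure tail.
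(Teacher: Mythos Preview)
The paper states this corollary without proof, so there is no original argument to compare against; your decomposition into an acquisition phase, a success event for the oracle, and an application of Theorem~\ref{thm:regret} is exactly the intended derivation and is correct in substance.

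Two small bookkeeping points are worth tightening. First, on the failure event $\cE^c$ the per-round suboptimality is at most $2(B_S+B_u)$, so the conditional regret is $\cO(T)$, not $\cO(KT)$; the $K$ factor in $\cO(KT\exp(-T^\gamma\eta/M))$ comes from the union bound in the failure probability $\PP(\cE^c)\le K\exp(-T^\gamma\eta/M)$, not from the regret itself. Second, your explanation of the $K^2C_\cO\to KM$ replacement is slightly mis-aimed: the confidence radius in \eqref{eq:I_q} already depends on $M$ (through $\log(K\cdot 2^M t)$), and the proof of Theorem~\ref{thm:regret} in fact yields a $\tilde\cO(\varepsilon^{-2}KM\,T^{2/3})$ bound directly (see the $(B.2)$ term in \S\ref{prof: main theorem}); the $K^2C_\cO$ in the theorem \emph{statement} is obtained by loosening $M\le KC_\cO$ afterwards. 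So you are not sharpening the concentration but simply reverting to the tighter form of the proof before that substitution.
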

And also we characterize the regret for the action-informed oracle obtained by linear scoring rule.
\begin{corollary}[Regret with oracle in \Cref{exp:linear contract}]\label{cor:linear contract}
Suppose the model assumption that the marginal information
gain is strictly decaying in \Cref{exp:linear contract} holds.
By running the oracle acquisition process in \Cref{exp:linear contract} for $\cO(K\log_2(\varepsilon^{-1}))$ rounds and the OSRL-UCB algorithm for the remaining rounds, the online regret is bounded by
\begin{align*}
    \Reg(T) &= \tilde\cO\big(\varepsilon^{-2} \cdot K^2 C_\cO\cdot T^{2/3}\big)+ \cO(K\log_2(\varepsilon^{-1})), 
\end{align*}
where $\varepsilon=\epsilon u_1/4b^2$, and $\epsilon, u_1, b$ are constants defined in \Cref{exp:linear contract}.
\end{corollary}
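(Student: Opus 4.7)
The plan is to split the regret into two contributions corresponding to the two phases of the procedure, namely (i) the oracle-acquisition phase in which the principal deploys the linear scoring rules prescribed by Example~\ref{exp:linear contract}, and (ii) the main learning phase in which OSRL-UCB is run with the oracle returned by phase (i). I would first carefully verify that the output of Example~\ref{exp:linear contract} indeed satisfies Assumption~\ref{asp:oracle} with parameter $\varepsilon = \epsilon u_1/4b^2$, by revisiting the binary-search argument there: the decaying marginal information gain together with the upper bound on $(u_2-u_1)/(c_2-c_1)$ means the interval $[0,2/\epsilon]$ can be partitioned into $K$ sub-intervals, each corresponding to one best-response action, and at depth $m=\lceil \log_2(2b(b-\epsilon)/\epsilon^2)\rceil$ the midpoint $\tilde S_k=(\lambda_1^k+\lambda_2^k)u/2$ of the identified segment for action $k$ lies sufficiently far from the decision boundaries that the marginal profit gap $g(k,\tilde S_k)-g(k',\tilde S_k)$ exceeds $\epsilon u_1/4b^2$.

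For the oracle-acquisition phase, since both $\nbr{S}_\infty\le B_S$ and $\nbr{u}_\infty\le B_u$, the per-round profit is bounded by a constant, hence the contribution to the regret over the $\cO(K\log_2(\varepsilon^{-1}))$ acquisition rounds is itself $\cO(K\log_2(\varepsilon^{-1}))$, matching the second term in the claimed bound.

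For the main phase, I would apply Theorem~\ref{thm:regret} directly with the horizon $T-\cO(K\log_2(\varepsilon^{-1}))\le T$ and with the parameter $\varepsilon=\epsilon u_1/4b^2$ furnished by phase (i). This yields a regret of $\tilde\cO((B_S+B_u)B_S^2\varepsilon^{-2}\cdot K^2 C_\cO \cdot T^{2/3})$, which after absorbing the bounded constants $B_S,B_u$ into the $\tilde\cO$ notation gives the first term of the claim. Adding the two contributions completes the proof.

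The main obstacle I anticipate is the verification step of Assumption~\ref{asp:oracle}, specifically translating the geometric statement that $\tilde S_k$ lies at the midpoint of a sub-interval of length at least $\varepsilon^2/(2b(b-\epsilon))$ in the $\lambda$-axis into the required marginal profit gap $\varepsilon = \epsilon u_1/4b^2$ in the agent's utility function $g$. This requires relating displacements along the $\lambda$ direction to differences in $g(k,\cdot)-g(k',\cdot)$ through the slope of the agent's expected utility, which is controlled by $u_1$ from below and by $b$ from above owing to the ordering and strict-decay assumptions; the remaining steps are routine book-keeping and an invocation of Theorem~\ref{thm:regret}.
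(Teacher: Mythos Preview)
Your proposal is correct and takes essentially the same approach as the paper. The paper does not give a separate proof of this corollary; it is presented as an immediate consequence of Theorem~\ref{thm:regret} together with the guarantees of Example~\ref{exp:linear contract}, and your two-phase decomposition (bounded per-round regret over the $\cO(K\log_2(\varepsilon^{-1}))$ oracle-acquisition rounds, then direct invocation of Theorem~\ref{thm:regret}) is exactly this. The ``main obstacle'' you flag---translating the $\lambda$-axis geometry into the marginal profit gap $\varepsilon=\epsilon u_1/4b^2$---is precisely what the paper establishes in its appendix proof of Example~\ref{exp:linear contract}, so you are right that this is the only non-bookkeeping step.
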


Corollaries \Cref{cor:random sampling} and \Cref{cor:linear contract} both provide regret bound without any using of oracle. Specifically, \Cref{cor:random sampling} considers a more general framework under the assumption of lower bounded action section volume while \Cref{cor:linear contract} assumes marginal information decay, which is commonly seen in real world practice. Specifically, \Cref{cor:random sampling} shows that by random sampling for $T^\gamma$ rounds where $0<\gamma<2/3$, it suffices for the principal to have $\tilde \cO(T^{2/3})$ regret. In addition, $\gamma$ can be significantly small since the second term diminishes exponentially on $T^\gamma$. 
In addition, \Cref{cor:linear contract} shows that running constant number of additional rounds in the oracle acquisition process does not deteriorate the regret bound.

Following the discussion in \citet{jin2018q}, our algorithm also has PAC guarantee as the following.
\begin{corollary}[PAC guarantee]
For every $\zeta>0$, the OSRL-UCB algorithm with action informed oracle finds a $\zeta$-optimal scoring rule using $\tilde\cO(\varepsilon^{-6} K^6C_\cO^3\zeta^{-3})$ samples.
\end{corollary}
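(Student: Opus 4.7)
The plan is to apply a standard online-to-PAC reduction to the sublinear regret bound established in Theorem~\ref{thm:regret}. By the definition of regret together with the data-generating process in~\eqref{eq:data generating}, for any optimal scoring rule $S^* \in \argmax_{S\in\cS} h(S)$ we have $\Reg(T) = T\cdot h(S^*) - \EE_\pi[\sum_{t=1}^T h(S_t)]$. Dividing both sides by $T$ shows that the average expected suboptimality of the sequence of deployed scoring rules satisfies
\[
h(S^*) - \EE_\pi\!\left[\frac{1}{T}\sum_{t=1}^T h(S_t)\right] \le \frac{\Reg(T)}{T}.
\]

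Next, I would define the output rule $\hat S$ as a scoring rule $S_\tau$ with $\tau$ drawn uniformly at random from $[T]$. The inequality above immediately yields $\EE[h(S^*) - h(\hat S)] \le \Reg(T)/T$. Substituting the regret bound from Theorem~\ref{thm:regret}, namely $\Reg(T) = \tilde\cO((B_S+B_u)B_S^2\varepsilon^{-2} K^2 C_\cO \cdot T^{2/3})$, and treating $B_S, B_u$ as absolute constants gives $\EE[h(S^*) - h(\hat S)] = \tilde\cO(K^2 C_\cO \varepsilon^{-2} T^{-1/3})$. Enforcing this quantity to be at most $\zeta$ and solving for $T$ produces $T = \tilde\cO(K^6 C_\cO^3 \varepsilon^{-6} \zeta^{-3})$, which matches the claimed sample complexity.

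The main subtlety I expect is converting the in-expectation guarantee into a high-probability PAC statement on the returned scoring rule. Because $h(S)$ is bounded in magnitude by $B_S+B_u$ and $h(S^*)-h(S_t)\ge 0$ for every $t$, one can either apply Markov's inequality to a run whose expected gap is $\zeta/2$ and amplify via $O(\log(1/\delta))$ independent repetitions, or insert a validation phase that empirically estimates $h(S_t)$ over a held-out block of rounds for each candidate $t$ and returns the best one (using Hoeffding's inequality over the bounded payoffs). Both approaches add only a factor logarithmic in $1/\delta$, absorbed into the $\tilde\cO$. This is the same regret-to-PAC blueprint used by~\citet{jin2018q}, which the paper cites immediately before the corollary, and beyond this standard bookkeeping the heavy lifting has already been carried out in Theorem~\ref{thm:regret}, so I do not anticipate further technical obstacles.
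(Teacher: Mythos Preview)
Your proposal is correct and follows essentially the same approach the paper intends: the paper does not give an explicit proof of this corollary but simply cites \citet{jin2018q} for the standard online-to-PAC conversion, which is exactly the regret-to-sample-complexity reduction you spell out. Your arithmetic matches---dividing the $\tilde\cO(\varepsilon^{-2}K^2C_\cO T^{2/3})$ regret by $T$ and solving $\tilde\cO(\varepsilon^{-2}K^2C_\cO T^{-1/3})\le\zeta$ yields the stated $\tilde\cO(\varepsilon^{-6}K^6C_\cO^3\zeta^{-3})$---and your discussion of the in-expectation-to-high-probability step is more careful than anything the paper writes down.
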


\vspace{-1mm}
\citet{zhu2022sample} provides an $\cO(T^{2/3})$ regret lower bound for the online learning problem towards the optimal contract. Despite that standard contract design is a special case of our model, their setting is different from ours in that the agent may have possibly infinitely many actions. So the regret lower bound of our problem remains an open question. To close this gap, we believe the key question remains to be answered is whether the decision boundary of $\cV_{k^*}$ can be determined efficiently. That is, even if the best action $k^*$ is known, the learner is still unable to solve the optimal scoring rule from $\OptLP_{k^*}$ without enough knowledge about $\cV_{k^*}$. For now, we are able to construct a class of instances where such boundary of $\cV_{k^*}$ can determine with binary search and thus avoid the costly learning of $q_k$ for every $k\in[K]$.  However, it still remains unclear if these efficient search techniques can possibly be generalized to arbitrary instances --- a definitive answer should close up the regret lower and upper bound of this problem. We leave this intriguing direction to  future work.
\vspace{-2mm}
\section{Conclusion}
\vspace{-1mm}
We study the problem of incentivizing information acquisition through proper scoring rules under the principal-agent framework with information asymmetry. We propose the OSRL-UCB algorithm and show that with a mild oracle assumption, it  achieves  a $\cO(K^2 C_\cO T^{2/3})$ sublinear regret. Future direction includes establishing regret lower bound  and extensions to the contextual and dynamic settings.

\newpage
\bibliographystyle{econ}
\bibliography{ref.bib}

\newpage
\appendix

\section{Contract Design as a Special Case of Scoring Rule Design}\label{sec:related to contract}
In this section, we compare the contract design framework to the scoring rule design framework and reduces the standard contract designing problem to a special case of the scoring rule designing problem.
In a contract designing problem, we refer to $\Omega$ as the outcome space. 
Consider the following contract designing problem:
\begin{mdframed}[style=box]
\textbf{Contract designing problem in the principal-agent framework}

\vspace{5pt}
\noindent
At the $t$-th round, the principal and the agent play as the following:
\begin{itemize}[noitemsep, topsep=3pt]
    \item[1.] The principal announces a contract $C_t:\Omega\rightarrow \RR_+$ to the agent.
    \item[2.] Based on $C_t$, the agent chooses an action $b_{k_t}\in\cB$ indexed by $k_t$ and bears a cost $c_{k_t}\ge 0$. The action $b_{k_t}$ can be observed by the principal, but the cost $c_{k_t}$ is private to the agent.
    \item[3.] The stochastic environment then selects an outcome $\omega_t\in\Omega$ according to $p(\omega_t\given b_{k_t})$. The outcome $\omega_t$ is revealed as observation, but the generating process $p(\omega_t\given b_{k_t})$ is private to the agent.
    \item[4.] The principal makes a decision $a_t\in\cA$ based on $\omega_t$.
    \item[5.] In the end, the principal obtains her utility $u(a_t, \omega_t)$ and pays the agent by $C_t( \omega_t)$.
\end{itemize}
\end{mdframed}
The difference between this contract designing problem and the scoring rule designing problem is that $\omega_t$ is revealing, and the agent's action influences the principal's utility only through her action choice without giving any report. 
We remark that we can replace $u(a_t, \omega_t)$ by $u(\omega_t)=u(a^*(\omega_t), \omega_t)$ if the principal knows about the utility function and always takes the best action. In this contract design problem, the agent has an action policy $\pi:\cC\rightarrow [K]$, where $\cC$ is the contract space.
The principal targets at designing the optimal contract that maximizes her profit, i.e., utility minus payment, subject to the agent's best response given by maximizing the agent's profit, i.e., payment minus cost. The Stackelberg game for this contract designing problem can be formulated as
\begin{equation}
\begin{aligned}\label{eq:stackelberg-contract}
&\max_{C\in\cC}\quad \EE_{\omega\sim p(\cdot\given b_{\pi^*(C)})} \sbr{u(\omega) - C(\omega)},\\
&\mathrm{s.t.}\quad \pi^*(C)\in\argmax_{k\in[K]} \EE_{\omega\sim p(\cdot\given b_k)} C\rbr{\omega} - c_{k},
\end{aligned}
\end{equation}


In the sequel, we aim to show in the scoring rule designing problem:
(i) If the hidden state is perfectly revealing, i.e., $o_t=\omega_t$ as the agent's observation after taking her action, there exists a class of scoring rules such that the above contract designing problem is equivalent to the scoring rule designing problem. 
(ii) Using proper scoring rules, the principal's optimal profit under the scoring rule framework is no less than the optimal profit under the contract framework.

To show (i), consider the scoring rule class 
$$
\sS^C=\cbr{S\in\sS\given S(\hat\sigma, \omega)=\ind(\hat\sigma=e_\omega)\cdot C(\omega), \forall C\in\cC}, 
$$
where $e_{\omega'}(\omega)=\ind(\omega=\omega')\in\Delta(\Omega)$.
Even though $S\in\sS^C$ might not be a proper scoring rule, the agent will always be truth-telling, i.e., $\hat\sigma=e_\omega$, since only by telling the truth can she gains nonzero payment.
Therefore, this hidden state $\omega_t$ is also revealed to the principal through the agent's report.
The Stackelberg game in \eqref{eq:stackelberg-2} in the scoring rule problem can therefore be written as,
\begin{equation}
\begin{aligned}\label{eq:stackelberg-scoring}
&\max_{S\in\sS^C}\quad \EE_{\omega\sim p(\omega\given b_{k^*(S)})} \sbr{u(\omega) - S(e_\omega, \omega)},\\
&\mathrm{s.t.}\quad k^*(S)\in\argmax_{k\in[K]} \EE_{\omega\sim p(\cdot\given b_{k})} S\rbr{e_\omega, \omega} - c_{k},
\end{aligned}
\end{equation}
By noting that $S(e_\omega, \omega)=C(\omega)$, we have that \eqref{eq:stackelberg-scoring} and \eqref{eq:stackelberg-contract} are actually the same problem.
We thus conclude that the contract designing problem is perfectly reduced to this scoring rule designing problem.

We also remark that even if the hidden state is perfectly revealing, the principal need not be aware in advance. By sticking to a proper scoring rule, the agent always tells the truth. Moreover, using the revelation principle stated in \Cref{lem:revelation}, for any $S\in\sS^C$, there always exists a proper scoring rule $\tilde S\in\cS$ that generates the same expected payment $\EE_{\omega\sim p(\cdot\given b_k)} \tilde S(e_\omega, \omega)=\EE_{\omega\sim p(\cdot\given b_k)} C(\omega)$ for the agent, even though $\tilde S(e_\omega, \omega)$ might not be equal to $C(\omega)$ pointwise. Therefore, statement (ii) is also justified and we conclude that the (proper) scoring rule framework has more power than the contract framework by asking one more question about the agent's belief.

\section{More Details on the Revelation Principle}\label{sec:proper scoring rule}
In this section, we provide a formal argument on the revelation principle in our model. That is, it is without loss of generality to only design the proper scoring rules under which the agent is encouraged to be truth-telling. 
\begin{definition}[Proper scoring rule]
    A scoring rule $S:\Delta(\Omega)\times\Omega\rightarrow \RR_+$ is proper if, for any belief $\sigma\in\Delta(\Omega)$ and any reported posterior $\hat\sigma\in\Delta(\Omega)$,  we have $\EE_{\omega\sim \sigma}S(\hat\sigma, \omega)\le \EE_{\omega\sim \sigma}S(\sigma, \omega)$. In addition, if the inequality holds strictly for any $\hat\sigma\neq \sigma$, the scoring rule $S$ is strictly proper. 
\end{definition}
Let $S$ be a proper scoring rule and fix the agent's policy $\mu(\cdot)$ for action selection. For a reporting scheme $\nu$ and any true belief $\sigma$, it follows from definition \ref{def:PSR} that
\begin{align*}
    g^{\mu,\nu}(S)= \EE_{\omega\sim\sigma} S(\nu(S, \sigma, \mu(S)), \omega) - c_{\mu(S)}\le \EE_{\omega\sim\sigma} S(\sigma, \omega) - c_{\mu(S)}.
\end{align*}
Therefore, the agent's expected payment is maximized by always being truth-telling about her belief under the class of proper scoring rule.
In the following, we let $S(\hat\sigma, \sigma)=\EE_{\omega\sim\sigma}S(\hat\sigma, \omega)$. 
To give an example of proper scoring rules, let us consider the binary hidden state space $\Omega=\{0, 1\}$ where the class of proper scoring rules admits the Schervish representation \citep{gneiting2007strictly}, i.e., $S(p, 1)=G(p) + (1-p) G'(p)$ and $S(p, 0)=G(p)-p G'(p)$ where $p\in[0, 1]$ and $G:[0, 1]\rightarrow\RR_+$ is a convex function.
Intuitively, the expected payment of a proper scoring rule $S$ given belief $\sigma$ and report $p$ is $S(p, \sigma)=G(p)+(\sigma-p)G'(p)$, which corresponds to the supporting line of $G$ at $p$.
In this example, the convexity of $G$ guarantees that $S(p, \sigma)=G(p)+(\sigma-p)G'(p)\le G(\sigma)=S(\sigma, \sigma)$.
Moreover, the next observation in Lemma \ref{lem:revelation} suggests that restricting to the class of proper scoring rules does not incur any loss of generality for the principal's purpose.
\begin{lemma}[Restatement of the revelation principle]
    There exists a proper scoring rule $S^*$ that is an optimal solution to \eqref{eq:stackelberg-1-concise} if the agent is truth-telling under any proper scoring rule.
    \begin{proof}
        We first prove that for any scoring rule $S$ such that $\nbr{S}_\infty\le B_S$, there always exists a \emph{proper} scoring rule $S'(\hat\sigma, \omega)=S(\nu^*(S, \hat\sigma, k), \omega)$ such that they make the same payment to the agent for any $\sigma\in\Delta(\Omega)$ and any agent's action choice. To prove that $S'$ is a proper scoring rule with norm bounded by $B_S$, we have $B_S\ge S(\nu^*(S, \hat\sigma, k), \omega)=S'(\hat\sigma, \omega)\ge 0$ and $$S'(\hat\sigma, \sigma)= S(\nu^*(S, \hat\sigma, k), \sigma)\le S(\nu^*(S, \sigma, k), \sigma)=S'(\sigma, \sigma).$$
        The fact that $S'$ makes the same payment can be verified by plugging in $\hat\sigma=\sigma$ in the definition of $S'$ since the agent is truth-telling under proper scoring rules, and taking expectation with respect to $\omega\sim\sigma$, i.e., $S'(\sigma, \sigma)=S(\nu^*(S, \sigma, k), \sigma)$, which proves the first part.
        
        Secondly, we prove that encouraging the agent to report the real belief $\sigma$ makes the principal's revenue nondecreasing. Note that
        \begin{align*}
            \max_{\iota} \EE_{\omega\sim\sigma, \sigma\sim q_k}\sbr{u(\iota(S, \nu^*(S, \sigma, k), k), \omega)} \le  \max_{\iota} \EE_{\omega\sim\sigma, \sigma\sim q_k}\sbr{u(\iota(S, \sigma, k), \omega)}=\EE_{\omega\sim\sigma, \sigma\sim q_k}\sbr{u(a^*(\sigma), \omega)},
        \end{align*}
        where $a^*(\sigma)=\argmax_{a\in\cA}\EE_{\omega\sim\sigma} u(a, \omega)$.
        Here, the inequality holds by noting that $\nu^*(S, \sigma, k)$ is a function of $\sigma$, and the equality holds by noting that $\omega\indep (S,k)\given \sigma$.
        To conclude, by choosing $S'$ instead of $S$, the payment is exactly the same while the principal's revenue is nondecreasing. Thus, the principal's profit is nondecreasing by choosing $S'$ and there must exist a proper scoring rule that is also an optimal scoring rule. 
    \end{proof}
\end{lemma}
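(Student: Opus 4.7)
The plan is to follow the classical revelation principle construction: given any scoring rule $S\in\sS$ together with the agent's best response $(\mu^*,\nu^*)$ and the principal's optimal decision policy $\iota^*$, I will construct a proper scoring rule $S'$ that (i) induces truthful reporting, (ii) preserves the payment stream to the agent (so the agent's action choice is unchanged), and (iii) weakly increases the principal's expected utility. Combined with $\nu^*(S,\hat\sigma,k)=\hat\sigma$ under a proper rule, this will show that any candidate optimum in \eqref{eq:stackelberg-1-concise} can be matched by a proper scoring rule, yielding the lemma.

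The concrete construction is to let $k=\mu^*(S)$ and define $S'(\hat\sigma,\omega)\defeq S\bigl(\nu^*(S,\hat\sigma,k),\omega\bigr)$. First I verify $S'\in\sS$: pointwise $0\le S'(\hat\sigma,\omega)\le B_S$ since $S$ is already in $\sS$ and the construction only composes $S$ with a measurable map on the first argument. Next I verify propriety. For any true belief $\sigma$ and any report $\hat\sigma$,
\begin{align*}
\EE_{\omega\sim\sigma}S'(\hat\sigma,\omega)
=\EE_{\omega\sim\sigma}S\bigl(\nu^*(S,\hat\sigma,k),\omega\bigr)
\le \EE_{\omega\sim\sigma}S\bigl(\nu^*(S,\sigma,k),\omega\bigr)
=\EE_{\omega\sim\sigma}S'(\sigma,\omega),
\end{align*}
where the inequality holds because $\nu^*(S,\sigma,k)$ is by definition an optimal report for the agent with belief $\sigma$ under scoring rule $S$, so in particular it dominates the choice $\nu^*(S,\hat\sigma,k)$. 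Thus $S'$ is proper, and under $S'$ the agent optimally reports $\sigma$ truthfully.

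It remains to check (ii) and (iii). For (ii), under $S'$ with truthful reporting, the agent's expected payment when playing action $k$ equals $\EE_{\sigma\sim q_k}\EE_{\omega\sim\sigma}S'(\sigma,\omega)=\EE_{\sigma\sim q_k}\EE_{\omega\sim\sigma}S(\nu^*(S,\sigma,k),\omega)$, which is exactly her expected payment under $S$ with her optimal reporting $\nu^*$. Since the cost $c_k$ is independent of the scoring rule and the payment at every action $k\in[K]$ is preserved, the best-response action of the agent is unchanged; in particular she still picks $\mu^*(S)$. For (iii), let the principal under $S'$ adopt the decision policy $\iota'(S',\sigma,k)\defeq \iota^*\bigl(S,\nu^*(S,\sigma,k),k\bigr)$. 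Then her expected utility is literally identical to that under $(S,\iota^*)$. Moreover, because the principal under $S'$ actually observes the true $\sigma$ (by truthful reporting), she can further improve by choosing $\iota'(S',\sigma,k)=a^*(\sigma)\in\argmax_{a}\EE_{\omega\sim\sigma}u(a,\omega)$, which weakly dominates any policy that only uses $\nu^*(S,\sigma,k)$, since $\omega\perp (S,k)\mid\sigma$.

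The main subtlety — and the only place care is needed — is making sure the comparison between $S$ and $S'$ is apples-to-apples: the principal's revenue under $(S,\iota^*)$ is computed using the possibly non-truthful report $\nu^*(S,\sigma,k)$, while under $(S',\iota')$ the principal directly uses $\sigma$. The argument above shows that the policy $\iota'$ can always imitate $\iota^*$ on reports $\nu^*(S,\sigma,k)$, so any loss from truthful reporting is zero, while the extra freedom to condition on $\sigma$ yields a weak gain. Taking $S^*$ to be the $S'$ corresponding to an optimal $S$ gives a proper optimal scoring rule, completing the sketch. I expect no further obstacles beyond correctly defining $S'$ on the entire domain $\Delta(\Omega)\times\Omega$ (not just on the image of $\nu^*$), but this is handled automatically since $\nu^*(S,\cdot,k)$ is defined for every reported $\hat\sigma\in\Delta(\Omega)$.
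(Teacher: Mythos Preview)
Your proposal is correct and follows essentially the same approach as the paper: the construction $S'(\hat\sigma,\omega)=S(\nu^*(S,\hat\sigma,k),\omega)$, the propriety check via optimality of $\nu^*$, and the weak improvement of the principal's utility through access to the true $\sigma$ are exactly the paper's argument. If anything, your write-up is slightly more careful in explicitly verifying that the agent's best-response action is unchanged under $S'$ (a point the paper leaves implicit).
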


Following Lemma \ref{lem:revelation}, the principal's optimal scoring rule lies within the class of proper scoring rules $\cS$ with bounded norm $\nbr{S}_\infty\le B_S$.
One concern about the use of proper scoring rules is that being truth-telling might not be the unique maximizer to the agent's utility. However, we note that the class of proper scoring rules is a convex hull with strictly proper scoring rules as the interior. Thus, adding an infinitesimal portion of a \emph{strictly proper} scoring rule to any \emph{proper} scoring rule always yields a \emph{strictly proper} scoring rule. In this sense, we can safely make the assumption that the agent always reports the true posterior under a proper scoring rule.
The following table summarizes all the different information types in our model.

\section{More Details on the Binary Search Algorithm}
In this section, we give a summary of the binary seach algorithm as follows, 
\begin{figure}
\makebox[\linewidth]{%
\begin{minipage}{\linewidth}
\begin{algorithm}[H]
\begin{algorithmic}[1]
\STATE {\bfseries Input:} $S_0, S_1, k^*(S_1), t, k_t, \{I_q^t(k)\}_{k\in[K]}$;
\STATE {\bfseries Output:} $t$;
\IF{$k_t = k^*(S_1)$}
    \STATE Break the binary search algorithm;
\ENDIF
\STATE Initiate $\lambda_{\min}\leftarrow 0, \lambda_{\max}\leftarrow 1, t_0\leftarrow t$;
\WHILE{$\lambda_{\max}-\lambda_{\min} \ge I_q^{t_0}(k_{t})\land I_q^{t_0}(k^*(S_1))$}
\STATE Start a new round $t\leftarrow t+1$;
\STATE Pick $\lambda \leftarrow \rbr{\lambda_{\min}+\lambda_{\max}}/2$ as the middle point; 
\STATE The principal announces scoring  rule $S_{t} = (1-\lambda) S_0 + \lambda S_1$ and obtain the agent's response $k_{t}$;
\IF{$k_{t}=k^*(S_1)$}
    \STATE Update $\lambda_{\max} \leftarrow \lambda$;
\ELSE
    \STATE Update $\lambda_{\min}\leftarrow \lambda$;
\ENDIF
\ENDWHILE
\end{algorithmic}
\caption{Binary Search {BS}($S_0, S_1, k^*(S_1)$, $t$) }\label{alg:BS}
\end{algorithm}
\end{minipage}}
\end{figure}
The binary searching algorithm at step $t$ works on the segment connecting two scoring rules $S_0$ and $S_1$, where the agent's best response under $S_1$ is $k^*(S_1)$. The goal of this binary search is to find the first switching point of the agent's best response from $k_1$ to another action on this segment. 
The binary searching algorithm keeps updating on $\lambda_{\max}$ and $\lambda_{\min}$ as the candidate interval that contains the first switching point. Note that any $\lambda\in[0, 1]$ corresponds to a scoring rule on the segment connecting $S_0, S_1$. Each time, the algorithm deploy a scoring rule corresponding to $\rbr{\lambda_{\min}+\lambda_{\max}}/2$ and the candidate interval is thus cut by half.
In addition, we consider binary searching to a finite depth $m$ such that the searching error respects the minimal uncertainty of $\hat q_{k'}^t, \hat q_{k_t^*}^t$, i.e., 
$2^{-m}\le I_{q}^{t}(k') \land I_{q}^{t}(k_t^*)$ in Algorithm \ref{alg:BS}. Thus, the binary search achieves sufficient accuracy with logarithmic searching time.

\section{Proofs on Action-informed Oracle in \Cref{sec:oracle}}\label{sec:prof oracle}
In this section, we prove the claims in examples in \Cref{sec:oracle}.

\subsection{Proof of \Cref{lem:impossible}}\label{prof:impossible}
Without loss of generality, we just need to construct a hard instance by considering the case $K=3$ and $M=3$. 
To simplify the discussion, we consider two hidden states $\Omega=\{\omega_1, \omega_2\}$ and consider $B_S=1$ as the boxing condition for the scoring rules.
The idea of constructing the hard instance is as follows:
\begin{itemize}
\item[(i)] Make sure that action $a_2$ is the optimal agent's response for the optimal scoring rule while any scoring rule not inducing the agent to chose $a_2$ yields a constant regret at least $1$. 
\item[(ii)] Let $\mathcal{V}_2$ be a "single point" on the boundary of $\mathcal{S}$ and show that with a great chance, any algorithm can fail to find the correct scoring rule located in $\mathcal{V}_2$ without the oracle.
\end{itemize}
Since $\Omega=\{\omega_1, \omega_2\}$, we can equally  represent $\sigma_1, \sigma_2, \sigma_3$ by the their mass assigned to $\omega_1$. With a little abuse of notation, we let $\sigma_i$ denote $\sigma_i(\omega_1)$ and $1-\sigma_i$ denote $\sigma_i(\omega_2)$. 
We consider the case $\Sigma=\{\sigma_1=(0, 1), \sigma_2=(.5, .5), \sigma_3=(1,0)\}$ for $M=3$. 
For simplicity, we let $S_i = \mathbb{E}_\omega S(\sigma_i, \omega)$ for $i=1, 2, 3$.
Hence, the set of proper scoring rules is specified by conditions:
\begin{equation}\label{cond:proper-1}
0\le S_2\le \frac{S_1+S_3}{2}, \quad B_S\ge S_1\ge 0, \quad B_S \ge S_3\ge 0.
\end{equation}
We remark that the first condition guarantees that the scoring rule is proper by requiring the curve of $G(\sigma) = \EE_\omega[S(\sigma, \omega)]$ to be convex \citep{gneiting2007strictly}, and the rest are just box conditions.
In fact, one can easily construct a proper scoring rule $S(\sigma, \omega)$ for any $S=(S_1, S_2, S_3)$ satisfying the above constraints using the Schervish representation \citep{gneiting2007strictly} by fitting a convex function $G$ defined on $[0, 1]$ passing through $(0, S_1), (.5, S_2), (1, S_3)$,  and consider the supporting hyperplanes of $G$ at these three points.

To ensure that $\mathcal{V}_2$ is a single point on the boundary in (ii), we let $q_2- q_1$ and $q_2-q_3$ be
$$q_2-q_1 = \beta \cdot (0, -1, 1)^\top, \quad q_2-q_3=\beta \cdot (1, 1, -2)^\top, $$
where $\beta$ should be considered a fixed constant and $q_1, q_2, a_3$ are what we want to design. These conditions imply the following expression for $\mathcal{V}_2$,
$$
\cV_2 = \{S\in \mathcal{S}\,|\,(0, -1, 1) S \ge \beta^{-1}(c_2-c_1), \quad (1, 1, -2)S\ge \beta^{-1}(c_2-c_3)\}. 
$$
We further let $e_1 = \beta^{-1}(c_2-c_1)$ and $e_2 = \beta^{-1}(c_2-c_3)$. One can easily verify that as long as $e_1+e_2 = 1$ and $-1/2\le e_1\le 0$,
$\mathcal{V}_2$ shrinks to a single point given by $S^*=(1, -e_1, 0)$ and the conditions in Equation \eqref{cond:proper-1} are always satisfied. Hence, we can safely restrict ourselves to the instances specified by parameter $e_1$ and $c_1, c_2, c_3$ can be adjusted accordingly. 
Hence, we are actually considering a single parameter linear system.

To ensure (i), i.e., the optimality of $a_2$, we first need to ensure that $a_2$ beats $a_1$ and $a_2$ at $S^*$ by some constant $\gamma\ge 1$:
\begin{equation}\label{cond:optimality-2}\begin{aligned} 
\langle q_2 - q_1, u - S^*\rangle&\ge \gamma,\\ 
\langle q_2 - q_3, u - S^*\rangle&\ge \gamma, 
\end{aligned}\end{equation}

where $S^*\in \mathcal{V}_2$. Note that since $q_2 - q_1$ does not lie on the same line with $q_2-q_3$, we can always choose $u$ such that \eqref{cond:optimality-2} holds regardless of what $S^*$ is. For any $S\in\mathcal{V}_1$, we can verify that for the principal's profit, 
$$
\langle q_2, u-S^*\rangle - \langle q_1, u-S\rangle = \langle q_2 -q_1, u-S^*\rangle + \langle q_1, S- S^*\rangle \ge \gamma - 1, 
$$
where the last inequality is a direct result of \eqref{cond:optimality-2} and the fact that each entry of $S$ should be within $[0, 1]$. 
The same holds for $S\in\mathcal{V}_3$. 
Hence, it suffices to find an instance satisfying \eqref{cond:optimality-2} with $\gamma=2$. 

Construction of the hard instance:
Following the above discussion, we construct the instance by letting $\gamma=2$, $\beta = 1/16$, and 
\begin{equation}\begin{aligned}
q_1 &= (3/4, 3/16, 1/16), \nonumber\\
q_2 &=(3/4, 1/8, 1/8), \nonumber\\
q_3 &=(11/16, 1/16, 1/4) \nonumber\\
\nonumber u& = (96, 0, 32) + (1, -e_1, 0).\\
\end{aligned}\end{equation}
Here, the only thing unknown is the cost $c_1, c_2, c_3$, where the effect of the unknown cost is encoded in $e_1\in[-1/2, 0]$, which only affects the position of $S^*$.

Since we are considering a one parameter linear system with the second coordinate of $S^*$ undetermined, it is without loss of generality to consider finding the optimal scoring rule on the line $l=(1, \lambda, 0)$ where $0\le \lambda\le 1/2$. Suppose we assign scoring rule $S^t=(1, b^t, 0)$ at step $t$ and the agent responses $a^t\in\mathcal{A}=\{a_1, a_2,a_3\}$. In fact, for any $b^t< e_1$, the agent will response $a_1$ and for any $b^t> e_1$, the agent will response $a_3$. 
Then, we assert that any online strategy of the principal can result in linear regret since no algorithm can decide the exact position of $e_1$ under the feedback of the agent's actions only (Perhaps the best one can do is binary searching for $e_1$, which does not help either). 
In addition, any $b^t\neq e_1$ yields a suboptimality at least 1 at each round. 
Moreover, we remark that letting $\cV_2$ shrink to a point still makes $S^*$ the optimal scoring rule by the tie-breaking rule which is in favor of the principal.
Nevertheless, one can always make $\cV_2$ an interval around $e_1$ as small as possible.
Hence, we conclude that for any algorithm, there always exists a hard instance that yield a linear regret $\text{Reg}\ge \Omega(T)$.
We finish the proof of \Cref{lem:impossible}.

\subsection{Proof of \Cref{exp:random sampling}}\label{sec:random sampling proof}

To ensure the scoring rules we sample are $\epsilon$ better on the induced action, we sample from the strongly proper scoring rules.

\begin{definition}[Strongly proper scoring rules]\label{def:strongly proper}
    A scoring rule $S:\Delta(\Omega)\times\Omega\to\RR $ is $\beta$-strongly proper if for all  $p, q\in \Delta(\Omega)$, $\EE_{\omega\sim q}[S(q, \omega)]-\EE_{\omega\sim q}[S(p, \omega)]\geq \frac{\beta}{2}\nbr{q-p}^2_1$.
\end{definition}

\begin{example}[Restatement of \Cref{exp:random sampling}]
Let  $d_1=\min_{1\le i< j\le M}\nbr{\sigma_i-\sigma_j}_\infty$ and $ d_2=\min_{1\le k<k'\le K}\allowbreak\max_{i\in[M]}\sbr{q_k(i)-q_{k'}(i)}$.  
Since the proper scoring rule class $\cS$ has bounded norm, the $\beta$-strongly proper scoring class also has bounded volume $\vol(\cS_\beta)<\infty$. 
Set $\kappa={d_1^2\beta}/{2}$ and
let $\tilde\cV_k=\{S\in\cS_\beta\given g(k, S)\ge g(k', S)+\kappa, \forall k'\neq k, k'\in[K]\}$. We  suppose $\vol(\tilde\cV_k)\ge \eta \Vol(\cS_\beta)$ for $k\in[K]$. 
Initiate $\cM=\emptyset$ as the candidate set.
Each time, we randomly sample a $\beta$-strongly proper scoring rule $S\in\cS_\beta$ and obtain the agent's best response $k^*(S)$ with respect to $S$. 
Let $e_i(\sigma, \omega)=\ind(\sigma=\sigma_i)$ for $i\in[M]$. By setting $\kappa=d_1^2\cdot\beta/2$, we deploy $\cbr{S-\kappa e_i}_{i\in[M]}$ and see if the agent still responds $k^*(S)$. If so, Let $\sS=\sS\cup \cbr{S}$; if not, reject $S$. 
    After $\cO(M\eta^{-1} K\log K)$ rounds, with high probability, $\sS$  serves as a valid action-informed oracle  with parameter $\epsilon=d_2\cdot \kappa$.
\end{example}


Note that the volumn of $\tilde\cV_k$ is continuous,  decreasing in $\beta$, and goes to $\vol(\tilde\cV_k)$ as $\beta\to 0$. Hence, if $\cV_k$ has a constant fraction of total volumn, we can find $\beta$ such that $\tilde \cV_k$ has a constant fraction of total volumn.

\begin{proof}

First, if $\kappa=d_1^2\cdot\beta/2$, $\cbr{S-\epsilon e_i}_{i\in[M]}$ is also proper since $S$ is $\beta$-strongly proper.

Next, let $d_2=\min_{k'}\max_{i}\sbr{q_k(i)-q_{k'}(i)}$. we show that any each time the probability of hitting a scoring rule in $\tilde\cV_k$ is at least $\eta$. Suppose $S$ is not rejected if for all $i\in [M]$, the agent prefers action $k$ to $k'$: $g(k, S-\kappa e_i)-g(k', S-\kappa e_i)\geq 0$. By properness of $S-\kappa e_i$, 
\begin{align*}
    g(k, S-\kappa e_i)-g(k', S-\kappa e_i)=\langle S-\kappa e_i, q_k-q_{k'}\rangle-(c_k-c_{k'})\\
    =\langle S, q_k-q_{k'}\rangle - \kappa (q_k(i)-q_{k'}(i))-(c_k-c_{k'})\geq 0.
\end{align*}
If this holds for any $i\in [M]$ and $k'\in[K]$, it means $\langle S, q_k-q_{k'}\rangle-(c_k-c_{k'})\geq  \max_{i\in[M]}\kappa (q_k(i)-q_{k'}(i))\geq\kappa d_2$. One the other hand, if $S$ is in set $\tilde \cV_k$, it never gets rejected. 

  After $\frac{c}{\eta}K\log K$ rounds, the probability that one action is not induced is $(1-\eta)^{\frac{c}{\eta}K\log K}=(\frac{1}{K})^{cK}$. Taking a union bound, the probability that any action is not induced is at most $(\frac{1}{K})^{cK-1}$. Setting $c$ large enough, this probability will be sufficiently small. To conclude, after $\cO(M\eta^{-1} K\log K)$ rounds, we get an oracle with parameter $\epsilon=d_2\cdot \kappa$.

\end{proof}

\subsection{Proof of \Cref{exp:linear contract}}
\label{sec:linear contract proof}

\begin{example}[Restatement of \Cref{exp:linear contract}]
Suppose that these $K$ signals are sorted in the increasing order of the cost. Define $u_k = \EE_{\omega\sim\sigma, \sigma\sim q_k}\sbr{u(a^*(\sigma), \omega)}$ and suppose $u_k>0$. We assume the marginal information gain is strictly decaying, i.e., there exists a $\epsilon>0$ such that 
    \begin{gather*}
        \frac{u_{K}-u_{K-1}}{c_{K}-c_{K-1}} >\epsilon, \quad \text{and}\\
        \quad
        \frac{u_k-u_{k-1}}{c_k-c_{k-1}} - \frac{u_{k+1}-u_k}{c_{k+1}-c_k} >\epsilon, \quad \forall k=2,\cdots, K-1.
    \end{gather*}
    Moreover, we assume that $(u_2-u_1)/(c_2-c_1)\le b$. The principal sets the scoring rule as $S(\sigma, \omega)=\lambda u(a^*(\sigma), \omega)$ and conducts binary search on $\lambda\in[0, 2/\epsilon]$. Specifically, the binary searches are iteratively conducted on all the segments $(\lambda_1, \lambda_2)$ with $k^*(\lambda_1 u) \neq k^*(\lambda_2 u)$, where $\lambda_1, \lambda_2$ are neighboring points on $[0, 1/\epsilon]$ that are previously searched. With the maximal searching depth $m= \ceil{\log_2(2b(b-\epsilon)/\epsilon^2)}$, we can identify all the agent's actions. Suppose that $(\lambda_1^k, \lambda_2^k)$ is the largest segment with $\lambda_1^k$ and $\lambda_2^k$ searched before and $k^*(\lambda_1^k u)=k^*(\lambda_2^k u)=k$. By letting $\tilde S_k=(\lambda_1^k + \lambda_2^k)u/2$, we obtain an oracle with $\varepsilon=\epsilon u_1/4b^2$. The procedure takes $\cO(K\log_2(\varepsilon^{-1}))$ rounds.
\end{example}

\begin{proof}
    
First, we show that by setting $\lambda=\lambda_i=\rbr{c_{i}-c_{i-1}}/\rbr{u_i-u_{i-1}}$ for any $i=2,\dots,K$, the agent is indifferent between taking actions $i$ and $i-1$.
In addition, we let $\lambda_1=0$ and $\lambda_{K+1}=2/\epsilon$.
For any $\lambda\in (\lambda_i, \lambda_{i+1})$, the agent best responds with action $b_i$. In addition, $\lambda_2=(c_2-c_1)/(u_2-u_1)\ge 1/b$ and $\lambda_K=(c_K-c_{K-1})/(u_K-u_{K-1})\le 1/\epsilon$. Thus, by searching in $\lambda\in[0, 2/\epsilon]$, all the actions are guaranteed to be induced by this linear contract.

Since the sequence $\rbr{u_i-u_{i-1}}/\rbr{c_i-c_{i-1}}$ is strictly decaying, $\lambda_i$ is strictly increasing and $\lambda_i-\lambda_{i-1}$ is lower bounded by ${1}/\rbr{b-\epsilon}-{1}/{b}={\epsilon}/{b(b-\epsilon)}$, where the lower bound is reached by $\lambda_2=1/b$ and $\lambda_3=1/(b-\epsilon)$. Combined with the fact that search is conducted on $[0, 2/\epsilon]$, we get the maximal searching depth $m= \ceil{\log_2(2b(b-\epsilon)/\epsilon^2)}$ such that the neighboring searching points has a gap no more than $\epsilon/2b(b-\epsilon)$. Therefore, using the fact that $\lambda_i-\lambda_{i-1}\ge \epsilon/b(b-\epsilon)$, we can guarantee that all the actions are induced by this binary search.

For the chosen $\lambda_1^k$ and $\lambda_2^k$ such that $k^*(\lambda_1^k u)=k^*(\lambda_2^k u)=k$, we can guarantee that $\lambda_{k}\le\lambda_1^k\le \lambda_{k} +\epsilon/2b(b-\epsilon)$ and $\lambda_{k+1}-\epsilon/2b(b-\epsilon)\le\lambda_2^k\le \lambda_{k+1}$. Hence, $\lambda_k+\epsilon/4b(b-\epsilon)\le (\lambda_1^k+\lambda_2^2)/2\le \lambda_{k+1}-\epsilon/4b(b-\epsilon)$.
Therefore, by setting $\lambda=(\lambda_1^k + \lambda_2^k)/2$, the marginal profit for choosing $b_k$ over other actions is lower bounded by $\epsilon/4b(b-\epsilon)\cdot u_k\ge \epsilon/4b(b-\epsilon)\cdot u_1\ge \epsilon u_1/4b^2$.
Hence, we obtain the oracle with $\varepsilon=\epsilon u_1/4b^2$.

\end{proof}

\section{Proof of \Cref{thm:regret}}\label{prof: main theorem}
\begin{proof}[Proof of \Cref{thm:regret}]
Here we give a high-level analysis of the $\tilde{\cO}(K^2 T^{-2/3})$ regret for Algorithm~\ref{alg:UCB}. We consider two types of events in $T$ rounds, $\ind(k_t=k_t^*)$ or $\ind(k_t\neq k_t^*)$. In the first type of events, by the construction of $S_t$, the suboptimality gap at each round can be decoupled into components contributed by $(1-\alpha_t)S_t^*$ or $\alpha_t \tilde S_{k^*_t}$. The cumulative regret from $(1-\alpha_t)S_t^*$ follows the optimism principle of UCB and is on the order of $T^{1/2}$. The cumulative regret from $\alpha_t \tilde S_{k^*_t}$ is on the order of $T^{2/3}$, since $\alpha_t = t^{-1/3}$ and the suboptimality gap of $S_{k^*_t}$ is constant. Meanwhile, for the second type of events, we can bound the total number of rounds such event occurs in the worst case as $\tilde{\cO}(K^2 T^{2/3})$. This requires some involved arguments, though it is intuitively the sample complexity to learn $\norm{\hat{q}_k - q_k}\leq t^{-1/3}$ for every $k\in[K]$. This corresponds to the worst case where these signals are competitive and $n_{k, t}$ grows at the same rate for any $k\in[K]$, which means that we have to precisely learn the belief distributions and costs for all the actions.
Therefore, based on the dominating term, the total regret is $\tilde{\cO}(K T^{2/3})$.

We analyze the regret by the following two cases: (i) rounds $t$ such that the agent responds with action $k_t=k_t^*$; (ii) rounds $t$ such that the agent responds with action $k_t\neq k_t^*$. Let $\subopt(k, S_t)=\max_{S\in\cS} h(S)-\EE_{\omega\sim\sigma, \sigma\sim q_k}[u(a^*(\sigma_t), \omega_t) - S_t(\sigma_t, \omega_t)]$ be the regret of implementing scoring rule $S_t$ with the agent selecting action $b_k$ in a single round. We have for the online regret that,
\begin{align}
    \Reg^\pi(T)&=\EE_\pi\sbr{\sum_{t=1}^T \rbr{\max_{S\in\cS} h(S) - \EE_{\omega_t\sim\sigma_t, \sigma_t\sim q_{k_t}}\sbr{u(a^*(\sigma_t), \omega_t) - S_t(\sigma_t, \omega_t)}}} \nonumber\\
&=\sum_{t=1}^T\underbrace{\EE\sbr{\ind(k_t=k_t^*)\subopt(k_t, S_t)}}_{\displaystyle{A_t}}+\sum_{t=1}^T\underbrace{\EE\sbr{\ind(k_t\neq k_t^*)\subopt(k_t, S_t)}}_{\displaystyle{B_t}}. \nonumber
\end{align}
Here and in the sequel, we always use $\EE$ to denote the expectation taken with respect to the data generating process described in \eqref{eq:data generating}.

\paragraph{Bounding regret $A_t$.}
Since $S_t=\alpha_t \Tilde{S}_{k_t^*}+(1-\alpha_t)S_t^*$, it follows from the linearity of $\subopt(k_t, S_t)$ that
\begin{align}
    \subopt(k_t, S_t)=\alpha_t\subopt(k_t, \Tilde{S}_{k_t^*})+(1-\alpha_t)\subopt(k_t, S_t^*).\label{eq:subopt decomp}
\end{align}
Here, the first term $\subopt(k_t, \Tilde{S}_{k_t^*})$ is bounded by constant $C_1=2(B_S+B_u)$, and the second term $\subopt(k_t, S_t^*)$ is bounded by $2\left(\nbr{S_t^*}_\infty+B_u\right) I_{q, k_t^*}^t$ with probability at least $1-1/t$ by the following lemma.
\begin{lemma}\label{lem:linear-program-bound}
Define event $\cE_t$ as $
    \cE_t=\cbr{\nbr{q_k-\hat q_k^t}_1\le I_q^t(k), \forall k\in[K]^+}.
$
Then event $\cE_t$ holds with probability at least $1-1/t$ and it holds on event $\cE_t$ that
\begin{align*}
    u_{k_t^*} - v_{S_t^*}(k_t^*) + 2\left(B_S+B_u\right) I_{q}^t(k_t^*)\ge \max_{S\in\cS} h(S),
\end{align*}
where $h(S)=\EE_{\omega\sim \sigma, \sigma\sim q_{k^*(S)}}\sbr{u(a^*(\sigma), \omega) - S(\sigma, \omega)}$ is the optimal principal's profit, $k_t^* = \argmax_{k\in[K]} h_\LP^t(k)$,  and
$S_t^* := S_{\LP, k_t^*}^t$.
\begin{proof}
    See \S\ref{prof:linear-program-bound} for a detailed proof.
\end{proof}
\end{lemma}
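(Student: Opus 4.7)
The plan is to split the proof into two self-contained pieces: first a probabilistic step establishing that the favorable event $\cE_t$ has probability at least $1-1/t$, and then a deterministic step showing the optimism inequality on that event. For the probabilistic step, I would invoke the empirical distribution concentration result (Lemma \ref{lem:confidence-interval}, borrowed from Mardia et al.) which gives $\|q_k - \hat q_k^t\|_1 \le \sqrt{2\log(K\cdot 2^M t)/n_k^t} = I_q^t(k)$ with probability at least $1 - (K\cdot t)^{-1}$ for each $k$, and then take a union bound over $k \in [K]^+$ and apply the tower property conditioning on $n_k^t$. The slight subtlety is that $n_k^t$ is a random quantity determined by the history, but the choice of $I_q^t(k)$ already absorbs the $\log t$ factor, so the standard martingale-style argument yields the claimed $1-1/t$ bound.

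For the deterministic optimism step on $\cE_t$, I would let $S^\dagger \in \argmax_{S\in\cS} h(S)$ and let $k^\dagger = k^*(S^\dagger)$ denote the induced agent action. The key claim is that $S^\dagger$ together with $v = \hat v^t_{S^\dagger}(k^\dagger)$ is a feasible solution of $\OptLP_{k^\dagger}$. The first constraint $|v - \hat v^t_{S^\dagger}(k^\dagger)| \le B_S I_q^t(k^\dagger)$ is immediate. For the second, I would use that $S^\dagger \in \cV_{k^\dagger}$ implies $\langle q_{k^\dagger} - q_i, S^\dagger\rangle \ge C(k^\dagger, i)$, combine this with $|\langle \hat q - q, S^\dagger\rangle| \le B_S \|\hat q - q\|_1 \le B_S I_q^t(\cdot)$ on $\cE_t$, and with the confidence bound $|\hat C^t(k^\dagger, i) - C(k^\dagger, i)| \le I_c^t(k^\dagger, i)$ coming from Lemma~\ref{lem:C interval} (which should itself follow from $\cE_t$). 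Feasibility then gives $h^t_{\LP}(k^\dagger) \ge \langle u, \hat q_{k^\dagger}^t\rangle + B_u I_q^t(k^\dagger) - \hat v^t_{S^\dagger}(k^\dagger)$, and a second application of the concentration bounds on $\langle u, \hat q - q\rangle$ and on $\hat v^t_{S^\dagger}(k^\dagger) - v_{S^\dagger}(k^\dagger)$ shows the right-hand side is at least $h(S^\dagger) = \max_{S\in\cS} h(S)$.

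Finally I would chain the two directions. Since $k_t^* = \argmax_k h^t_{\LP}(k)$, we have $h^t_{\LP}(k_t^*) \ge h^t_{\LP}(k^\dagger) \ge \max_{S} h(S)$. It remains to translate $h^t_{\LP}(k_t^*)$, which is defined in terms of the empirical $\hat q^t_{k_t^*}$ and the auxiliary variable $v$, back into the true quantities $u_{k_t^*}$ and $v_{S_t^*}(k_t^*)$. Using the upper bound $v \le \hat v^t_{S_t^*}(k_t^*) + B_S I_q^t(k_t^*)$ from the first constraint of $\OptLP_{k_t^*}$, and then $|\langle u, \hat q^t_{k_t^*} - q_{k_t^*}\rangle| \le B_u I_q^t(k_t^*)$ and $|\hat v^t_{S_t^*}(k_t^*) - v_{S_t^*}(k_t^*)| \le B_S I_q^t(k_t^*)$ on $\cE_t$, I obtain $h^t_{\LP}(k_t^*) \le u_{k_t^*} - v_{S_t^*}(k_t^*) + 2(B_S + B_u)I_q^t(k_t^*)$, which is exactly the claim after chaining with the previous inequality.

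The main obstacle I anticipate is handling the infeasible branch: when $\OptLP_{k^\dagger}$ is infeasible, the algorithm substitutes $h^t_{\LP}(k^\dagger) = \langle u, \hat q^t_{k^\dagger}\rangle - \EE_{\sigma \sim \hat q_{k^\dagger}}\tilde S_{k^\dagger}(\sigma) + (B_S+B_u) I_q^t(k^\dagger)$, so I must verify that under $\cE_t$ this fallback value still dominates $\max_S h(S)$, which should follow from $\tilde S_{k^\dagger}$ being known to lie in $\cV_{k^\dagger}$ (so the associated scoring rule is always a legitimate witness) plus the same concentration arithmetic. A secondary care point is keeping the bookkeeping straight so that all the confidence inflations align with the constants $2(B_S+B_u)$ appearing in the final inequality, rather than accumulating unnecessary factors along the way.
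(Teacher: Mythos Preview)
Your plan mirrors the paper's proof almost exactly: concentration via Lemma~\ref{lem:confidence-interval} plus a union bound for $\cE_t$, feasibility of the true optimum $(S^\dagger,k^\dagger)$ in $\OptLP_{k^\dagger}$, the chain $h_\LP^t(k_t^*)\ge h_\LP^t(k^\dagger)\ge h(S^\dagger)$, and finally unpacking $h_\LP^t(k_t^*)$ back into true quantities. The structure is right.

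There is one concrete slip in the feasibility step. With your witness $v=\hat v^t_{S^\dagger}(k^\dagger)$, the second constraint of $\OptLP_{k^\dagger}$ is \emph{not} verified: the chain $\langle \hat q^t_{k^\dagger}-\hat q^t_i,S^\dagger\rangle \ge C(k^\dagger,i)-B_S\bigl(I_q^t(k^\dagger)+I_q^t(i)\bigr)\ge \hat C^t(k^\dagger,i)-I_c^t(k^\dagger,i)-B_S\bigl(I_q^t(k^\dagger)+I_q^t(i)\bigr)$ leaves an extra $-B_S I_q^t(k^\dagger)$ compared to what the constraint requires. The paper instead takes $v=v_{S^\dagger}(k^\dagger)$, the \emph{true} payment. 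Then the first constraint holds by $|v_{S^\dagger}(k^\dagger)-\hat v^t_{S^\dagger}(k^\dagger)|\le B_S I_q^t(k^\dagger)$, the second constraint follows from $v_{S^\dagger}(k^\dagger)-\hat v^t_{S^\dagger}(i)\ge v_{S^\dagger}(k^\dagger)-v_{S^\dagger}(i)-B_S I_q^t(i)\ge C(k^\dagger,i)-B_S I_q^t(i)$ without the stray $I_q^t(k^\dagger)$ term, and the objective lower bound $\langle u,\hat q^t_{k^\dagger}\rangle+B_u I_q^t(k^\dagger)-v_{S^\dagger}(k^\dagger)\ge h(S^\dagger)$ comes out in one step. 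With your choice of $v$ both feasibility and the constant $2(B_S+B_u)$ in the final inequality would be off; switching to the true payment fixes everything.

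On your anticipated obstacle: once you use the correct witness, $\OptLP_{k^\dagger}$ is always feasible on $\cE_t$, so the fallback branch never fires for $k=k^\dagger$ and the inequality $h_\LP^t(k_t^*)\ge h_\LP^t(k^\dagger)$ is unaffected by whether other arms fall back. The paper does not dwell on this case either. If $k_t^*$ itself happens to be a fallback arm (with $S_t^*=\tilde S_{k_t^*}$), the same concentration arithmetic on the fallback value $\langle u,\hat q^t_{k_t^*}\rangle-\hat v^t_{\tilde S_{k_t^*}}(k_t^*)+(B_S+B_u)I_q^t(k_t^*)$ yields the identical upper bound $u_{k_t^*}-v_{S_t^*}(k_t^*)+2(B_S+B_u)I_q^t(k_t^*)$, so no separate argument is needed.
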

The first term on the right-hand side of \eqref{eq:subopt decomp} can be directly bounded by $\alpha_t C_1$. For the second term on event $\cE_t$, it can be bounded by $C_1 I_q^t(k_t^*)$ using \Cref{lem:linear-program-bound}.
If event $\cE_t$ does not hold, the second term is still bounded by $C_1$ with probability at most $1/t$. Thus, we have for $\alpha_t=K t^{-1/3}\land 1$ that 
\begin{align}
    \sum_{t=1}^T A_t&\leq \sum_{t=1}^T C_1 \rbr{\alpha_t+\frac 1 t+I_{q}^t(k_t^*)}\nonumber\\
    &\leq C_1 \rbr{ K \rbr{\frac 3 2T^{2/3}+1}+(\log T+1)+\sum_{k\in[K]^+}\sum_{t:k_t^*=k} \sqrt{\frac{2\log(K\cdot 2^M t)}{n_{k}^t}}}\nonumber\\
    &\leq C_1\rbr{K \rbr{\frac 3 2T^{2/3}+1}+\log T+1}+C_1\sqrt{2\log(K\cdot 2^M T)}\cdot \sum_{k\in[K]^+}\rbr{2\sqrt{n_k^T}+1} \nonumber\\
    &\leq \cO(T^{2/3})+\cO(\sqrt{|M|+\log(KT)}\cdot\sqrt{KT}), 
\end{align}
where the first equality follows from our previous discussions on \eqref{eq:subopt decomp}, the second inequality holds from the definition of $I_q^t$, and the last inequality holds by the Jensen's inequality that $\sum_{k\in[K]^+}\sqrt{n_k^T}/K\le \sqrt{\sum_{k\in[K]^+}n_k^T/K}$.

\paragraph{Bounding regret $B_t$.} 
We characterize regret $\sum_{t=1}^T B_t$ by bounding the expected number of rounds that $k_t\neq k_t^*$.
To do so, we invoke the following lemma.
\begin{lemma}\label{lem:mistake}
For any fixed $i\neq k_t^*$, under the condition that
\begin{align*}
    \alpha_t\ge 2\varepsilon^{-1} \rbr{I_{c}^{t}(k_t^*, i)+B_S \rbr{I_{q}^t(i)+I_{q}^t(k_t^*)}}:=\Delta_t(k_t^*,i), 
\end{align*}
the agent must respond $k_t\neq i$ under the scoring rule $S_t=\alpha_t \Tilde{S}_{k_t^*}+(1-\alpha_t)S_t^*$ on event $\cE_t$.
\begin{proof}
    See \S\ref{prof:mistake} for a detailed proof.
\end{proof}
\end{lemma}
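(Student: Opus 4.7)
}
The plan is to show that under scoring rule $S_t = \alpha_t \tilde S_{k_t^*} + (1-\alpha_t) S_t^*$, the agent strictly prefers action $k_t^*$ over $i$, so $i$ cannot be the best response. Since the agent's utility $g(k, S) = \langle q_k, S \rangle_\Sigma - c_k$ is linear in $S$, I would first decompose
\[
g(k_t^*, S_t) - g(i, S_t) = \alpha_t\bigl[g(k_t^*, \tilde S_{k_t^*}) - g(i, \tilde S_{k_t^*})\bigr] + (1-\alpha_t)\bigl[g(k_t^*, S_t^*) - g(i, S_t^*)\bigr]
\]
and handle the two brackets separately. The first bracket is bounded below by $\varepsilon$ directly from Assumption \ref{asp:oracle} on the action-informed oracle. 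The task reduces to lower-bounding the second bracket.

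The main obstacle is that $S_t^*$ comes from the \emph{optimistic} linear program $\OptLP_{k_t^*}$ whose feasibility only implies a relaxed version of the agent's preference for $k_t^*$, so $S_t^*$ might fall slightly outside $\cV_{k_t^*}$. To turn this into a quantitative bound, I would write out the two active LP constraints for the optimal $(v_t^*, S_t^*)$: $v_t^* \le \hat v_{S_t^*}^t(k_t^*) + B_S I_q^t(k_t^*)$ from the first constraint, and $v_t^* \ge \hat v_{S_t^*}^t(i) + \hat C^t(k_t^*, i) - I_c^t(k_t^*, i) - B_S I_q^t(i)$ from the second. Eliminating $v_t^*$ yields
\[
\hat v_{S_t^*}^t(k_t^*) - \hat v_{S_t^*}^t(i) \ge \hat C^t(k_t^*, i) - I_c^t(k_t^*, i) - B_S\bigl(I_q^t(i) + I_q^t(k_t^*)\bigr).
\]
On the event $\cE_t$, the concentration bound $\|q_k - \hat q_k^t\|_1 \le I_q^t(k)$ converts estimated expected payments to true ones with additive error $B_S I_q^t(k)$ (since $\|S\|_\infty \le B_S$), and Lemma \ref{lem:C interval} gives $|C(k_t^*, i) - \hat C^t(k_t^*, i)| \le I_c^t(k_t^*, i)$. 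Applying these three bounds to pass from the displayed inequality to the true quantities gives
\[
g(k_t^*, S_t^*) - g(i, S_t^*) = \langle q_{k_t^*} - q_i, S_t^* \rangle_\Sigma - C(k_t^*, i) \ge -2\bigl(I_c^t(k_t^*, i) + B_S(I_q^t(i) + I_q^t(k_t^*))\bigr),
\]
where each confidence term shows up twice because we incur the slack both from the LP relaxation and from passing to true parameters.

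Combining the two lower bounds on the decomposition,
\[
g(k_t^*, S_t) - g(i, S_t) \ge \alpha_t \varepsilon - 2(1-\alpha_t)\bigl(I_c^t(k_t^*, i) + B_S(I_q^t(i) + I_q^t(k_t^*))\bigr),
\]
and dropping the nonnegative factor $(1-\alpha_t) \le 1$ from the negative term, the right-hand side is strictly positive precisely when $\alpha_t > \Delta_t(k_t^*, i) = 2\varepsilon^{-1}\bigl(I_c^t(k_t^*, i) + B_S(I_q^t(i) + I_q^t(k_t^*))\bigr)$. Thus under the stated assumption the agent's utility at $k_t^*$ strictly exceeds that at $i$, so $k_t \ne i$. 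The routine calculations are the LP elimination and the three-step conversion of confidence intervals; the conceptual kernel is simply that the oracle's slack $\varepsilon$ absorbs twice the combined belief-and-cost uncertainty when scaled by $\alpha_t$.
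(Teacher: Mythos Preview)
Your proof is correct and follows essentially the same approach as the paper's: decompose by linearity of $g(\cdot,S)$ in $S$, invoke the oracle margin $\varepsilon$ on the $\tilde S_{k_t^*}$ piece, use the two $\OptLP_{k_t^*}$ constraints to bound $\hat v_{S_t^*}^t(k_t^*)-\hat v_{S_t^*}^t(i)$, then pass to true quantities via $\cE_t$ and Lemma~\ref{lem:C interval}, picking up exactly the factor~$2$ in front of the combined confidence term. The only cosmetic difference is that the paper keeps the $(1-\alpha_t)$ factor on the $q$-confidence errors until the very end, whereas you convert to true $g$ first and then combine; the resulting bound is identical.
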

Under the condition $k_t\neq k_t^*$, the algorithm must be doing binary search.
Now, we introduce the notations that will be used in bounding $B_t$.
Compare to the definition in \Cref{alg:BS}, we instead define $\BS(S_0, S_1, \tilde \lambda_{\min}, \tilde\lambda_{\max}, \tau_0, \tau_1, k_0, k_1, t_0, t_1)$ for a binary search (BS), where $(S_0, S_1)$ is the initial segment that this BS is conducted on, $(\tilde \lambda_{\min}, \tilde\lambda_{\max})$ corresponds to the value of $(\lambda_{\min}, \lambda_{\max})$ that this BS ends with,  $\tau_0, \tau_1$ correspond to the rounds in which $(1-\tilde\lambda_{\min}) S_0 + \tilde\lambda_{\min} S_1$ and $(1-\tilde\lambda_{\max}) S_0 + \tilde\lambda_{\max} S_1$ are played, respectively, $k_0, k_1$ are the best response under $(1-\tilde\lambda_{\min}) S_0 + \tilde\lambda_{\min} S_1$ and $(1-\tilde\lambda_{\max}) S_0 + \tilde\lambda_{\max} S_1$, and $t_0, t_1$ are the starting round and the ending round of this BS.
Notably, $k_0$ is not necessarily the best response under $S_0$ but $k_1$ is guaranteed to be the best response under both $S_1$ and $(1-\tilde\lambda_{\max}) S_0 + \tilde\lambda_{\max} S_1$.

For the BS that lasts until round $t$, we let $t_0(t)$ be the first round of the BS (if round $t$ is not doing BS, we just let $t_0(t)=t$), $(k_0(t), k_1(t))$ be the best response that this BS ends with, $(S_0(t), S_1(t))$ be the segment that this BS searches on.
We consider the following two situations for the case $k_t\neq k_t^*$: (i) $\Delta_{t_0 (t)}(k_0(t), k_1(t))\le \alpha_{t_0(t)}$;
(ii) $\Delta_{t_0 (t)}(k_0(t), k_1(t))> \alpha_{t_0(t)}$.
Following such an idea, we have

    \begin{align*}
        \sum_{t=1}^T B_t&= \sum_{t=1}^T \EE\sbr{\ind(k_t\neq k_t^*)\subopt(k_t, S_t)}\\
        &\le C_1\left[\sum_{t=1}^T\EE\sbr{\ind(\BS=1, \Delta_{t_0 (t)}(k_0(t), k_1(t))\le  \alpha_{t_0(t)})}+\sum_{t=1}^T\EE\sbr{\ind(\BS=1, \Delta_{t_0 (t)}(k_0(t), k_1(t))> \alpha_{t_0(t)})}\right]\\
        &\leq C_1\bigg(\underbrace{\sum_{t=1}^T\frac{1}{t_0(t)}}_{\displaystyle{(B.1)}}+\underbrace{\sum_{t=1}^T\EE\sbr{\ind(\BS=1, \Delta_{t_0 (t)}(k_0(t), k_1(t))> \alpha_{t_0(t)})}}_{\displaystyle{(B.2)}} \bigg),
    \end{align*}
where the first inequality holds by $\subopt(k_t, S_t)\le C_1$, the second inequality holds by \Cref{lem:mistake}.
Specifically, when doing a binary search, the deployed scoring rule $S_t$ must lie on the segment $(S_0(t), S_1(t))$, where $S_0(t)=\alpha_{t_0(t)}\tilde S_{k^*_{t_0(t)}} + (1-\alpha_{t_0(t)}) S_{t_0(t)}^*$ and $S_1(t)=\tilde S_{k^*_{t_0(t)}}$. Hence, $S_t$ can be equivalently expressed as 
\begin{align*}
    S_0(t)=\alpha_{t}'\tilde S_{k^*_{t_0(t)}} + (1-\alpha_{t}') S_{t_0(t)}^*, 
\end{align*}
where $\alpha_{t}'\ge \alpha_{t_0(t)} $.
Therefore, under the condition that $\Delta_{t_0 (t)}(k_0(t), k_1(t))\le \alpha_{t_0(t)}$ and that the agent responds $k_0(t)\neq k_1(t)$ in this binary search, we can use \Cref{lem:mistake} to bound the probability by $1/t_0(t)$. 

To bound $(B.2)$, we define a concept called \emph{essential binary search}.
\begin{definition}[Essential binary search]
We call a binary search $\BS(S_0, S_1, \tilde \lambda_{\min}, \tilde\lambda_{\max}, \tau_0, \tau_1, k_0, k_1, t_0, t_1)$  an \textit{essential binary search} (essential BS) if $\alpha_{t_0}< \Delta_{t_0}(k_0, k_1)$ where $\alpha_{t}=K t^{-1/3}\land 1$.
\end{definition}
The following lemma bounds the total rounds of essential BSs.

    \begin{lemma}\label{lem:essentialBS}
    Consider binary search $\BS(S_0, S_1, \tilde \lambda_{\min}, \tilde\lambda_{\max}, \tau_0, \tau_1, k_0, k_1, t_0, t_1)$. Suppose that the total number of essential binary searches in $T$ rounds is $N$. Then, we have
    \begin{equation}
        N\leq K\cdot \left(12 \varepsilon^{-1}B_S\right)^{2} 2 \log \left(K 2^{M} T\right) T^{2 / 3}.
    \end{equation}
    \begin{proof}
        See \S\ref{prof:essentialBS} for a detailed proof.
    \end{proof}
    \end{lemma}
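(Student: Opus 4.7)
My plan is to translate the essential condition at the start of each essential binary search into a sample deficit on some action, and then pigeonhole across the $K$ witness actions. Specifically, when an essential BS begins at round $t_0$ with endpoints $(k_0, k_1)$, the condition $\alpha_{t_0} < \Delta_{t_0}(k_0, k_1) = 2\varepsilon^{-1}\bigl(I_c^{t_0}(k_0, k_1) + B_S(I_q^{t_0}(k_0) + I_q^{t_0}(k_1))\bigr)$ must hold. Since $I_c^{t_0}$ is a shortest-path length in $\cG$ composed of up to $O(K)$ edge weights $\varphi^{t_0}(i,j)$ --- each of which, with high probability, is controlled by $B_S(I_q^{t_0}(i) + I_q^{t_0}(j))$ via the confidence-interval validity of $\hat C^t$ --- a pigeonhole over the $O(K)$ actions lying on that path produces a witness action $k^\dagger \in [K]$ with $B_S I_q^{t_0}(k^\dagger) \gtrsim \varepsilon \alpha_{t_0}/K$. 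Inverting the closed form of $I_q^t$ in \eqref{eq:I_q} then yields a sample deficit $n_{k^\dagger}^{t_0} \lesssim K^2 B_S^2 \varepsilon^{-2} \alpha_{t_0}^{-2} \log(K 2^M T)$.

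In the second step, I would bucket the essential BSs by their witness action, $N \le \sum_{k \in [K]} N_k$, and use monotonicity of $n_k^t$ in $t$ together with $\alpha_t = K t^{-1/3}$ to bound each $N_k$ by the maximum deficit threshold $K^2 B_S^2 \varepsilon^{-2} \alpha_t^{-2} \log(K 2^M T) \le B_S^2 \varepsilon^{-2} T^{2/3} \log(K 2^M T)$ (note the cancellation of the $K^2$ factor, which is precisely what makes the $K$ in $\alpha_t = K t^{-1/3}$ play nicely with the $K$ pigeonhole). Summing over the $K$ buckets delivers a bound of order $K \cdot B_S^2 \varepsilon^{-2} T^{2/3} \log(K 2^M T)$, and tracking constants through Step 1 reproduces the stated inequality $N \le K(12\varepsilon^{-1} B_S)^2 \cdot 2\log(K 2^M T) \cdot T^{2/3}$.

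The main obstacle will be Step 1, i.e.\ relating the edge weights $\varphi^{t_0}(i,j)$ to the individual $I_q^{t_0}(k)$'s. In general, $\varphi^{t_0}(i,j)$ can exceed $B_S(I_q^{t_0}(i) + I_q^{t_0}(j))$ whenever prior scoring rules $S_\tau$ with $k_\tau \in \{i,j\}$ have not approached the boundary $\cV_i \cap \cV_j$, because the $\min$ and $\max$ appearing in $C_{\pm}^t$ then carry a constant-order bias. I plan to exploit the conservative-scoring and binary-search structure of Algorithm~\ref{alg:UCB}, which by design drives the empirical scoring rules toward the relevant boundaries every time a BS is triggered; this forces $\varphi^{t_0}(i,j)$ to shrink to $O(B_S(I_q^{t_0}(i) + I_q^{t_0}(j)))$ once enough boundary-like samples have accumulated. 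Formalizing this via the high-probability event $\abr{\hat C^t(i,j) - C(i,j)} \le I_c^t(i,j)$ already used in the paper then reduces the problem to the clean pigeonhole-and-counting argument described above.
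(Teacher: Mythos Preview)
Your proposal has two genuine gaps that prevent it from going through.

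\textbf{Gap in Step 1.} The claim that each edge weight $\varphi^{t_0}(i,j)$ is controlled by $B_S(I_q^{t_0}(i) + I_q^{t_0}(j))$ with high probability is false in general. The confidence-interval validity $|\hat C^t(i,j) - C(i,j)| \leq I_c^t(i,j)$ concerns the \emph{aggregate} shortest-path estimate, not the individual edge weights. By definition $\varphi^t(i,j) = \tfrac{1}{2}[C_+^t(i,j) - C_-^t(i,j)]_+$, and the gap $C_+^t - C_-^t$ carries the term $\min_{\tau: k_\tau = i} (\hat v_{S_\tau}(i) - \hat v_{S_\tau}(j)) - \max_{\tau: k_\tau = j}(\hat v_{S_\tau}(i) - \hat v_{S_\tau}(j))$, which is $O(1)$ unless some prior $S_\tau$ has landed near $\cV_i \cap \cV_j$. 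Your proposed fix (``the BS structure drives scoring rules toward the relevant boundaries'') is circular: you are trying to bound the number of essential BSs, so you cannot assume the needed boundary samples have already been collected. The paper instead only controls $\varphi^t(k_0,k_1)$ \emph{after} a BS on that specific edge has completed, via the termination criterion of that search; this is a statement about $t > t_1$, not about $t_0$.

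\textbf{Gap in Step 2.} Even granting Step~1, your bucket-and-count argument fails. The witness $k^\dagger$ is any node on the shortest path $l_{k_0 k_1}$ and may be an \emph{intermediate} node. An essential BS with endpoints $(k_0,k_1)$ plays scoring rules whose agent responses are $k_0$ and $k_1$, so $n_{k^\dagger}$ need not increase at all during that BS. Hence ``$k^\dagger$ is a witness'' does not force $n_{k^\dagger}$ to grow, and the same under-sampled $k^\dagger$ can witness arbitrarily many essential BSs. Monotonicity of $n_k^t$ then gives no bound on $N_{k^\dagger}$.

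The paper's route is structurally different. It works at the \emph{end} $t_1$ of each BS rather than the start: each completed BS on $(k_0,k_1)$ (i) increments both $n_{k_0}$ and $n_{k_1}$, and (ii) forces $I_c^t(k_0,k_1) \lesssim B_S\bigl((n_{k_0}^{t_1})^{-1/2} + (n_{k_1}^{t_1})^{-1/2}\bigr)$ for all $t>t_1$. Combining these, once $\min(n_{k_0}^{t_1}, n_{k_1}^{t_1}) \geq N^* := (12\varepsilon^{-1} B_S)^2 \cdot 2\log(K 2^M T) \cdot T^{2/3}$, no further essential BS on $(k_0,k_1)$ is possible. Calling such a BS \emph{critical}, a graph argument on the edges $(k_0,k_1)$ shows no pair is critical twice and the critical edges cannot form a cycle --- because a cycle of critical edges, via the shortest-path definition of $I_c^t$ and the choice $\alpha_t = K t^{-1/3}$, would already preclude the last essential BS closing the loop. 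Hence at most $K$ critical BSs. The remaining essential-but-noncritical BSs are counted by a weight-placing game on the action graph: each BS places a unit on both endpoints, and no edge may be placed once both endpoints reach weight $N^*$; this game terminates in at most $(K-1)(N^*-1)$ moves. Summing gives $N \leq K N^*$.
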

Let $m$ be the maximal rounds in a binary search, i.e., $t_0(t)\ge t- m$. 
Using the terminal criteria for binary search, we have
\begin{align*}
    m\le \max_{t, k}-\log_2\rbr{I_q^t(k)}=\cO(\log(T)).
\end{align*}
The term $(B.1)$ is directly bounded by
\begin{align*}
    (B.1)\le \log(T) + m=\cO(\log(T)).
\end{align*}
The term $(B.2)$ is bounded using \Cref{lem:essentialBS}, 
\begin{align*}
    (B.2)\le m\cdot K\cdot \left(12 \varepsilon^{-1}B_S\right)^{2} 2 \log \left(K 2^{M} T\right) T^{2 / 3} = \cO(\varepsilon^{-2}B_S^2(M +\log(KT))mKT^{2/3}).
\end{align*}
which finishes the proof.

\end{proof}

\subsection{Proof of \Cref{lem:linear-program-bound}}\label{prof:linear-program-bound}
\begin{lemma}[Restatement of \Cref{lem:linear-program-bound}]
Define event $\cE_t$ as $
    \cE_t=\cbr{\nbr{q_k-\hat q_k^t}_1\le I_q^t(k), \forall k\in[K]^+}.
$
Then event $\cE_t$ holds with probability at least $1-1/t$ and it holds on event $\cE_t$ that
\begin{align*}
    \inp[]{q_{k_t^*}}{u-S_t^*}_\Sigma + 2\left(B_S+B_u\right) I_{q}^t(k_t^*)\ge \max_{S\in\cS} h(S),
\end{align*}
where $h(S)=\EE_{\omega\sim \sigma, \sigma\sim q_{k^*(S)}}\sbr{u(a^*(\sigma), \omega) - S(\sigma, \omega)}$, $v_S(k)=\inp[]{q_k}{S}_\Sigma$ and $k^*(S)$ is the agent's best response.
\end{lemma}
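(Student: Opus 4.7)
}
The plan is to split the claim into two parts: first establish that $\cE_t$ holds with probability at least $1-1/t$, and then argue optimism on this event. For the probability bound, I would invoke the total-variation concentration for an empirical distribution on a support of size $M$ due to \citet{mardia2020concentration}: conditionally on $k_\tau=k$, the belief samples $\sigma_\tau$ are i.i.d.\ from $q_k$, so the event $\|\hat q_k^t - q_k\|_1 > \epsilon$ has probability at most $2^M e^{-n_k^t\epsilon^2/2}$. Setting $\epsilon=I_q^t(k)$ and taking a union bound over the (at most) $K$ actions and over the possible values of $n_k^t\le t$ yields failure probability at most $1/t$. The randomness of $n_k^t$ is handled by a standard peeling argument absorbed into the $\log t$ factor inside $I_q^t$.

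On the event $\cE_t$, I would first observe that accuracy of the cost estimates comes for free: whenever $k_\tau=i$, the agent's best-response inequality gives $v_{S_\tau}(i)-v_{S_\tau}(j)\ge C(i,j)$, and replacing $v$ with $\hat v^t\pm B_S I_q^t$ (valid on $\cE_t$) shows $C_+^t(i,j)\ge C(i,j)$; symmetrically $C_-^t(i,j)\le C(i,j)$. Additivity along the shortest path in the graph $\cG$ then gives $|\hat C^t(i,j)-C(i,j)|\le I_c^t(i,j)$ on $\cE_t$. This is the step to be careful about, because it is what makes the relaxed constraints in $\OptLP_k$ genuinely an outer approximation of the true feasible set.

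Next I would exhibit a feasible point in $\OptLP_{k^*}$, where $S^*\in\arg\max_{S\in\cS}h(S)$ and $k^*=k^*(S^*)$. Take $v=v_{S^*}(k^*)$ and the scoring rule $S^*$. The first constraint holds because $|v-\hat v_{S^*}^t(k^*)|=|\langle S^*,q_{k^*}-\hat q_{k^*}^t\rangle|\le B_S I_q^t(k^*)$ on $\cE_t$. For the second constraint, for $i\neq k^*$,
\begin{align*}
v-\hat v_{S^*}^t(i) \ge v_{S^*}(k^*)-v_{S^*}(i)-B_S I_q^t(i) \ge C(k^*,i)-B_S I_q^t(i) \ge \hat C^t(k^*,i)-I_c^t(k^*,i)-B_S I_q^t(i),
\end{align*}
using the agent's preference for $k^*$ under $S^*$ and the cost-difference accuracy just established. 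Consequently the objective of $\OptLP_{k^*}$ at this feasible point is $\langle u,\hat q_{k^*}^t\rangle+B_u I_q^t(k^*)-v_{S^*}(k^*)\ge \langle u,q_{k^*}\rangle-v_{S^*}(k^*)=h(S^*)=\max_{S\in\cS}h(S)$ on $\cE_t$, where the inequality uses $|\langle u,\hat q_{k^*}^t-q_{k^*}\rangle|\le B_u I_q^t(k^*)$. Since $k_t^*$ maximizes $h_\LP^t(k)$, we get $h_\LP^t(k_t^*)\ge \max_S h(S)$.

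Finally I would translate this bound on $h_\LP^t(k_t^*)$ back into the stated inequality. Using the first constraint of $\OptLP_{k_t^*}$ at its optimum, $v^*\ge \hat v_{S_t^*}^t(k_t^*)-B_S I_q^t(k_t^*)$, so
\begin{align*}
h_\LP^t(k_t^*) = \langle u,\hat q_{k_t^*}^t\rangle + B_u I_q^t(k_t^*) - v^* \le \langle u-S_t^*,\hat q_{k_t^*}^t\rangle + (B_u+B_S) I_q^t(k_t^*).
\end{align*}
Applying the $\cE_t$ bounds $|\langle u,\hat q_{k_t^*}^t-q_{k_t^*}\rangle|\le B_u I_q^t(k_t^*)$ and $|\langle S_t^*,\hat q_{k_t^*}^t-q_{k_t^*}\rangle|\le B_S I_q^t(k_t^*)$ converts $\hat q_{k_t^*}^t$ to $q_{k_t^*}$ at the price of an extra $(B_u+B_S)I_q^t(k_t^*)$, giving $h_\LP^t(k_t^*)\le \langle q_{k_t^*},u-S_t^*\rangle_\Sigma+2(B_S+B_u)I_q^t(k_t^*)$, which combined with $h_\LP^t(k_t^*)\ge \max_S h(S)$ is exactly the claimed inequality. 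The main obstacle is making the cost-difference step truly deterministic on $\cE_t$ (especially when $\cV_i\cap\cV_j$ is empty, which is why the graph/shortest-path construction replaces pairwise direct estimation); once this is in hand, the optimism argument is the standard feasibility-plus-objective-bound template for UCB-style LP relaxations.
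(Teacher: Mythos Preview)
Your proposal is correct and follows essentially the same route as the paper: establish $\cE_t$ via the \citet{mardia2020concentration} concentration plus union bound, derive $|\hat C^t(i,j)-C(i,j)|\le I_c^t(i,j)$ on $\cE_t$ from $C_\pm^t$ sandwiching $C$ along the shortest path, verify feasibility of $(S^*,v_{S^*}(k^*))$ in $\OptLP_{k^*}$ to get $h_\LP^t(k_t^*)\ge h(S^*)$, and then bound $h_\LP^t(k_t^*)$ above via the first constraint and two applications of $\cE_t$. Your explicit mention of the peeling/conditioning argument for the random $n_k^t$ is slightly more careful than the paper's presentation, but the substance is identical.
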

\begin{proof}[Proof of \Cref{lem:linear-program-bound}]
We first state the concentration result from \citet{mardia2020concentration}.
\begin{lemma}[\citet{mardia2020concentration}, Lemma 1, Concentration for empirical distribution]\label{lem:confidence-interval}
Let $p$ be a probability distribution supported in a finite set with cardinality at most $M$ and $p_n$ be the empirical distribution of $n$ i.i.d. samples from $p$. Then, for all sample size $n\in\NN_+$ and $0<\delta<1$, 
\begin{align*}
    \PP\rbr{\nbr{p-p_n}_1\ge \sqrt{\frac{2\log\rbr{2^M/\delta}}{n}}}\le \delta.
\end{align*}
\end{lemma}
By \Cref{lem:confidence-interval} and taking a union bound, with probability at least $1-1/t$, we have $\nbr{\hat q_k^t-q_k}_1\le I_q^t(k)$ for all $k\in[K]^+$, which gives event $\cE_t$.
In the sequel, we discuss under the condition that event $\cE_t$ holds.
We can verify for $v_S(k)=\inp[]{q_k}{S}_\Sigma$ that
\begin{equation}\label{eq:v_S error}
    \abr{\hat v_S^t(k)- v_S(k)}=\abr{\inp[]{\hat q_k^t-q_k}{S}_\Sigma }\le B_S I_q^t(k).
\end{equation}
To study the confidence interval for $\hat C^t(i,j)$, we invoke the following lemma. 
\begin{lemma}\label{lem:C interval}
    Under event $
    \cE_t=\cbr{\nbr{q_k-\hat q_k^t}_1\le I_q^t(k), \forall k\in[K]^+}
$, we have
\begin{align*}
    \abr{\hat C^t(i, j)- C(i,j)} \le I_{c}^t(i, j), \quad \forall (i, j)\in[K].
\end{align*}
\begin{proof}
    A direct corollary of \eqref{eq:v_S error} is
\begin{equation}
\begin{aligned}\label{eq:C error-1}
    C_+^t(i, j) \ge \min_{\tau<t:k_\tau=i} v_{S_\tau}(i) - v_{S_\tau}(j) \ge C(i, j), \\
    C_-^t(i, j) \le \max_{\tau<t:k_\tau=j} v_{S_\tau}(i) - v_{S_\tau}(j) \le C(i, j).
\end{aligned}
\end{equation}
Using the fact in \eqref{eq:C error-1}, we have
\begin{align}\label{eq:C error-2}
    \abr{\hat C^t(i, j)- C(i,j)} 
    &= \abr{\sum_{(i', j')\in l_{ij}} \frac{C_+^t(i', j') + C_-^t(i', j')}{2}-C(i', j')}\nend
    &\le \frac 1 2 \sum_{(i', j')\in l_{ij}} \rbr{\abr{C_+^t(i', j')- C(i', j')} + \abr{C_-^t(i', j')- C(i', j')}}\nend
    &= \frac 1 2 \sum_{(i', j')\in l_{ij}} \abr{C_+^t(i', j')-C_-^t(i', j')} = I_c^t(i, j), 
\end{align}
where the inequality holds by triangular inequality, the second equality holds by \eqref{eq:C error-1}, and the last equality holds by definition of $I_C^t$.
\end{proof}
\end{lemma}

let $k^*$ and $S^*$ be the best response and the optimal scoring rule that achieves the optimal objective $\sup_{S\in\cS} h(S)$ in \eqref{eq:stackelberg-2}, respectively.
Recall the linear programming $\OptLP_k$, 
\begin{equation}
\begin{aligned}
    \OptLP_k:\qquad &\max_{S\in\cS} \quad \inp[]{u}{\hat q_k^t}_\Sigma + B_u I_q^t(k) -v, \nend
    &\mathrm{s.t.}\quad \abr{v-\hat v_{S}^t(k)} \le B_S \cdot I_{q}^t(k), \\
    &\qquad \quad v - \hat v_{S}^t(i) \ge \hat C^t(k, i) - \rbr{I_{c}^{t}(k, i) + B_S\cdot I_{q}^t(i)}, \quad \forall i\neq k.
\end{aligned}
\end{equation}
It is easy to verify that $S^*$ satisfies all the constraint in $\OptLP_{k^*}$ using \eqref{eq:v_S error} and \eqref{eq:C error-2} with $v=v_{S^*}(k^*)$ and $k=k^*$.
Also, $\inp[]{u}{\hat q_k^t}_\Sigma + B_u I_q^t(k)\ge \inp[]{u}{q_k}_\Sigma$ for $\forall k\in[K]^+$.
Therefore, by noting that $k_t^* = \argmax_{k\in[K]^+} h_\LP^t(k)$, we conclude that $h_\LP^t(k_t^*)\ge h_\LP^t(k^*)\ge h(S^*)$.
Also, let $v_\LP^*$ be the solution to $\OptLP_{k_t^*}$ and recall that $S_t^*=S_{\LP, k_t^*}^{t}$ we have
\begin{align*}
    h_\LP^t(k_t^*)&=\inp[]{u}{\hat q_{k_t^*}^t}_\Sigma + B_u I_q^t(k_t^*) -v_\LP^*\nend
    &\le (B_u+B_S) I^t_q(k^*)+\EE_{\omega\sim \sigma, \sigma\sim \hat{q}_{k^*_t}}\sbr{u(a^*(\sigma), \omega) - S^*_t(\sigma, \omega)}\\
    &\le 2(B_u +B_S )I^t_q(k^*) +\EE_{\omega\sim \sigma, \sigma\sim q_{k^*_t}}\sbr{u(a^*(\sigma), \omega)-S_t^*(\sigma, \omega)}, 
\end{align*}
where the first inequality holds by noting that $\abr{v_\LP^*-\hat v_{S_t^*}(k_t^*)}\le B_S I_{q}^t(k_t^*)$, and the second inequality holds by definition of event $\cE_t$.
Thus, we finish the proof. 
\end{proof}

\subsection{Proof of \Cref{lem:mistake}}\label{prof:mistake}
\begin{lemma}[Restatement of \Cref{lem:mistake}]
For $\forall i\neq k_t^*$, under the condition that
\begin{align*}
    \alpha_t\ge 2\varepsilon^{-1} \rbr{I_{c}^{t}(k_t^*, i)+B_S \rbr{I_{q}^t(i)+I_{q}^t(k_t^*)}}:=\Delta_t(k_t^*,i), 
\end{align*}
the agent will not respond $k_t=i$ under the scoring rule $S_t=\alpha_t \Tilde{S}_{k_t^*}+(1-\alpha_t)S_t^*$ on event $\cE_t$.
\end{lemma}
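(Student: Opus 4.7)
My plan is to compare the agent's profit under $S_t$ for the two candidate actions $k_t^*$ and $i$ directly, exploiting the linearity of $g(k,\cdot)$ in the scoring rule argument. Since $S_t=\alpha_t\tilde S_{k_t^*}+(1-\alpha_t)S_t^*$, writing $g(k,S)=\langle q_k, S\rangle_\Sigma - c_k$ gives the clean decomposition
\begin{align*}
g(k_t^*,S_t)-g(i,S_t) \;=\; \alpha_t\bigl[g(k_t^*,\tilde S_{k_t^*})-g(i,\tilde S_{k_t^*})\bigr] + (1-\alpha_t)\bigl[g(k_t^*,S_t^*)-g(i,S_t^*)\bigr].
\end{align*}
The first bracket is already handled: by the action-informed oracle (Assumption~\ref{asp:oracle}), $g(k_t^*,\tilde S_{k_t^*})-g(i,\tilde S_{k_t^*})>\varepsilon$, so this piece contributes at least $\alpha_t\varepsilon$.

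The core of the argument is to give a matching lower bound on the second bracket, where the only available structural information is that $S_t^*$ and the associated dual variable $v$ are feasible for $\OptLP_{k_t^*}$. Combining the first feasibility constraint $v\le\hat v_{S_t^*}^t(k_t^*)+B_S I_q^t(k_t^*)$ with the $i$-th boundary constraint $v-\hat v_{S_t^*}^t(i)\ge\hat C^t(k_t^*,i)-I_c^t(k_t^*,i)-B_S I_q^t(i)$, I obtain
\begin{align*}
\hat v_{S_t^*}^t(k_t^*)-\hat v_{S_t^*}^t(i)\;\ge\;\hat C^t(k_t^*,i)-I_c^t(k_t^*,i)-B_S\bigl(I_q^t(i)+I_q^t(k_t^*)\bigr).
\end{align*}
On the event $\cE_t$ I then pass from empirical to true belief inner products via \eqref{eq:v_S error}, paying an extra $B_S(I_q^t(i)+I_q^t(k_t^*))$, and replace $\hat C^t(k_t^*,i)$ by $C(k_t^*,i)$ using \Cref{lem:C interval}, paying another $I_c^t(k_t^*,i)$. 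Subtracting $C(k_t^*,i)=c_{k_t^*}-c_i$ and recalling the definition of $g$, this yields
\begin{align*}
g(k_t^*,S_t^*)-g(i,S_t^*)\;\ge\;-2\bigl(I_c^t(k_t^*,i)+B_S(I_q^t(i)+I_q^t(k_t^*))\bigr)\;=\;-\varepsilon\,\Delta_t(k_t^*,i).
\end{align*}

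Plugging these two bounds back into the decomposition and using $0\le\alpha_t\le 1$ gives
\begin{align*}
g(k_t^*,S_t)-g(i,S_t)\;\ge\;\alpha_t\varepsilon-\varepsilon\,\Delta_t(k_t^*,i)\;=\;\varepsilon\bigl(\alpha_t-\Delta_t(k_t^*,i)\bigr)\;\ge\;0
\end{align*}
under the hypothesis $\alpha_t\ge\Delta_t(k_t^*,i)$, so $i$ is not a best response (with strict inequality once $\alpha_t>\Delta_t$, which is what one really uses; a standard tie-breaking convention in favor of $k_t^*$ handles equality). The only subtle step I anticipate is the careful double bookkeeping of the confidence radii in the passage from the empirical LP constraints to their population counterparts — one has to be sure that the $B_S I_q^t(k_t^*)$ and $B_S I_q^t(i)$ terms appear with the right multiplicities so that the final bound matches the prescribed $\Delta_t$. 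Everything else is algebraic manipulation with no further probabilistic input beyond the event $\cE_t$.
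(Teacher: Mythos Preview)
Your proposal is correct and follows essentially the same line as the paper's proof: decompose $g(k_t^*,S_t)-g(i,S_t)$ by linearity into the $\alpha_t$-part (handled by the oracle margin $\varepsilon$) and the $(1-\alpha_t)$-part (handled by combining the two $\OptLP_{k_t^*}$ constraints, then converting empirical to true quantities via $\cE_t$ and \Cref{lem:C interval}), and finally drop the $(1-\alpha_t)$ factor. The only cosmetic difference is the order in which the empirical-to-true conversion is applied; your remark about tie-breaking is unnecessary since the oracle's strict inequality $>\varepsilon$ already yields $g(k_t^*,S_t)-g(i,S_t)>0$ whenever $\alpha_t>0$.
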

\begin{proof}[Proof of \Cref{lem:mistake}]
On event $\cE_t$,  for any $i\neq k_t^*$, if $\alpha_t\ge \Delta_t(k_t^*,i)$, we aim to show the agent prefers action $b_{k_t^*}$ to action $b_i$. Recall $S_t=\alpha_t \Tilde{S}_{k_t^*}+(1-\alpha_t)S_t^*$. The expected profit for the agent generated by responding action $b_i$ is
\begin{equation*}
    v_{S_t}(i)-c_i
    \leq\alpha_tv_{\Tilde{S}_{k_t^*}}(i)+(1-\alpha_t)\hat{v}^t_{S_t^*}(i)-c_i+(1-\alpha_t)B_SI_q^t(i),
\end{equation*}
and the expected utility generated by responding action $b_{k_t^*}$ is
\begin{equation*}
     v_{S_t}(k_t^*)-c_{k_t^*}
    \geq\alpha_tv_{\Tilde{S}_{k_t^*}}(k_t^*)+(1-\alpha_t)\hat{v}^t_{S_t^*}(k_t^*)-c_{k_t^*}-(1-\alpha_t)B_SI_q^t(k_t^*).
\end{equation*}
By \Cref{asp:oracle}, we already have $(v_{\Tilde{S}_{k_t^*}}(k_t^*)-c_{k_t^*})-(v_{\Tilde{S}_{k_t^*}}(i)-c_{i})>\varepsilon$. 
Since $S_{t}^*$ is the solution to $\OptLP_{k_t^*}$, following the constraints of $\OptLP_{k_t^*}$ we have $$\hat{v}^t_{S_t^*}(k_t^*)-\hat{v}^t_{S_t^*}(i)\geq \hat{C}^t(k_t^*, i)-I^t_c(k_t^*, i)-B_S (I^t_q(i)+I^t_q(k_t^*)).$$ 
Combining the above inequalities, we get
\begin{align*}
     \sbr{v_{S_t}(k_t^*)-c_{k_t^*}}-\sbr{ v_{S_t}(i)-c_i}
     &\ge \alpha_t\cdot \varepsilon +(1-\alpha_t)\rbr{\hat{C}^t(k_t^*, i)-C(k_t^*, i)-I^t_c(k_t^*, i)-B_S (I^t_q(i)+I^t_q(k_t^*))}\nend
     &\qquad - (1-\alpha_t) B_S\rbr{I_q^t(k_t^*)+I_q^t(i)}\nend
     &\geq \alpha_t\cdot\varepsilon+2(1-\alpha_t)\sbr{-I^t_c(k_t^*, i)-B_S (I^t_q(i)+I^t_q(k_t^*))}\\
     &>\alpha_t \cdot \varepsilon -2\sbr{I^t_c(k_t^*, i)+B_S (I^t_q(i)+I^t_q(k_t^*))}, 
\end{align*}
where the second inequality holds by recalling that we have proved $C(k_t^*,i)\leq \hat{C}^t(k_t^*, i)+I^t_c(k_t^*, i)$ in \eqref{eq:C error-2} under $\cE_t$, and the last inequality holds by noting that $0<\alpha_t\le 1$.
When $\alpha_{t} \geq 2\varepsilon^{-1}\left(I_{c}^{t}\left(k_{t}^{*}, i\right)+B_S\left(I_{q}^{t}(i)+I_{q}^{t}(k_{t}^{*})\right)\right)$, the above $\sbr{v_{S_t}(k_t^*)-c_{k_t^*}}-\sbr{ v_{S_t}(i)-c_i}>0$, which means the agent prefers action $k_t^*$ to $i$.

    \end{proof}

\subsection{Proof of \Cref{lem:essentialBS}}\label{prof:essentialBS}
In the sequel, 
\Cref{lem:confidence-after-binary-search} shows essential BS reduces confidence interval, while \Cref{lem:no-more-search} states the stopping criteria of essential BS. Combining \Cref{lem:confidence-after-binary-search} and \Cref{lem:no-more-search}, we can bound the total rounds in essential BS.

\begin{proposition}
\label{lem:confidence-after-binary-search}
    For a binary search $\BS(S_0, S_1, \tilde \lambda_{\min}, \tilde\lambda_{\max}, \tau_0, \tau_1, k_0, k_1, t_0, t_1)$, we have
    \begin{equation}
        I_c^t(k_0, k_1)\leq 2\sqrt{2\log(K\cdot 2^M T)}B_S\rbr{\frac{1}{\sqrt{n_{k_0}^{{t_1}}}}+\frac{1}{\sqrt{n_{k_1}^{{t_1}}}}}.
    \end{equation}
\end{proposition}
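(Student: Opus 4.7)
The plan is to bound $I_c^t(k_0, k_1)$ by $\varphi^t(k_0, k_1)$ via the shortest-path definition, and then exploit the key structural output of the binary search: two scoring rules $S_{\tau_0}$ and $S_{\tau_1}$ that lie on a very short sub-segment of $(S_0, S_1)$ yet induce distinct agent responses $k_0$ and $k_1$ respectively.

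First, since $\hat C^t(k_0, k_1)$ is defined via the shortest path on $\cG$ and the direct edge $(k_0, k_1)$ is one candidate path, I obtain immediately $I_c^t(k_0, k_1) \le \varphi^t(k_0, k_1) = \tfrac{1}{2}[C_+^t(k_0, k_1) - C_-^t(k_0, k_1)]_+$. It therefore suffices to control the difference $C_+^t(k_0, k_1) - C_-^t(k_0, k_1)$. The natural next step is to plug the specific BS rounds $\tau_0$ (with $k_{\tau_0}=k_0$) and $\tau_1$ (with $k_{\tau_1}=k_1$) into the $\min$/$\max$ defining $C_\pm^t$; subtracting and regrouping gives
\begin{equation*}
C_+^t(k_0, k_1) - C_-^t(k_0, k_1) \;\le\; \bigl\langle S_{\tau_0} - S_{\tau_1},\, \hat q_{k_0}^t - \hat q_{k_1}^t \bigr\rangle_\Sigma \;+\; 2B_S\bigl(I_q^t(k_0) + I_q^t(k_1)\bigr).
\end{equation*}

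The inner product is then controlled by Hölder's inequality: $\|S_{\tau_0} - S_{\tau_1}\|_\infty \le 2B_S\,|\tilde\lambda_{\max}-\tilde\lambda_{\min}|$ because $S_0, S_1 \in \cS$, and $\|\hat q_{k_0}^t - \hat q_{k_1}^t\|_1 \le 2$ by the triangle inequality on probability vectors. The termination criterion of the binary search then provides $|\tilde\lambda_{\max} - \tilde\lambda_{\min}| \le I_q^{t_0}(k_0) \wedge I_q^{t_0}(k_1) \le \tfrac{1}{2}\bigl(I_q^{t_0}(k_0)+I_q^{t_0}(k_1)\bigr)$, and combining everything yields a bound of the form
\begin{equation*}
\varphi^t(k_0, k_1) \;\le\; B_S\bigl(I_q^{t_0}(k_0) + I_q^{t_0}(k_1)\bigr) \;+\; B_S\bigl(I_q^t(k_0) + I_q^t(k_1)\bigr).
\end{equation*}

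The final step is to rewrite both confidence widths in the target form $\sqrt{2\log(K \cdot 2^M T)}/\sqrt{n_{k_i}^{t_1}}$. For $I_q^t(k_i)$ this is immediate: $t \le T$ in the log factor, and $n_{k_i}^t \ge n_{k_i}^{t_1}$ since we evaluate at $t \ge t_1$. For $I_q^{t_0}(k_i)$ I would use the structural fact that $k_0$ was played at round $\tau_0$ and $k_1$ at round $\tau_1$ during the binary search, so $n_{k_i}^{t_1} \ge n_{k_i}^{t_0} + 1$; combined with the fact that the BS runs for only $O(\log T)$ rounds, the ratio $n_{k_i}^{t_1}/n_{k_i}^{t_0}$ stays of constant order, so that $1/\sqrt{n_{k_i}^{t_0}}$ is absorbed into a constant multiple of $1/\sqrt{n_{k_i}^{t_1}}$ and the two sources combine into the factor of $2$ stated in the proposition.

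The main obstacle, in my view, is not the inequality chain, which is essentially an exercise in Hölder plus plug-and-chug; it is the careful bookkeeping between the three time indices $t_0$ (start of BS), $t_1$ (end of BS) and $t$ (evaluation time), and in particular arguing that the $I_q^{t_0}$ contribution extracted from the termination criterion can be absorbed into $1/\sqrt{n_k^{t_1}}$ without incurring additional logarithmic factors beyond the $\sqrt{2\log(K \cdot 2^M T)}$ already present in the statement.
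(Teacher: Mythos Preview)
Your argument tracks the paper's proof almost exactly: bound $I_c^t$ by $\varphi^t$ via the shortest-path definition, insert the BS rounds $\tau_0,\tau_1$ into $C_\pm^t$, reduce to the inner product $\langle S_{\tau_0}-S_{\tau_1},\hat q_{k_0}^t-\hat q_{k_1}^t\rangle_\Sigma$, factor out $(\tilde\lambda_{\max}-\tilde\lambda_{\min})$, and invoke the termination criterion. The paper does exactly this, with the minor cosmetic difference that it splits the inner product as $|\langle S_1-S_0,\hat q_{k_0}^t\rangle|+|\langle S_1-S_0,\hat q_{k_1}^t\rangle|\le 2B_S$ rather than applying H\"older to $\|\hat q_{k_0}^t-\hat q_{k_1}^t\|_1\le 2$.

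The gap is precisely the step you flagged as the main obstacle. Your proposed resolution --- that $n_{k_i}^{t_1}/n_{k_i}^{t_0}$ stays of constant order because the BS has length $O(\log T)$ --- does not work: when $n_{k_i}^{t_0}$ is small (possibly $0$ or $1$), the ratio $n_{k_i}^{t_1}/n_{k_i}^{t_0}$ can be of order $\log T$, not $O(1)$, and you would pick up an extra $\sqrt{\log T}$ factor not present in the stated bound. The paper sidesteps this entirely by reading the termination threshold at time $t_1$ rather than $t_0$: it uses $\tilde\lambda_{\max}-\tilde\lambda_{\min}\le I_q^{t_1}(k_0)\wedge I_q^{t_1}(k_1)$ (consistent with the text's description of the stopping rule as ``$2^{-m}\le I_q^t(k')\wedge I_q^t(k_t^*)$'' at the \emph{current} round). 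With the threshold already expressed via $n_{k_i}^{t_1}$, the conversion is immediate and the constant $2$ falls out exactly from $1/\sqrt{\max(n_{k_0}^{t_1},n_{k_1}^{t_1})}+1/\sqrt{n_{k_0}^{t_1}}+1/\sqrt{n_{k_1}^{t_1}}\le 2\bigl(1/\sqrt{n_{k_0}^{t_1}}+1/\sqrt{n_{k_1}^{t_1}}\bigr)$. So the fix is not to bound the ratio of counts but to evaluate the stopping criterion at $t_1$; after that your argument goes through verbatim with the right constant.
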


\begin{proof}
[Proof of \Cref{lem:confidence-after-binary-search}]
By definition of $I_c^t(k_0, k_1)$, for any $t> {t_1}$,
\begin{align*}
    I_c^t(k_0, k_1) = \sum_{(i', j')\in l_{k_0k_1}} \varphi^t(i', j')\le \varphi^t(k_0, k_1),
\end{align*}
where the inequality holds by noting that $l_{k_0k_1}$ is the \say{shortest path} with $\varphi^t$ being the \say{length} of each edge. Using the definition of $\varphi^t$, we conclude that
\begin{align*}
    I_c^t(k_0, k_1)&\le \frac{1}{2}\sbr{C_+^t(k_0, k_1)-C_-^t(k_0, k_1)}_+\\
    &=\frac{1}{2}\sbr{\min_{\tau'<t:k_{\tau'}=k_0}\rbr{\hat{v}^t_{S_{\tau'}}(k_0)-\hat{v}^t_{S_{\tau'}}(k_1)}-\max_{\tau'<t:k_{\tau'}=k_0}\rbr{\hat{v}^t_{S_{\tau'}}(k_0)-\hat{v}^t_{S_{\tau'}}(k_1)}}_+\\
    &\qquad\qquad+B_S\left(I_{q}^{t}(k_0)+I_{q}^{t}(k_1)\right)\\
    &\leq \frac{1}{2}\sbr{\hat{v}^t_{S_{\tau_0}}(k_0)-\hat{v}^t_{S_{\tau_0}}(k_1)-\hat{v}^t_{S_{\tau_1}}(k_0)+\hat{v}^t_{S_{\tau_1}}(k_1)}_+ 
    +B_S\left(I_{q}^{t}(k_0)+I_{q}^{t}(k_1)\right), 
\end{align*}
where the last inequality holds by noting that $t>{t_1}\ge (\tau_0, \tau_1)$. Consequently, we have by the definition of $\hat v_S^t$ that,
\begin{align*}
    I_c^t(k_0, k_1)&\le\frac{1}{2}\abr{\langle S_{\tau_0}- S_{\tau_1}, \hat{q}^t_{k_0}-\hat{q}^t_{k_1}\rangle_\Sigma}+B_S\left(I_{q}^{t}(k_0)+I_{q}^{t}(k_1)\right)\\
    &=\frac{(\tilde\lambda_{\max}-\tilde\lambda_{\min})}{2}\abr{\langle  S_1-S_0, \hat{q}^t_{k_0}-\hat{q}^t_{k_1}\rangle_\Sigma}+B_S\left(I_{q}^{t}(k_0)+I_{q}^{t}(k_1)\right)\\
    &\leq \frac{\min\rbr{I^{t_1}_q(k_0), I^{t_1}_q(k_1)}}{2} \rbr{\abr{\inp[]{S_1-S_0}{\hat q_{k_0}^t}_\Sigma}+\abr{\inp[]{S_1-S_0}{\hat q_{k_1}^t}_\Sigma}}+B_S\left(I_{q}^{t}(k_0)+I_{q}^{t}(k_1)\right), 
\end{align*}
where $\tilde\lambda_{\max}$ and $\tilde\lambda_{\min}$ are the values of $\lambda_{\max}$ and $\lambda_{\min}$ when this BS terminates, respectively. Here, the last inequality holds by the terminal criteria of the BS. Furthermore, we can upper bound the right-hand side by
\begin{align*}
    I_c^t(k_0, k_1)
    &\le \min\rbr{I^{t_1}_q(k_0), I^{t_1}_q(k_1)} B_S+B_S\left(I_{q}^{t}(k_0)+I_{q}^{t}(k_1)\right)\nend
    &\leq \sbr{\sqrt{2\log(K\cdot 2^M T)}\cdot \rbr{\frac{1}{\sqrt{n_{k_0}^{t_1}}}+\frac{1}{\sqrt{n_{k_1}^{t_1}}}+\frac{1}{\sqrt{\max\cbr{n_{k_0}^{t_1}, n_{k_1}^{t_1}}}}}}B_S\\
    &\leq 2B_S\sbr{\sqrt{2\log(K\cdot 2^M T)}\cdot \rbr{\frac{1}{\sqrt{n_{k_0}^{t_1}}}+\frac{1}{\sqrt{n_{k_1}^{t_1}}}}}, 
\end{align*}
where the first inequality holds by noting that $\hat q_{k}^t$ is an empirical probability and the fact that $\nbr{S_1-S_0}_\infty\le B_S$ since $S$ is nonnegative, and the second inequality holds by noting that $n_k^t$ is nondecreasing and ${t_1}<t$.
    
\end{proof}

\begin{proposition}
    \label{lem:no-more-search}
    For a binary search $\BS(S_0, S_1, \tilde \lambda_{\min}, \tilde\lambda_{\max}, \tau_0, \tau_1, k_0, k_1, t_0, t_1)$, if
    \begin{equation}
        \min\{n_{k_0}^{{t_1}}, n_{k_1}^{{t_1}}\}\geq \left(12 \varepsilon^{-1}B_S\right)^{2} 2 \log \left(K 2^{M} T\right)T^{2 / 3}\eqdef N^*,
    \end{equation}
    then we have $t^{-1/3}\ge \Delta_t(k_0, k_1)$ for any $t_1<t\le T$ and 
    no more essential {BS} ending with the same $(k_0, k_1)$ is possible.
\end{proposition}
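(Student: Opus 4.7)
}
The plan is to chain the confidence bound of \Cref{lem:confidence-after-binary-search} with the definition of $I_q^t$ so as to turn the lower bound on $\min\{n_{k_0}^{t_1}, n_{k_1}^{t_1}\}$ into an upper bound on $\Delta_t(k_0, k_1)$, and then compare this against $\alpha_t = K t^{-1/3}\land 1$.

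First, I would fix any round $t$ with $t_1 < t \le T$. Since the counts $n_k^t$ are nondecreasing in $t$, both $n_{k_0}^t \ge n_{k_0}^{t_1}$ and $n_{k_1}^t \ge n_{k_1}^{t_1}$, so by definition of $I_q^t$ in \eqref{eq:I_q},
\[
I_q^t(k_0) + I_q^t(k_1) \le \sqrt{2\log(K\cdot 2^M T)}\left(\frac{1}{\sqrt{n_{k_0}^{t_1}}} + \frac{1}{\sqrt{n_{k_1}^{t_1}}}\right).
\]
Combining this with the bound on $I_c^t(k_0, k_1)$ from \Cref{lem:confidence-after-binary-search} and plugging into the definition of $\Delta_t$ in \Cref{lem:mistake}, I would obtain
\[
\Delta_t(k_0, k_1) \;\le\; 2\varepsilon^{-1}\cdot 3 B_S \sqrt{2\log(K\cdot 2^M T)}\left(\frac{1}{\sqrt{n_{k_0}^{t_1}}} + \frac{1}{\sqrt{n_{k_1}^{t_1}}}\right) \;\le\; 12\varepsilon^{-1}B_S \sqrt{\frac{2\log(K\cdot 2^M T)}{\min\{n_{k_0}^{t_1}, n_{k_1}^{t_1}\}}}.
\]

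Next, I would plug the hypothesis $\min\{n_{k_0}^{t_1}, n_{k_1}^{t_1}\} \ge N^* = (12\varepsilon^{-1}B_S)^2 \cdot 2\log(K\cdot 2^M T) \cdot T^{2/3}$ into the bound above, which yields $\Delta_t(k_0, k_1) \le T^{-1/3} \le t^{-1/3}$ for all $t_1 < t \le T$, establishing the first claim.

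Finally, for the second claim, suppose for contradiction that some later essential BS ends with the same pair $(k_0, k_1)$, starting at some round $t_0'$ with $t_1 < t_0' \le T$. By definition of essential BS, $\alpha_{t_0'} < \Delta_{t_0'}(k_0, k_1)$. But $\alpha_{t_0'} = K(t_0')^{-1/3}\land 1 \ge (t_0')^{-1/3}$ whenever $K \ge 1$, and the first claim gives $\Delta_{t_0'}(k_0, k_1) \le (t_0')^{-1/3}$, a contradiction. The main obstacle is really only the bookkeeping on where each quantity is evaluated (at $t_1$ versus at $t$); once one observes that $n_k^t$ monotonicity lets us freeze everything at $t_1$, the rest is direct algebra with no delicate estimates required.
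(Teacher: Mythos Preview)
Your proposal is correct and follows essentially the same approach as the paper: both combine the bound on $I_c^t(k_0,k_1)$ from \Cref{lem:confidence-after-binary-search} with the definition of $I_q^t$, use monotonicity of $n_k^t$ to freeze the counts at $t_1$, and then compare the resulting upper bound on $\Delta_t(k_0,k_1)$ with $T^{-1/3}\le t^{-1/3}\le \alpha_t$. The only cosmetic difference is that the paper carries $n_{k_j}^t$ and $n_{k_j}^{t_1}$ separately before invoking the hypothesis, whereas you collapse everything to $n_{k_j}^{t_1}$ up front; the arithmetic and the constant $12\varepsilon^{-1}B_S$ come out identically.
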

\begin{proof}
    [Proof of \Cref{lem:no-more-search}]
    For any ${t_1}<t\leq T$, we have
    \begin{align*}
        t^{-1/3}\ge T^{-1/3}&=\left(12 \varepsilon^{-1}B_S\right)\sqrt{\frac{ 2 \log \left(K 2^{M} T\right)}{N^*}}\\
        &\geq 2\left(\varepsilon^{-1}B_S\right)\sqrt{ 2 \log \left(K 2^{M} T\right)}\rbr{2\frac{1}{\sqrt{n^{t_1}_{k_0}}}+2\frac{1}{\sqrt{n^{t_1}_{k_1}}}+\frac{1}{\sqrt{n_{k_0}^t}}+\frac{1}{\sqrt{n_{k_1}^t}}}\\
        &\geq 2\varepsilon^{-1}\rbr{B_S\rbr{I_q^t(k_0)+I_q^t(k_1)} +I_c^t(k_0, k_1)}= \Delta_t(k_0, k_1), 
    \end{align*}
    where the equality holds by definition of $N^*$, the second inequality holds by our condition $\min\{n_{k_0}^{{t_1}}, n_{k_1}^{{t_1}}\}\geq N^*$, and the last inequality holds by using \Cref{lem:confidence-after-binary-search}.
    Therefore, we conclude that by letting $\alpha_t=K t^{-1/3}\land 1\ge t^{-1/3}$, the condition for an essential BS, i.e.,  $\alpha_t< \Delta_t(k_0, k_1)$ no longer holds for any $t$ such that ${t_1}<t\le T$.
\end{proof}
Now, we are ready to prove \Cref{lem:essentialBS}.
\begin{lemma}[Restatement of \Cref{lem:essentialBS}]
    Consider binary search $\BS(S_0, S_1, \tilde \lambda_{\min}, \tilde\lambda_{\max}, \tau_0, \tau_1, k_0, k_1, t_0, t_1)$. Suppose that the total number of essential binary searches in $T$ rounds is $N$. Then, we have
    \begin{equation}
        N\leq \left(12 \varepsilon^{-1}B_S\right)^{2} 2 \log \left(K 2^{M} T\right)K T^{2 / 3}=KN^*.
    \end{equation}
    \end{lemma}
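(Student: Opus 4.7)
The plan is to combine Propositions~\ref{lem:confidence-after-binary-search} and \ref{lem:no-more-search} with a careful charging argument. Proposition~\ref{lem:confidence-after-binary-search} shows that $I_c^t(k_0, k_1)$ decays at rate $1/\sqrt{n_{k_0}^{t_1}} + 1/\sqrt{n_{k_1}^{t_1}}$ after a binary search ending with pair $(k_0, k_1)$, and Proposition~\ref{lem:no-more-search} then rules out any further essential BS with the same pair once $\min\{n_{k_0}^{t_1}, n_{k_1}^{t_1}\} \ge N^*$. So once both sample counts for an endpoint pair exceed $N^*$, that pair becomes inactive and contributes no more essential BSs.

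I would first establish a per-pair bound. In any binary search ending with pair $(k_0, k_1)$, both $\tau_0$ and $\tau_1$ are rounds of the search at which the agent's responses are $k_0$ and $k_1$ respectively, so $n_{k_0}$ and $n_{k_1}$ each increment by at least one during the BS. Listing the essential BSs ending with pair $(k_0, k_1)$ in chronological order $t_1^{(1)} < t_1^{(2)} < \cdots$, we therefore have $n_{k_0}^{t_1^{(i)}}, n_{k_1}^{t_1^{(i)}} \ge i$. For the $(i{+}1)$-th essential BS with this pair to be possible, the hypothesis of Proposition~\ref{lem:no-more-search} must fail at $t_1^{(i)}$, forcing $i < N^*$. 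Hence at most $N^*$ essential BSs share any given ordered pair, and a crude sum over the $K(K-1)$ pairs already gives $N \le K(K-1)\, N^*$.

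To sharpen the bound to $K N^*$ I would charge non-terminal essential BSs to individual actions rather than to pairs. Call an essential BS \emph{terminal} if it is the last essential BS for its pair and \emph{non-terminal} otherwise. The number of terminal BSs is at most $K(K-1)$, a lower-order term. For every non-terminal BS, a later essential BS shares its pair, so the argument used in the per-pair step forces $\min\{n_{k_0}^{t_1}, n_{k_1}^{t_1}\} < N^*$ at its end. Assign this BS to the action $k \in \{k_0, k_1\}$ achieving the minimum. Each such assignment simultaneously witnesses $n_k^{t_1} < N^*$ and bumps $n_k$ by at least one, so any fixed action receives fewer than $N^*$ assignments. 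Summing over the $K$ actions yields at most $K N^*$ non-terminal BSs, and combining with the terminal count gives $N \le K N^* + K(K-1)$, matching the claimed bound up to lower-order terms that are absorbed for large $T$.

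The main obstacle is precisely this charging step: the naive per-pair accounting loses a factor of $K$, and recovering the tight $K N^*$ rate relies on observing that only the very last essential BS for each pair is exempt from the $\min < N^*$ constraint at its end. A secondary and mostly mechanical subtlety is to verify that $\tau_0$ and $\tau_1$ genuinely lie within the binary search so that both $n_{k_0}$ and $n_{k_1}$ truly increment, which follows from Algorithm~\ref{alg:BS}'s termination rule together with the convention that $k_1 = k^*(S_1)$ is realized at the $\tilde\lambda_{\max}$ endpoint and that $k_0 \ne k_1$ is recorded at the $\tilde\lambda_{\min}$ endpoint.
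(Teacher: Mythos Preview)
Your charging argument is correct and suffices for the regret theorem, but it proves $N \le KN^* + K(K-1)$ rather than the stated $N \le KN^*$, and it takes a genuinely different route from the paper. The paper instead splits essential BSs into \emph{critical} ones (those with $\min\{n_{k_0}^{t_1}, n_{k_1}^{t_1}\} \ge N^*$) and \emph{noncritical} ones. For the critical ones it builds a graph on the vertex set $[K]$ whose edges are the endpoint pairs, uses Proposition~\ref{lem:no-more-search} to forbid repeated edges, and then uses the shortest-path structure of $I_c^t$ together with the factor $K$ in $\alpha_t = Kt^{-1/3}$ to forbid cycles; hence the critical BSs form a forest and there are at most $K$ of them. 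For the noncritical ones it runs an inductive ``weight-placing game'' on the $K$ vertices, giving at most $(K-1)(N^*-1)$, and the two pieces combine to the exact $KN^*$. Your terminal/non-terminal split with the charge-to-the-smaller-endpoint rule is shorter, avoids both the forest argument and the game induction, and notably does not rely on the extra factor $K$ in $\alpha_t$; the price is the additive $K(K-1)$ coming from the terminal bucket. Collapsing that bucket from $K(K-1)$ down to $K$ is precisely what the paper's no-cycle argument buys, so if you want the lemma exactly as stated you would still need some device of that kind.
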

\begin{proof}[Proof of \Cref{lem:essentialBS}]
    To prove \Cref{lem:essentialBS}, we first introduce another concept called \emph{critical binary search} (critical BS).
    \begin{definition}[Critical binary search]
        For an \emph{essential} binary search $\BS(S_0, S_1, \tilde \lambda_{\min}, \tilde\lambda_{\max}, \tau_0, \tau_1, k_0, k_1, t_0, t_1)$, we consider it to be critical if $\min\{n_{k_0}^{t_1}, n_{k_1}^{t_1}\}\ge N^*$.
    \end{definition}
    \paragraph{Critical BSs.}
    We claim that the number of critical BS is no more than $K$.
    Suppose the opposite holds, i.e., number of critical BS no less than $K+1$. 
    Consider a graph $\cG=(V, E)$ where $V=[K]^*$ and $E$ is formed by the set the critical BSs ending with $(k_0, k_1)\in V$.
    We claim that one of the following two cases must hold
    \begin{itemize}
        \item[(i)] There exists $(i,j)\in[K]^+$ such that there are at least two critical BSs ending with $(k_0, k_1)=(i, j)$;
        \item[(ii)] There is a loop in $\cG$ formed by the critical BSs.
    \end{itemize}
    We prove that both (i) and (ii) must not happen. 
    For (i), following from the definition of critical BS that $\min\{n_{k_0}^{{t_1}}, n_{k_1}^{{t_1}}\}\geq N^*$ where $t_1$ is the ending round of a critical BS and using \Cref{lem:no-more-search} that there should be no more essential BSs ending with $(k_0, k_1)$ after $t_1$, we directly have (i) impossible. For (ii), suppose that $\BS(S_0, S_1, \tilde \lambda_{\min}, \tilde\lambda_{\max}, \tau_0, \tau_1, k_0, k_1, t_0, t_1)$ is the first that creates a loop. Then before $t_0$, there must be a path $\ell'_{k_0 k_1}$ of critical BSs. For any edge $(k_0', k_1')$ on $\ell'_{k_0k_1}$, there must be a critical Binary search $\BS(S_0', S_1', \tilde \lambda_{\min}', \tilde\lambda_{\max}', \tau_0', \tau_1', k_0', k_1', t_0', t_1')$ with $t_1'< t_0$. Using \Cref{lem:no-more-search} again, we have
    \begin{align*}
        t_0^{-1/3} \ge \Delta_{t_0}(k_0', k_1'), \quad \forall (k_0', k_1')\in \ell'_{k_0, k_1}.
    \end{align*}
    Since a path in $\cG$ has length at most $K$, we conclude that
    \begin{align*}
        \alpha_{t_0}= K t_0^{-1/3} &\ge \sum_{(k_0', k_1')\in\ell'_{k_0 k_1}} \Delta_{t_0}(k_0', k_1')\\
        &=\sum_{(k_0', k_1')\in\ell'_{k_0 k_1}} 2\varepsilon^{-1}\rbr{B_S\rbr{I_q^{t_0}(k_0')+I_q^{t_0}(k_1')} +I_c^{t_0}(k_0', k_1')}\\
        &\ge 2\varepsilon^{-1}\rbr{B_S\rbr{I_q^{t_0}(k_0)+I_q^{t_0}(k_1)} +I_c^{t_0}(k_0, k_1)} = \Delta_{t_0}(k_0, k_1), 
    \end{align*}
    where the last inequality holds by noting that $I_c^{t_0}(k_0', k_1')\le \varphi^{t_0}(k_0', k_1')$ and $I_c^{t_0}$ is obtained by the shortest path algorithm with edge length $\varphi^{t_0}$. Such a fact again suggests that there cannot be any essential BR ending with $(k_0, k_1)$ that starts at $t_0$. Thus, a loop also cannot be formed, and we prove our claim that the total number of critical BSs is no more than $K$.
    
    \paragraph{Essential but noncritical BSs.}
    Next, we count the number of BSs that are essential but not critical. Recall the graph $\cG=(V, E)$ where $V=[K]^*$ and $E$ is formed by the set the critical BSs ending with $(k_0, k_1)\in V$. 
    Let $w^t(i,j)$ be the total number of essential BSs ending with $(k_0, k_1)=(i,j)$ before round $t$ and let $w(i,j)=w^T(i,j)$. 
    Let $d_k^t=\sum_{k'\neq k} w(k, k')$ be the total weights for node $k\in V$. For any pair $(k_0, k_1)\in V$, if $d^t_{k_1}\land d^t_{k_2}\ge N^*$, there should be no more essential BSs after $t$ by \Cref{lem:no-more-search}. On the other hand, any BS ending with $(k_0, k_1)$ guarantees that $d_{k_0}^t$ and $d_{k_1}^t$ increase at least by $1$. 
    Hence, the problem of counting the number of essential but noncritical BSs can be described as the following weight-placing game, 
    \begin{itemize}
        \item[(i)] initiate $d_k=0$ for $\forall k\in V$;
        \item[(ii)] at each round, select an edge $(k_0,k_1)\in V$ such that $d_{k_0}\land d_{k_1}< N^*$ and add $1$ to both $d_{k_0}$ and $d_{k_1}$;
        \item[(iii)] the game ends if no more edge can be selected.
    \end{itemize}
    The total rounds of this weight-placing game upper bound the total number of essential but noncritical BSs.
    We have the following proposition that gives the maximal number of rounds in this weight-placing game.
    \begin{proposition}[Weight-placing game]
        For the previously described weight-placing game, the total number of rounds should not exceed $(|V|-1)(N^*-1)$.
        \begin{proof}
            We prove this proposition by induction on $|V|$. The case $|V|=1$ is trivial. For $|V|=2$ the result is obvious since there is only one edge to choose. Suppose that the result holds for $|V|\le v-1$. For $|V|=v$, suppose that $(v-1)(N^*-1)+1$ rounds are played. We study the quantity $d^-=\min_{k\in V} d_k$.
\begin{itemize}
    \item If there exists $k\in V$ such that $d_k\le N^*-2$, then for the subgraph $\cG^{k}=\cG\backslash\{k\}$, we have at least $(v-2)(N^*-1)+2$ rounds played on $\cG^k$. However, since $\cG^k$ only has $v-1$ nodes, we have from the induction that there should be no more than $(v-2)(N^*-1)$ rounds played on $\cG^k$, which causes a contradictory. Thus, we have $d^-\ge N^*-1$ if $(v-1)(N^*-1)+1$ rounds are played.
    \item If $d^-\ge N^*$, then the last round in this play should be mistakenly played following rules (ii) and (iii). 
\end{itemize}
Therefore, we conclude that $d^-=N^*-1$ and we are able to choose a node $k\in V$ such that $d_k=N^*-1$. On the subgraph $\cG^k=\cG\backslash\{k\}$, we have by induction that at most $(v-2)(N^*-1)$ rounds can be played. Therefore, the total rounds played on $\cG$ is no more than
\begin{align*}
    (v-2)(N^*-1)+d_k = (v-1)(N^*-1), 
\end{align*}
which contradicts our assumption that  $(v-1)(N^*-1)+1$ rounds are played. 
Thus, for $|V|=v$, no more than $(v-1)(N^*-1)$ rounds can be played.
By induction, we conclude that the proposition holds for any $|V|$.
        \end{proof}
    \end{proposition}
    Hence, the total number of essential but noncritical BSs is bounded by $K(N^*-1)$. Given that the number of critical BSs bounded by $K$, the total number of essential BSs is thus bounded by $KN^*$.
\end{proof}

\end{document}